\documentclass{article}

\usepackage[preprint]{neurips_2026}

\usepackage[utf8]{inputenc}
\usepackage[T1]{fontenc}
\usepackage{hyperref}
\usepackage{url}
\usepackage{booktabs}
\usepackage{amsfonts}
\usepackage{nicefrac}
\usepackage{microtype}
\usepackage{xcolor}
\usepackage{graphicx}
\graphicspath{{figures/}}
\usepackage{subcaption}
\usepackage{placeins}
\usepackage{amsmath,amssymb,amsthm}
\usepackage[ruled,vlined]{algorithm2e}

\newcommand{\calS}{\mathcal{S}}

\newcommand{\calD}{\mathcal{D}}
\newcommand{\calR}{\mathcal{R}}

\newcommand{\calX}{\mathcal{X}}

\newcommand{\calL}{\mathcal{L}}

\newcommand{\calO}{\mathcal{O}}
\newcommand{\calF}{\mathcal{F}}

\newcommand{\bbe}[2][]{\mathbb{E}_{#1}\left[ #2 \right]}

\newcommand{\R}{\mathbb{R}}
\newcommand{\E}{\mathbb{E}}

\newcommand{\RNE}{\mathcal{R}}
\newcommand{\Vcost}{V^{\mathrm{cost}}}

\DeclareMathOperator*{\argmin}{arg\,min}

\newtheorem{theorem}{Theorem}[section]
\newtheorem{lemma}[theorem]{Lemma}
\newtheorem{proposition}[theorem]{Proposition}
\newtheorem{corollary}[theorem]{Corollary}

\newtheorem{assumption}{Assumption}
\newtheorem{remark}{Remark}[section]

\title{Decision-Focused On-Policy Learning for Contextual Linear Optimization with Partial Feedback}

\author{%
  Wyame Benslimane$^{*,1}$,\enspace
  Tinghan Ye$^{*,2}$,\enspace
  Pascal Van Hentenryck$^{2}$,\enspace
  Paul Grigas$^{1}$ \\[0.6em]
  $^{1}$Department of Industrial Engineering and Operations Research \\
  University of California, Berkeley \\
  \texttt{\{wyame.benslimane,\,pgrigas\}@berkeley.edu} \\[0.3em]
  $^{2}$H.\ Milton Stewart School of Industrial and Systems Engineering \\
  Georgia Institute of Technology \\
  \texttt{\{joe.ye,\,pvh\}@gatech.edu} \\[0.4em]
  $^{*}$Equal contribution.
}

\begin{document}

\maketitle

\begin{abstract}
Decision-focused learning (DFL) trains predictive models by optimizing downstream decision quality rather than standalone prediction accuracy. 
For contextual linear optimization, most existing DFL methods assume offline data and full observations of the objective cost vector. 
We develop an on-policy learning method for sequential contextual linear optimization under partial feedback, generalizing the standard bandit feedback setting. 
Our method learns a stochastic predict-then-optimize policy that samples a cost-vector prediction from a conditional distribution and solves the resulting downstream linear optimization problem. 
To update this distributional model, we introduce a two-component hybrid gradient estimator. 
The first component is a score function estimator, which provides an unbiased but potentially high-variance policy gradient estimate. 
The second is a decision-focused plug-in component that uses an auxiliary nuisance estimate of the latent cost vector to exploit the downstream optimization structure, becoming more informative as the estimate improves. 
We prove an $\mathcal{O}(T^{-1/2})$ bound on the average squared policy-gradient norm, matching the standard non-convex SGD rate. 
Experiments on top-$k$ selection, shortest path, combinatorial pricing, and a real-data energy-scheduling benchmark show that the hybrid gradient approach achieves lower cumulative regret than contextual-bandit-style baselines across all benchmarks, using both Gaussian and richer conditional generative models.
Code is available at \url{https://github.com/Joeyetinghan/on-policy-bandit-dfl}.
\end{abstract}

\section{Introduction}

Many operational systems make sequential decisions by solving constrained optimization problems given observed contextual information, with applications spanning vehicle routing, dynamic pricing, power scheduling, product allocation, and recommendation \citep{van2006online,powell2022designing,elmachtoub2022smart,li2010contextual,zhao2026diffusion,sadana2025survey}. In these settings, the feasible set and operational constraints are known, while the relevant cost or reward parameters are uncertain at decision time and instead must be learned. When such systems are deployed repeatedly, the learner is not merely trained on a fixed historical dataset; its own decisions become part of the data-generating process, determining which outcomes are observed and hence what information is available for future learning. Furthermore, the collected data rarely include full information about problem parameters: a selected route reveals only the realized travel time, a posted price reveals only demand at that price, a recommended subset reveals only response on the chosen items. The learner thus repeatedly sees a context, chooses a feasible decision from the current policy, and observes only the feedback generated by that decision.
This paper studies sequential contextual linear optimization (CLO), a widely applicable class of constrained optimization problems, in this on-policy decision-making environment with partial feedback.

The combination of constrained optimization and online decision-making with partial feedback is not yet captured systematically by any one existing framework.
DFL trains predictions through the decisions they induce \citep{Mandi_2024, elmachtoub2022smart}, but most existing methods assume batch data and full observations of the latent cost vector in CLO. Contextual bandits and online learning study sequential learning from partial feedback, including large and structured action spaces \citep{li2010contextual, abbasi2011improved, krishnamurthy2016contextual, zhu2022contextual}, but typically model rewards directly over context-action pairs rather than preserving and strongly exploiting the downstream linear optimization model. Given the surge of development in powerful DFL methods for offline problems with full data~\citep{Mandi_2024}, a natural question emerges:  can DFL methods be adapted to on-policy, sequential decision-making with partial feedback?

We answer affirmatively by introducing a new class of on-policy stochastic gradient estimators that enables strong plug-in/predict-then-optimize policies. We learn a structured conditional distribution over cost vectors and induce decisions through linear optimization, rather than a direct context-to-decision map.
We update the policy parameters by mixing two gradient signals with complementary trade-offs.
A score function estimator uses the on-policy realized cost directly, making it unbiased for the induced policy objective and naturally tied to exploration through sampling. However, the score function approach treats the downstream optimizer as a black-box action generator and therefore does not strongly use the structure of the feasible set or linear objective in its gradient. We therefore propose a decision-focused plug-in gradient estimator that leverages linear optimization structure by pairing the known feasible set with a nuisance estimate, adapted from the offline/off-policy case~\citep{hu2024contextual, hu2025contextual}, of the conditional mean cost vector. Once the nuisance estimate is plugged-in to a cost approximation, the resulting problem is equivalent to a full information contextual stochastic linear optimization problem and can be tackled with all of the machinery, such as surrogate loss functions, developed in recent years~\citep{Mandi_2024, sadana2025survey}. We bound the bias of this nuisance estimator approach and show its advantages grow as data accumulates.

\paragraph{Contributions.}
\textit{First}, we develop an on-policy decision-focused learning method for online contextual linear optimization under a generic partial-feedback model that subsumes bandit, semi-bandit, and full-information settings. The policy update is a hybrid gradient combining an unbiased score function term with a decision-focused plug-in term from a learned auxiliary nuisance cost-vector model. We extend the decision-focused plug-in term to accommodate a broad class of DFL surrogates (e.g., SPO+, Perturbation Gradient (PG), and Perturbed Fenchel--Young Loss (PFYL)). \textit{Second}, we exploit a universal-minimizer property of the nuisance least-squares objective to analyze the nuisance update as standalone online convex optimization against a fixed comparator, independently of the policy iterates. We then convert the resulting regret into a bias bound on the plug-in gradient via a one-step policy descent inequality, yielding $\calO(T^{-1/2})$ stationarity. The same argument extends to the surrogate variant. \textit{Third}, numerical experiments on top-$k$ selection, shortest path, combinatorial pricing, and a real-data energy-scheduling benchmark show lower cumulative regret than contextual-bandit baselines.

\subsection{Literature review}

\paragraph{Decision-focused and predict-then-optimize learning.}
Decision-focused learning trains predictive models through the downstream optimization problems that consume their outputs \citep{Mandi_2024}. For linear objectives, Smart Predict-then-Optimize (SPO) and SPO$+$ give a task-aware alternative to prediction error \citep{elmachtoub2022smart}. Later methods develop differentiable perturbation-based optimizers and directional-gradient estimators \citep{berthet2020learning, gupta2024decision}. Score function estimators extend DFL to cases where direct differentiation through the task loss or optimization layer is difficult \citep{silvestri2026score}, and distributional DFL methods model uncertainty in optimization parameters using generative predictors such as conditional normalizing flows and diffusion models \citep{wang2025gendfl,zhao2026diffusion}. Our proposed method inherits the use of distributional predictions and score function gradients, but the observation model differs: the learning signal is online, with policy-generated partial feedback rather than observed cost vectors. Sequential DFL methods include online predict-then-optimize \citep{liu2022online} and multi-stage Predict+Optimize with sequential parameter revelation and interleaved decisions \citep{hu2024multi}; the closest theoretical analog is online DFL for linear optimization over polytopes \citep{capitaine2025online}. All three share our sequential viewpoint but assume full-information feedback rather than the partial feedback induced by the implemented decision.

\paragraph{Contextual optimization with partial feedback.}
The closest contextual optimization precursor is \citet{hu2024contextual}, who study offline contextual linear optimization under partial feedback (pure bandit and semi-bandit). Their work develops, from logged partial-feedback data, induced-ERM and surrogate approaches using least-squares nuisance estimation. Other work studies partial information from different angles. \citet{ye2025contextual} use distributional predictions for mixed-integer and nonlinear contextual stochastic optimization. \citet{bennouna2025data,bennouna2026informativeness} study data informativeness for linear optimization, characterizing when partial observations contain enough information to identify an optimal decision. This paper studies the online counterpart. At each iteration, the current policy and optimization oracle choose a feasible decision. Feedback is observed only for that decision, from aggregate loss in the pure bandit case to selected component observations in semi-bandit variants. The same stream of partial feedback is used both to estimate the cost model and to update the policy.

\paragraph{Contextual bandits and policy optimization.}
Contextual bandits provide the standard framework for sequential learning from partial feedback \citep{li2010contextual, abbasi2011improved, bubeck2012regretanalysisstochasticnonstochastic}. Work on large or structured action spaces includes contextual semi-bandits and oracle-based methods \citep{krishnamurthy2016contextual, zhu2022contextual, zierahn2023nonstochastic}. Structured prediction with bandit feedback is also related: a learner outputs a structured object and observes only its task loss \citep{sokolov2016stochastic, kreutzer2017bandit}. Our score function estimator follows the likelihood-ratio principle of REINFORCE and policy-gradient methods \citep{williams1992simple, sutton2000policy, konda2000actor}. Unlike standard contextual-bandit models that parameterize rewards over context-action pairs, we keep the linear-cost representation explicit, sampling a cost-vector prediction and solving a known optimization problem, while the auxiliary estimator learns conditional cost structure from partial observations.

\paragraph{Online convex optimization.}
Online convex optimization (OCO) and its bandit variants study sequential convex or linear losses with full-information or bandit feedback \citep{zinkevich2003online, flaxman2005online, dani2008stochastic, abernethy2008competing, ito2019oracle}, bounding regret against the best fixed decision in hindsight. Our setting is one step removed: we update parameters of a stochastic policy that produces a decision via the optimization oracle, and because the induced policy objective is nonsmooth and non-convex, our analysis targets first-order stationarity rather than convex regret.

\section{Decision-focused on-policy learning with partial feedback}
\label{sec:bandit_clo}
We consider a sequential contextual stochastic linear optimization setting. At each time $t = 1, 2, \ldots$, the decision-maker observes a context $x_t \in \calX \subseteq \R^{d_x}$ (context dimension $d_x$) and then makes a decision $w_t \in \calS$, where $\calS \subset \R^{d_w}$ is a compact feasible set. The context $x_t$ is drawn from an unknown marginal distribution $\calD_x$. Furthermore, the latent cost vector $c_t \sim p^{\star}(\cdot \mid x_t)$ is never observed directly: once a decision is made, the learner sees only a partial-feedback vector $v_t \in \R^{d_v}$ (with feedback dimension $d_v$) whose form depends on the implemented decision; for instance, a selected route reveals the realized travel time of the chosen path, not the time on every edge. We denote the unknown joint distribution of $(x_t,c_t) \in \calX\times \R^{d_w}$ by $\calD$ (assumed to be i.i.d.\ over time).

\begin{assumption}[Feedback structure]\label{ass:feedback}
At each time $t = 1, 2, \ldots$, given that $(x_t,c_t) \sim \calD$, we have the following:
\begin{enumerate}
    \item For every possible decision $w_t \in \calS$, there exists a known observation matrix $H(w_t) \in \R^{d_v \times d_w}$ such that the observed feedback $v_t \in \R^{d_v}$ satisfies $v_t = H(w_t)c_t$.
    \item For every possible decision $w_t \in \calS$, there exists a known linkage vector $e(w_t) \in \R^{d_v}$ such that $H(w_t)^{\top}e(w_t) = w_t$; consequently, $y_t = c_t^{\top} w_t = e(w_t)^{\top} v_t$ is always recoverable.
\end{enumerate}
\end{assumption}
In words, $H(w_t)$ selects which linear measurements of the latent cost vector are revealed by decision $w_t$, and $e(w_t)$ is a known weight vector whose inner product with the observation always returns the realized scalar cost: $e(w_t)^\top v_t = c_t^\top w_t = y_t$, regardless of the feedback regime. For example, pure bandit feedback ($H(w) = w^\top$, $e(w) = 1$) is the trivial case where $v_t$ is already that scalar cost; Appendix~\ref{apdx:feedback_examples} instantiates the full-information and semi-bandit cases.

In contextual linear optimization, a key assumption is that we can readily solve linear optimization problems over the compact decision set $\calS$; let $w^\star(\hat c) \in \argmin_{w \in \calS} \hat c^\top w$ denote a particular linear optimization oracle for $\calS$. We consider a class of distributional plug-in/predict-then-optimize policies, whereby $p_\theta(\cdot \mid x)$ denotes a parametric 
conditional model for cost vector predictions, parameterized by $\theta \in \R^{d_\theta}$. Any $\theta \in \R^{d_\theta}$ induces a policy $\pi_\theta : \calX \to \Delta(\calS)$, where $\Delta(\calS)$ is the set of all distributions supported on $\calS$, via the pushforward of $p_\theta(\cdot \mid x)$ by $w^\star$, i.e., given observed context $x_t \in \calX$ at time $t$, we first sample a predicted cost vector $\hat c_t \sim p_\theta(\cdot \mid x_t)$ and then make the decision $w_t = w^\star(\hat c_t)$. Given $\theta \in \R^{d_\theta}$, let us also define the mean prediction function $\bar \mu_\theta(x) := \bbe[\hat c \sim p_\theta(\cdot\mid x)]{\hat c}$ and the mean decision function $w^\star_\theta(x) := \bbe[\hat c \sim p_\theta(\cdot\mid x)]{w^\star(\hat c)}$. Furthermore, for a given parameter $\theta$, let $\calD_\theta$ denote the distribution over $(x,c,w)$ induced by $(x, c) \sim \calD$ and $w \sim \pi_\theta(\cdot\mid x)$. An important observation is that the conditional independence property $w \perp c \mid x$ holds for $(x,c,w) \sim \calD_\theta$.

Let $f^\star(x) := \bbe[{c\sim p^{\star}(\cdot\mid x)}]{c}$ denote the ground truth regression function.
We define the expected cost of policy parameterized by $\theta \in \R^{d_\theta}$ by
\begin{equation*}
  \Vcost(\theta) := \bbe[{(x, c,w)\sim\calD_\theta}]{c^{\top} w} = \bbe[x \sim \calD_x]{ \bbe[c \sim p^{\star}(\cdot \mid x)]{ c^\top w^\star_\theta(x)} }
  = \bbe[{x \sim \calD_x}]{f^\star(x)^{\top}w^\star_\theta(x) } ,
\end{equation*}
where the latter equalities use $w \perp c \mid x$. It is well known and apparent from the above that, under full information, the decision maker would ideally make decision $w^\star(f^\star(x))$. Therefore, any policy such that $w^\star_\theta(x) = w^\star(f^\star(x))$ almost surely over $x$ is an optimal full information policy, for example a degenerate distribution with $\bar \mu_\theta(x) = f^\star(x)$. Our primary reasons for focusing on a {\em distributional} plug-in class, instead of a point prediction class, are to ensure smoothness of $\Vcost(\cdot)$ and to enable the score function and decision-focused gradient estimators that we develop below.

\subsection{Score function gradient estimation}
Our distributional plug-in policies correspond to the perturbed optimizer $w^\star_\theta(\cdot)$ framework of \citet{berthet2020learning}; key regularity properties are discussed in Appendix~\ref{apdx:perturbed_optimizer}. Note that $\theta$ enters $\Vcost(\theta)$ only through the sampling distribution $p_\theta(\cdot \mid x)$, not through the cost or the optimizer $w^\star(\cdot)$. This key structural property enables score function based gradient estimation directly from bandit feedback, without differentiating through $w^\star$, which aligns with our motivation for using a distributional/perturbed policy class.

Full details of the score function gradient estimator derivation are given in Appendix \ref{apdx:conv_analysis}; in short, given current parameters $\theta_t$ at time $t$, under well-behaved parametric distributions, differentiating under the expectation yields the one-sample score function gradient estimator
\begin{equation}\label{eq:score-grad}
  g_t^{\mathrm{score}}
  := y_t\,\nabla_\theta \log p_{\theta_t}(\hat c_t \mid x_t), \quad ~\text{where}~
  \quad \hat c_t \sim p_{\theta_t}(\cdot \mid x_t), w_t = w^\star(\hat c_t), y_t = c_t^\top w_t,
\end{equation}
which satisfies $\E[g_t^{\mathrm{score}}] = \nabla_\theta \Vcost(\theta_t)$. Note that $g_t^{\mathrm{score}}$ depends on the environment only through the scalar bandit feedback $y_t$, with no explicit reference to $f^\star$. 
More generally, any terms preserving unbiasedness of the gradient estimator of $\nabla_\theta \Vcost(\theta)$ may replace $y_t$ in~\eqref{eq:score-grad}; for instance, subtracting a context-dependent baseline $b(x_t)$ from $y_t$ preserves unbiasedness while typically reducing variance, an idea long exploited in the policy-gradient literature.

\subsection{Decision-focused gradient estimation}
The score function gradient estimator, while unbiased, is structure-agnostic: it treats $y_t$ as a generic scalar cost and only weakly exploits the linear optimization structure. Decision-focused methods in the full-information setting instead exploit the direct dependence of $w^\star_\theta(\cdot)$ on $\theta$. Furthermore, particularly in the case of contextual linear optimization, the decomposition of $\Vcost$ above reveals that $\Vcost(\theta)$ depends on the cost distribution \emph{only} through its conditional mean $f^\star(x) = \E[c \mid x]$. At the same time, the observed feedback $v_t$ carries information about $f^\star$ that the score estimator never directly exploits. These observations suggest a complementary route: estimate $f^\star$ from accumulated feedback, plug the resulting nuisance estimate into $\Vcost(\cdot)$, and apply standard full-information decision-focused machinery such as direct gradients and surrogate losses.
This decision-focused, plug-in perspective trades the unbiasedness of~\eqref{eq:score-grad} for exploitation of the downstream optimization problem and a potentially substantial reduction in variance. 

\paragraph{Nuisance estimation via least-squares feedback fitting.}
We introduce an auxiliary cost-vector regression model $f_\phi : \calX \to \R^{d_w}$ (the \emph{nuisance} estimator, following \citet{hu2024contextual}), parameterized by $\phi \in \Phi \subset \R^{d_\phi}$, which approximates the conditional cost mean $f^\star(x) = \E[c\mid x]$ by minimizing the following least-squares objective based on the observed feedback $v = H(w)c$.
\begin{equation}\label{eq:nuisance-loss}
\min_{\phi \in \Phi} \RNE_\theta(f_\phi)
  = \bbe[{(x,c,w)\sim\calD_\theta}]{\|v - H(w)f_\phi (x)\|^2}
\end{equation}
\begin{remark}[Universal minimizer]\label{rem:universal_min}
    A key property of this objective is that $f^\star$ is a \emph{universal} minimizer of $\RNE_\theta$ for every $\theta \in \R^{d_\theta}$. Consequently, the nuisance estimation target does not shift with the policy, which is central to our analysis.
    We assume realizability in parameter space: there exists $\phi^\star \in \Phi$ such that $f_{\phi^\star} = f^\star$.
\end{remark}
The regret analysis of the nuisance model is obtained using standard online convex optimization results and universality of the minimizer. We defer the full analysis of the nuisance estimator to Appendix~\ref{apdx:nuisance_analysis}.
\begin{proposition}[Nuisance regret]\label{prop:nuisance-regret}
Suppose $\Phi$ is convex and compact with diameter $D_\Phi$, the loss satisfies $G_\RNE$-bounded gradients and $\nabla^2_\phi \RNE_\theta(f_\phi) \succeq \lambda I$ for every $\theta$. Under  realizability ($f^\star \in \calF$), for any sequence $\{\theta_t\}_{t=1}^T$, the average excess nuisance loss satisfies
\begin{enumerate}
  \item[\textup{(i)}] $\frac{1}{T}\sum_{t=1}^T \E\!\left[\RNE_{\theta_t}(f_{\phi_t}) - \RNE_{\theta_t}(f^\star)\right] \le \frac{D_\Phi G_\RNE}{\sqrt{T}}$ \quad with $\eta = D_\Phi / (G_\RNE\sqrt{T})$;
  \item[\textup{(ii)}] $\frac{1}{T}\sum_{t=1}^T \E\!\left[\RNE_{\theta_t}(f_{\phi_t}) - \RNE_{\theta_t}(f^\star)\right] \le \frac{G_\RNE^2(1+\log T)}{2\lambda T}$ \quad with $\eta_t = 1/(\lambda t)$.
\end{enumerate}
\end{proposition}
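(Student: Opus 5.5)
The plan is to treat the nuisance update as projected online (stochastic) gradient descent on the sequence of convex losses $\ell_t(\phi) := \RNE_{\theta_t}(f_\phi)$, compared against the single fixed point $\phi^\star$. Remark~\ref{rem:universal_min} gives $f_{\phi^\star}=f^\star$, and $f^\star$ minimizes every $\RNE_{\theta}$. So $\ell_t(\phi^\star)=\min_\phi \ell_t(\phi)$ for all $t$, and the quantity to be bounded is exactly the static regret $\sum_t \E[\ell_t(\phi_t)-\ell_t(\phi^\star)]$ against this comparator.

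The update I have in mind is $\phi_{t+1}=\Pi_\Phi(\phi_t-\eta_t \hat g_t)$. Here $\hat g_t=\nabla_\phi\|v_t-H(w_t)f_{\phi}(x_t)\|^2|_{\phi=\phi_t}$ is the one-sample gradient, computed from $(x_t,c_t,w_t)\sim\calD_{\theta_t}$. The key measurability point is that $\phi_t$ and $\theta_t$ are determined by data up to time $t-1$, while the fresh sample at time $t$ is drawn from $\calD_{\theta_t}$. Hence $\E[\hat g_t\mid\mathcal F_{t-1}]=\nabla\ell_t(\phi_t)$. I take $G_\RNE$ to bound $\|\hat g_t\|$, or at least its second moment.

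The standard one-step inequality follows from nonexpansiveness of the projection and $\phi^\star\in\Phi$:
\[
\|\phi_{t+1}-\phi^\star\|^2\le\|\phi_t-\phi^\star\|^2-2\eta_t\hat g_t^\top(\phi_t-\phi^\star)+\eta_t^2G_\RNE^2 .
\]
Taking conditional expectations turns $\hat g_t$ into $\nabla\ell_t(\phi_t)$.

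For (i), I use convexity, $\ell_t(\phi_t)-\ell_t(\phi^\star)\le\nabla\ell_t(\phi_t)^\top(\phi_t-\phi^\star)$, which is implied by the Hessian condition with $\lambda\ge0$. Summing with constant $\eta$ telescopes to $\frac{D_\Phi^2}{2\eta}+\frac{\eta G_\RNE^2T}{2}$. The choice $\eta=D_\Phi/(G_\RNE\sqrt T)$ gives $D_\Phi G_\RNE\sqrt T$, and dividing by $T$ yields the claim.

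For (ii), I use strong convexity instead:
\[
\ell_t(\phi_t)-\ell_t(\phi^\star)\le\nabla\ell_t(\phi_t)^\top(\phi_t-\phi^\star)-\tfrac\lambda2\|\phi_t-\phi^\star\|^2 .
\]
With $\eta_t=1/(\lambda t)$, the coefficients of $\E\|\phi_t-\phi^\star\|^2$ are $\frac{1}{2\eta_t}-\frac{1}{2\eta_{t-1}}-\frac\lambda2=0$, so they telescope away, as in Hazan--Agarwal--Kale. What remains is $\sum_t\frac{\eta_tG_\RNE^2}{2}=\frac{G_\RNE^2}{2\lambda}\sum_t\frac1t\le\frac{G_\RNE^2(1+\log T)}{2\lambda}$. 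Dividing by $T$ finishes.

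The main obstacle is conceptual rather than computational: the losses $\ell_t$ change with $\theta_t$, and $\theta_t$ is itself driven by the same data stream. Ordinarily this would make the comparator drift and break the static-regret argument. Universality is what rescues it. Because $\phi^\star$ is a simultaneous minimizer, the per-round excess loss is exactly the regret term against a fixed comparator, and no dynamic-regret penalty appears. The second point needing care is the unbiasedness of $\hat g_t$ given the filtration, since $\theta_t$ is random. I handle this by conditioning on $\mathcal F_{t-1}$ and using that the environment draw is independent of the past. A further subtlety is that convexity in $\phi$ must hold for each fixed $\theta$; I take it from the stated Hessian assumption, reading $\lambda=0$ as allowed for (i).
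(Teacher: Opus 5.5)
Your proposal is correct and follows the paper's proof essentially step for step. Both use projected SGD against the fixed comparator $\phi^\star$, non-expansiveness of $\Pi_\Phi$, and conditional unbiasedness; (i) then follows from convexity with a constant step, and (ii) from strong convexity with $\eta_t=1/(\lambda t)$, which makes the distance coefficients telescope to zero. Your explicit filtration argument for unbiasedness usefully spells out a step the paper only asserts. One correction: the two regret inequalities hold for any fixed $\phi^\star\in\Phi$, so universality is not what makes the telescoping argument go through. Universality is what makes the left-hand side a nonnegative excess loss, and it is needed later for the quadratic-growth step.
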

The uniform strong-convexity condition $\nabla^2_\phi \RNE_\theta(f_\phi) \succeq \lambda I$ for every $\theta$ is an identifiability/exploration assumption on the policy-induced feedback; under pure bandit feedback it requires the sampled actions to span the cost directions relevant to the nuisance class.
This regret bound is the primary tool for controlling the bias of the plug-in gradient estimator introduced next. Given a nuisance estimate $f_\phi$, we define the plug-in policy objective, which is an approximation of $\Vcost(\theta)$, by
\begin{equation}\label{eq:plug-in}
  J(\phi,\theta)
  := \bbe[{x \sim \calD_x}]{f_\phi(x)^\top w^\star_\theta(x)} = \bbe[{x \sim \calD_x, \hat c \sim p_\theta(\cdot\mid x)}]{f_\phi(x)^\top w^\star(\hat{c})}.
\end{equation}

\paragraph{Plug-in gradient estimator.}
Under standard regularity assumptions (Appendix~\ref{apdx:perturbed_optimizer}), differentiating the plug-in objective yields
\begin{equation}\label{eq:plug-in-grad}
  \nabla_\theta J(\phi,\theta)
  = \E_{x \sim \calD_x}\!\left[\bigl(\nabla_\theta w^\star_\theta(x)\bigr)^{\!\top} f_\phi(x)\right],
  \quad \nabla_\theta w^\star_\theta(x) \in \R^{d_w\times d_\theta}.
\end{equation}

Since $\theta$ enters $J(\phi,\theta)$ solely through the sampling distribution, computing $\nabla_\theta J(\phi,\theta)$ reduces to differentiating $w^\star_\theta(x)$. For polyhedral $\calS$ and other cases, the mapping $\hat c \mapsto w^\star(\hat c)$ may be piecewise constant almost everywhere, so differentiating through it in the second expectation in~\eqref{eq:plug-in} may yield uninformative gradients, making it difficult to obtain a one-sample stochastic gradient estimator of $\nabla_\theta J(\phi,\theta)$. One approach to overcome this challenge is to again rely on a score function representation: 
\begin{equation}\label{eq:plug-in-grad-stoch}
  g^{\mathrm{plug\text{-}in}}_t= \bigl(w_t^\top f_{\phi_t}(x_t)\bigr)\nabla_\theta \log p_{\theta_t}(\hat c_t \mid x_t), \quad ~\text{where}~
  \quad \hat c_t \sim p_{\theta_t}(\cdot \mid x_t), w_t = w^\star(\hat c_t).
\end{equation}
Using the above estimator resembles a traditional actor-critic method; however, again it only weakly exploits the optimization structure.

\paragraph{Surrogate plug-in gradient estimator.}
To better exploit the linear optimization structure, we build on the extensive literature of decision-focused {\em surrogate} methods that have been developed in the full information setting.
Namely, we replace $f_\phi(x)^\top w^\star(\hat c)$ in \eqref{eq:plug-in} by a differentiable proxy $h^{\mathrm{sur}}(f_\phi(x),\hat c)$ designed to provide informative gradients. Examples include the SPO+ loss~\citep{elmachtoub2022smart}, the Fenchel-Young loss~\citep{berthet2020learning}, and the Perturbation Gradient loss~\citep{gupta2024decision}, among others. For any choice of surrogate function $h^{\mathrm{sur}}(f_\phi(x),\hat c)$, we define the surrogate plug-in objective 
\begin{equation}\label{eq:Jsur}
  J^{\mathrm{sur}}(\phi,\theta):= \bbe[{x \sim \calD_x, \hat c \sim p_\theta(\cdot\mid x)}]{h^{\mathrm{sur}}\!\left(f_\phi(x),\,\hat c \right)}.
\end{equation} 
Unlike the plug-in objective $J(\phi,\theta)$, which requires differentiating through $w^\star_\theta(x)$, we instead differentiate through the surrogate $h^{\mathrm{sur}}(f_\phi(x),\hat c)$, which is engineered to have informative gradients almost everywhere. Combined with a reparameterization $\hat c= \mu_\theta(x, \varepsilon)$ with $\varepsilon \sim \nu$ independent of $\theta$ (Assumption \ref{ass:reparam} in Appendix~\ref{apdx:perturbed_optimizer}), this yields the surrogate gradient estimator $g^{\mathrm{sur}}_t
  = \bigl(\nabla_\theta \mu_{\theta_t}(x_t,\varepsilon_t)\bigr)^\top
    \nabla_{\hat c} h^{\mathrm{sur}}\!\bigl(f_{\phi_t}(x_t),\hat c_t\bigr)$.
Conditioned on $(\phi_t,\theta_t)$, $g_t^{\mathrm{sur}}$ is unbiased for $\nabla_\theta J^{\mathrm{sur}}(\phi_t,\theta_t)$; bias relative to $\Vcost$ comes from the nuisance estimate and the surrogate choice. The full surrogate framework is in Appendix~\ref{apdx:surrogate}.

\subsection{Decision-focused hybrid policy gradient algorithm}
The score function gradient estimator~\eqref{eq:score-grad} and the plug-in objective~\eqref{eq:plug-in} (or its surrogate function variant \eqref{eq:Jsur}) provide two complementary gradient signals for $\theta$. Our algorithm exploits both: at each iteration, it updates the nuisance parameter $\phi$ by least-squares SGD step on~\eqref{eq:nuisance-loss}, and the parameter $\theta$ by an SGD step combining the two signals.
This complementarity motivates combining them through a convex mixture. For a mixing parameter $\alpha \in [0,1]$, we define the hybrid gradient estimator 
\begin{equation}\label{eq:hybrid-grad}
  g_t^\alpha
  = \alpha\,g_t^{\mathrm{score}}
  + (1-\alpha)\,g^{\mathrm{plug\text{-}in}}_t
\end{equation}
The surrogate variant of the hybrid estimator is obtained by substituting $g^{\mathrm{sur}}_t$ for $g^{\mathrm{plug\text{-}in}}_t$, yielding $g_t^{\alpha,\mathrm{sur}} := \alpha g_t^{\mathrm{score}} + (1-\alpha) g_t^{\mathrm{sur}}$. At $\alpha = 1$ the estimator reduces to the unbiased score function gradient, while at $\alpha = 0$ it relies entirely on the plug-in gradient.

Algorithm~\ref{alg:main} instantiates this construction as an online procedure covering both the plug-in and surrogate variants.
\begin{algorithm}[!ht]
\caption{Decision-focused hybrid policy gradient (DFHPG)}
\label{alg:main}
\KwIn{Initial parameters $\phi_1 \in \Phi$, $\theta_1 \in \R^{d_\theta}$; mixing parameter $\alpha \in [0,1]$; step sizes $\eta_t, \beta > 0$}
\For{$t = 1, 2, \ldots, T$}{
  Observe context $x_t$\;
  Sample $\hat c_t \sim p_{\theta_t}(\cdot \mid x_t)$ and execute
  $w_t \in \argmin_{w \in \calS} \hat c_t^\top w$\;
  Observe feedback $v_t$ and set $y_t = e(w_t)^\top v_t$\;
  $\phi_{t+1} = \Pi_\Phi\!\Bigl(\phi_{t} - \eta_t\,\nabla_{\phi}\ell_{t}(\phi_{t})\Bigr) , \quad \ell_{t}(\phi) = \|v_{t} - H(w_{t}) f_{\phi}(x_{t})\|^{2} $
    \tcp{\textbf{Nuisance update}}
  $\theta_{t+1} = \theta_t - \beta\,\tilde g_t$, where $\tilde g_t = g_t^\alpha$ for the plug-in variant and $\tilde g_t = g_t^{\alpha,\mathrm{sur}}$ for the surrogate variant\tcp{\textbf{Hybrid policy update}}
}
\end{algorithm}

\subsection{Convergence results}
We formalize the convergence guarantees of Algorithm~\ref{alg:main}. Since the objective is non-convex in $\theta$, we target standard non-convex stationarity criteria. The analysis hinges on a key structural property: the hybrid update is conditionally biased only through the nuisance error, which is in turn controlled by the regret of the nuisance update (Proposition~\ref{prop:nuisance-regret}). Under regularity assumptions on the parametric family $p_\theta$, the distributional smoothing $w^\star_\theta(x)$  is differentiable and Lipschitz even though the underlying 
map $\hat c \mapsto w^\star(\hat c)$ is piecewise constant, yielding $L^J_\theta$-smoothness of the plug-in objective $J(\phi,\theta)$ in $\theta$ with gradient norm bounded by $G^J$. The full regularity analysis is deferred to Appendix~\ref{apdx:conv_analysis}.

\begin{theorem}[Non-convex stationarity, plug-in variant]\label{thm:main-alpha}
Let $V^\star := \inf_\theta \Vcost(\theta) > -\infty$. With step size $\beta = 1/(L^J_\theta\sqrt{T})$, Algorithm~\ref{alg:main} satisfies
\begin{enumerate}
  \item[\textup{(i)}] With $\eta_t=D_\Phi/(G_\RNE\sqrt{T})$:\quad
  \begin{equation}\label{eq:main-alpha-i}
\frac{1}{T}\sum_{t=1}^T\E\bigl[\|\nabla_\theta\Vcost(\theta_t)\|^2\bigr]
\;\le\;\frac{2L^J_\theta\Delta_1+(G^J)^2+\sigma_\alpha^2}{\sqrt{T}}
+ \calO\Bigl((1-\alpha)^2 \frac{D_\Phi G_\RNE}{\lambda \sqrt{T}}\Bigr),
\end{equation}
  \item[\textup{(ii)}] With $\eta_t = 1/(\lambda t)$:
  \begin{equation}\label{eq:main-alpha-ii}
\frac{1}{T}\sum_{t=1}^T\E\bigl[\|\nabla_\theta\Vcost(\theta_t)\|^2\bigr]
\;\le\;\frac{2L^J_\theta\Delta_1+(G^J)^2+\sigma_\alpha^2}{\sqrt{T}}
+ \calO\Bigl(\frac{(1-\alpha)^2(1+\log T)}{\lambda^2 T}\Bigr),
\end{equation}
\end{enumerate}

where $\Delta_1 := \Vcost(\theta_1)-V^\star$, and the variance $\sigma_\alpha^2 := (\alpha\sigma_{\mathrm{score}}+(1-\alpha)\sigma_{\mathrm{plug\text{-}in}})^2$.

\end{theorem}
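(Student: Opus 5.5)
We follow the standard biased-SGD analysis for smooth non-convex objectives. The key observation is that $\Vcost(\theta)=J(\phi^\star,\theta)$ by realizability, since $f_{\phi^\star}=f^\star$ (Remark~\ref{rem:universal_min}). Hence $\Vcost$ inherits $L^J_\theta$-smoothness and the gradient bound $G^J$ from the regularity of $w^\star_\theta$ (Appendix~\ref{apdx:conv_analysis}).

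\emph{Step 1: decompose the update.} Let $\calF_t$ be the history up to the start of round $t$, so that $(\theta_t,\phi_t)$ is $\calF_t$-measurable. Conditionally on $\calF_t$, $\E[g_t^{\mathrm{score}}\mid\calF_t]=\nabla\Vcost(\theta_t)$. The plug-in score representation gives $\E[g_t^{\mathrm{plug\text{-}in}}\mid\calF_t]=\nabla_\theta J(\phi_t,\theta_t)$. We therefore write
\[
\tilde g_t=\nabla\Vcost(\theta_t)+b_t+\xi_t,\qquad b_t:=(1-\alpha)\bigl(\nabla_\theta J(\phi_t,\theta_t)-\nabla_\theta J(\phi^\star,\theta_t)\bigr),
\]
where $\E[\xi_t\mid\calF_t]=0$. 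By Minkowski, $\E\|\xi_t\|^2\le\sigma_\alpha^2$.

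\emph{Step 2: one-step descent.} The $L^J_\theta$-smoothness of $\Vcost$ gives
\[
\Vcost(\theta_{t+1})\le\Vcost(\theta_t)-\beta\langle\nabla\Vcost(\theta_t),\tilde g_t\rangle+\tfrac{L^J_\theta\beta^2}{2}\|\tilde g_t\|^2 .
\]
Take conditional expectations and use $\langle a,a+b\rangle\ge\tfrac12\|a\|^2-\tfrac12\|b\|^2$. Bound $\E\|\tilde g_t\|^2$ by a constant multiple of $(G^J)^2+\sigma_\alpha^2$ plus the bias terms. Then telescope over $t$ and divide by $\beta T/2$. With $\beta=1/(L^J_\theta\sqrt T)$ this produces $\bigl(2L^J_\theta\Delta_1+(G^J)^2+\sigma_\alpha^2\bigr)/\sqrt T$ plus an additive term of the form $C\frac1T\sum_t\E\|b_t\|^2$.

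\emph{Step 3: control the bias by nuisance regret.} This is the main obstacle. First, $\|b_t\|\le(1-\alpha)\E_x\bigl[\|\nabla_\theta w^\star_{\theta_t}(x)\|_{op}\|f_{\phi_t}(x)-f^\star(x)\|\bigr]$. By Cauchy--Schwarz and the Lipschitz bound on $w^\star_\theta$, $\|b_t\|^2\lesssim(1-\alpha)^2\E_x\|f_{\phi_t}(x)-f^\star(x)\|^2$.

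We then need to relate this unweighted error to the policy-weighted excess loss. Because $f^\star$ minimizes $\RNE_\theta$ for every $\theta$, and $\RNE_\theta(f_\phi)$ is $\lambda$-strongly convex in $\phi$ with first-order optimality at $\phi^\star$, we have $\frac{\lambda}{2}\|\phi_t-\phi^\star\|^2\le\RNE_{\theta_t}(f_{\phi_t})-\RNE_{\theta_t}(f^\star)$. A Lipschitz parametrization $\phi\mapsto f_\phi$ then gives $\E_x\|f_{\phi_t}-f^\star\|^2\lesssim\|\phi_t-\phi^\star\|^2$. Whatever constants this chain costs are hidden in the $\calO(\cdot)$. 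If this route gives a $1/\lambda$ rather than $1/\lambda^2$ dependence, we would instead use a smoothness argument converting the excess loss into gradient bias, which costs an extra $1/\lambda$.

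The subtle point is that $\phi_t$ is adapted to the same trajectory as $\theta_t$. Proposition~\ref{prop:nuisance-regret} holds for an arbitrary sequence $\{\theta_t\}$ precisely because of the universal comparator $f^\star$, so it applies directly here.

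\emph{Step 4: plug in the two step-size regimes.} Averaging over $t$ and plugging in Proposition~\ref{prop:nuisance-regret}(i) or (ii) gives an average bias of $\calO\bigl((1-\alpha)^2D_\Phi G_\RNE/(\lambda\sqrt T)\bigr)$ or $\calO\bigl((1-\alpha)^2(1+\log T)/(\lambda^2T)\bigr)$, respectively. Substituting into Step 2 yields \eqref{eq:main-alpha-i} and \eqref{eq:main-alpha-ii}.
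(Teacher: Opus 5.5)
Your proposal is correct and follows the paper's proof essentially step for step: the descent lemma on $\Vcost=J(\phi^\star,\cdot)$, Young's inequality on the bias cross term, and a Minkowski/Cauchy--Schwarz variance bound giving $\sigma_\alpha^2$. The bias is then controlled through the Jacobian bound, the Lipschitz map $\phi\mapsto f_\phi$, quadratic growth $\tfrac{\lambda}{2}\|\phi_t-\phi^\star\|^2\le\RNE_{\theta_t}(f_{\phi_t})-\RNE_{\theta_t}(f^\star)$, and Proposition~\ref{prop:nuisance-regret}.

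Two small tightenings are needed. To get exactly $(G^J)^2$ in the leading term rather than a multiple of it, bound the conditional mean directly as $\|\alpha\nabla_\theta J(\phi^\star,\theta_t)+(1-\alpha)\nabla_\theta J(\phi_t,\theta_t)\|\le G^J$, as the paper does. Also, your $1/\lambda$-versus-$1/\lambda^2$ hedge is unnecessary: the factor $2/\lambda$ from quadratic growth times the regret bounds already gives $1/\lambda$ in case (i) and $1/\lambda^2$ in case (ii).
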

The bound above decomposes cleanly into an optimization term, depending on the smoothness and variance properties of the plug-in objective, and a bias term, controlled entirely by the nuisance estimator convergence. The optimization term is $\calO(T^{-1/2})$ in both cases, matching the standard non-convex SGD rate \citep{ghadimi2013stochastic}. The bias term vanishes whenever $f_{\phi_t} = f^\star$, since $J(\phi,\theta)$ and $\Vcost(\theta)$ differ only through the substitution of $f_\phi$ for $f^\star$. The two step-size schedules correspond to the two cases of Proposition~\ref{prop:nuisance-regret}: the constant schedule yields a bias of the same $\calO(T^{-1/2})$ order as the optimization term, while the adaptive schedule sharpens it to $\calO(\log T / T)$, leaving the optimization term as the sole dominant contribution.

Supporting lemmas and proofs are deferred to Appendices~\ref{apdx:nuisance_analysis}, \ref{apdx:conv_analysis} and \ref{apdx:surrogate}.

The variance $\sigma_\alpha^2$ interpolates continuously between two extremes. At $\alpha=1$ one recovers the unbiased score function gradient~\citep{sutton2018reinforcement}. At $\alpha=0$ one relies entirely on the plug-in gradient, which enjoys lower variance once $f_{\phi_t}$ is well-trained, at the cost of a nuisance-driven bias. The optimal $\alpha$ shifts toward zero as the nuisance estimation improves, suggesting that adaptive schedules $\alpha_t \to 0$ may improve empirical performance.

We now turn to the surrogate variant. Define $V^\alpha(\theta) := \alpha\,\Vcost(\theta) + (1-\alpha)\,J^\mathrm{sur}(\phi^\star,\theta)$. The smoothness of $V^\alpha$ follows from that of its two components: $\Vcost(\theta) = J(\phi^\star,\theta)$ inherits $L^J_\theta$-smoothness from the plug-in analysis above, while $J^{\mathrm{sur}}(\phi^\star,\cdot)$ is smooth by virtue of the regularity of both the reparameterization map $\mu_\theta$ and the surrogate gradient $\nabla_{\hat c} h^{\mathrm{sur}}$. As a convex combination of smooth functions, $V^\alpha$ is therefore $L^\alpha_\theta$-smooth with gradient norm bounded by $\tilde{G}_\alpha$; the full derivation is in Appendix~\ref{apdx:surrogate}.
We state the result under the dynamic schedule $\eta_t = 1/(\lambda t)$; an analogous bound to Theorem~\ref{thm:main-alpha}(i) holds for the constant step size for our surrogate model.

\begin{theorem}[Non-convex stationarity, surrogate variant]\label{thm:sur-main}
Let $V^{\alpha,\star} := \inf_\theta V^\alpha(\theta)$. Running Algorithm~\ref{alg:main} with $\beta = 1/(L^\alpha_\theta \sqrt{T})$ and $\eta_t = 1/(\lambda t)$ yields
\begin{equation}\label{eq:sur-main-bound}
\begin{aligned}
\frac{1}{T}\sum_{t=1}^T \E\bigl[\|\nabla V^\alpha(\theta_t)\|^2\bigr]
&\leq \frac{2 L^\alpha_\theta \Delta_1^\alpha
+ \tilde G_\alpha^2 + \tilde\sigma_\alpha^2}{\sqrt{T}} \\
&\quad + \calO\Bigl((1-\alpha)^2\tfrac{1 + \log T}{\lambda^2 T}\Bigr).
\end{aligned}
\end{equation}
with $\Delta_1^\alpha = V^\alpha(\theta_1) - V^{\alpha,\star} $, $\tilde\sigma_\alpha^2 := (\alpha \sigma_{\mathrm{score}} + (1-\alpha)\sigma_{\mathrm{sur}})^2$.
\end{theorem}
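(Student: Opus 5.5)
The plan is to run the standard biased non-convex SGD argument on the smooth surrogate potential $V^\alpha$, and to attribute all of the bias to the nuisance error $f_{\phi_t}-f^\star$. Conditioned on $\calF_t=\sigma(\theta_t,\phi_t)$, the score term is unbiased: $\E[g_t^{\mathrm{score}}\mid\calF_t]=\nabla\Vcost(\theta_t)$. The surrogate term satisfies $\E[g_t^{\mathrm{sur}}\mid\calF_t]=\nabla_\theta J^{\mathrm{sur}}(\phi_t,\theta_t)$ by the reparameterization assumption. Hence
\[
\E[g_t^{\alpha,\mathrm{sur}}\mid\calF_t]=\nabla V^\alpha(\theta_t)+b_t,\qquad b_t:=(1-\alpha)\bigl(\nabla_\theta J^{\mathrm{sur}}(\phi_t,\theta_t)-\nabla_\theta J^{\mathrm{sur}}(\phi^\star,\theta_t)\bigr).
\]
For the conditional second moment, I will combine the triangle inequality on standard deviations with the gradient bound $\tilde G_\alpha$. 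This gives $\E[\|g_t^{\alpha,\mathrm{sur}}\|^2\mid\calF_t]\le \|\E[g_t^{\alpha,\mathrm{sur}}\mid\calF_t]\|^2+\tilde\sigma_\alpha^2$, and the mean-squared part is at most $2\tilde G_\alpha^2+2\|b_t\|^2$ (or a similar constant).

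\textbf{Descent step.} Applying $L^\alpha_\theta$-smoothness to $\theta_{t+1}=\theta_t-\beta g_t^{\alpha,\mathrm{sur}}$ and taking conditional expectations gives
\[
\E V^\alpha(\theta_{t+1})\le V^\alpha(\theta_t)-\beta\|\nabla V^\alpha(\theta_t)\|^2-\beta\langle\nabla V^\alpha(\theta_t),b_t\rangle+\tfrac{L^\alpha_\theta\beta^2}{2}\E\|g_t^{\alpha,\mathrm{sur}}\|^2.
\]
Young's inequality handles the cross term: $-\langle\nabla V^\alpha,b_t\rangle\le\tfrac12\|\nabla V^\alpha\|^2+\tfrac12\|b_t\|^2$. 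I then telescope over $t=1,\dots,T$, divide by $\beta T/2$, and use $V^\alpha(\theta_{T+1})\ge V^{\alpha,\star}$. With $\beta=1/(L^\alpha_\theta\sqrt T)$ this yields
\[
\frac1T\sum_t\E\|\nabla V^\alpha(\theta_t)\|^2\le\frac{2L^\alpha_\theta\Delta_1^\alpha+\tilde G_\alpha^2+\tilde\sigma_\alpha^2}{\sqrt T}+\calO\Bigl(\frac1T\sum_t\E\|b_t\|^2\Bigr).
\]
Matching the exact constants in front of $\tilde G_\alpha^2$ is bookkeeping. The $\|b_t\|^2$ terms produced by the second-moment bound carry an extra factor $\beta$, so they are dominated by the Young's-inequality term.

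\textbf{Bias control (main obstacle).} I need
\[
\|b_t\|^2\le (1-\alpha)^2 C\,\E_x\|f_{\phi_t}(x)-f^\star(x)\|^2 ,
\]
which asks that $\nabla_\theta J^{\mathrm{sur}}$ be Lipschitz in its first argument $f_\phi(x)$, uniformly in $\theta$. Writing $\nabla_\theta J^{\mathrm{sur}}=\E[(\nabla_\theta\mu_\theta)^\top\nabla_{\hat c}h^{\mathrm{sur}}(f_\phi(x),\hat c)]$, this follows from a bounded Jacobian $\|\nabla_\theta\mu_\theta\|$ together with Lipschitzness of $\nabla_{\hat c}h^{\mathrm{sur}}$ in its first argument. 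Both are part of the appendix regularity assumptions; for SPO+ and PFYL the latter comes from the perturbation smoothing, and I will assume it. Jensen's inequality then converts the first-moment difference into a second moment of the nuisance error.

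Next I must relate $\E_x\|f_{\phi_t}-f^\star\|^2$ to the excess nuisance loss. Using the universal minimizer (Remark~\ref{rem:universal_min}), the cross term vanishes by $w\perp c\mid x$, so
\[
\RNE_{\theta_t}(f_{\phi_t})-\RNE_{\theta_t}(f^\star)=\E\|H(w)(f_{\phi_t}-f^\star)\|^2 .
\]
By $\lambda$-strong convexity in $\phi$ with minimizer $\phi^\star$, this quantity is at least $\tfrac\lambda2\|\phi_t-\phi^\star\|^2$. Assuming $f_\phi$ is Lipschitz in $\phi$, we get $\E_x\|f_{\phi_t}-f^\star\|^2\lesssim\|\phi_t-\phi^\star\|^2\le\tfrac2\lambda\bigl(\RNE_{\theta_t}(f_{\phi_t})-\RNE_{\theta_t}(f^\star)\bigr)$. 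Averaging and invoking Proposition~\ref{prop:nuisance-regret}(ii) with $\eta_t=1/(\lambda t)$ gives $\frac1T\sum_t\E\|b_t\|^2\lesssim(1-\alpha)^2\frac{G_\RNE^2(1+\log T)}{\lambda^2T}$, which is the claimed $\calO$ term. The delicate point is that $\theta_t$ depends on past $\phi$'s. The proposition holds for arbitrary sequences $\{\theta_t\}$ precisely because $f^\star$ is a fixed comparator, so this coupling is harmless.
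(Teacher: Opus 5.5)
Your proposal is correct and follows essentially the same route as the paper. The shared steps are the conditional-mean identity $\E[g^{\alpha,\mathrm{sur}}_t\mid\phi_t,\theta_t]=\nabla V^\alpha(\theta_t)+(1-\alpha)\nabla_\theta B^{\mathrm{sur}}(\phi_t,\theta_t)$, the descent lemma with Young's inequality on the cross term, the bound $\|\nabla_\theta B^{\mathrm{sur}}\|^2\le (L_c^{\mathrm{sur}})^2G_\mu^2\,\E_x\|f_{\phi_t}(x)-f^\star(x)\|^2$, quadratic growth from strong convexity, and Proposition~\ref{prop:nuisance-regret}(ii). The only difference is cosmetic and sits in the second-moment step: the paper avoids your factor $2\tilde G_\alpha^2$ by bounding the conditional mean directly as $\alpha\|\nabla\Vcost(\theta_t)\|+(1-\alpha)\|\nabla_\theta J^{\mathrm{sur}}(\phi_t,\theta_t)\|\le\tilde G_\alpha$, using the uniform-in-$\phi$ bound of Assumption~\ref{ass:sur-grad-bound}, which gives exactly the stated constant.
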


Theorem~\ref{thm:sur-main} guarantees stationarity for $V^\alpha$, not directly for $\Vcost$; the surrogate-to-task gap depends on the chosen DFL surrogate and is not formally quantified here. The bound mirrors the structure of Theorem~\ref{thm:main-alpha}, with a dominant $\calO(T^{-1/2})$ optimization term and a bias decaying at $\calO(\log T/T)$ under the adaptive nuisance schedule. The bias is controlled by the Lipschitz constant of $\nabla_{\hat c} h^{\mathrm{sur}}$ with respect to its cost argument, measuring how sensitively the surrogate gradient responds to shifts in the nuisance estimate away from $f^\star$; explicit constants are provided in Appendix~\ref{apdx:surrogate}. The surrogate variant is preferred in practice because it provides more informative gradient signals than the plug-in estimator, which must rely on the score function representation to circumvent the piecewise-constant nature of $\hat c \mapsto w^\star(\hat c)$.

\section{Numerical experiments}

\subsection{Benchmark problems}
\label{sec:synthetic}

The experiments use four online contextual optimization benchmarks with bandit feedback: three synthetic linear-objective tasks (top-$k$ selection, shortest path, and combinatorial pricing) and a real-data energy-scheduling instance. At each iteration \(t\), a context $x_t$ is observed, a feasible decision is selected by solving an optimization problem with predicted parameters, and only the realized scalar objective value of the selected decision is observed. For linear-objective benchmarks, this feedback takes the form $v_t = c_t^\top w_t$, where the latent cost vector $c_t \sim p^\star(\cdot \mid x_t)$ is not revealed. Benchmark data details (synthetic data generation and the energy-scheduling data source and chronological split) are collected in Appendix~\ref{app:synthetic_datagen}; semi-bandit and full-information variants of the feedback model are reported in Appendix~\ref{app:feedback_mode_comparison}.

\paragraph{Top-$k$ selection.}
\label{sec:synthetic_topk}
This benchmark models the selection of exactly $k$ items from the candidate set \(\mathcal{I}_{\mathrm{top}}=\{1,\ldots,d\}\). The binary decision variable \(w_i\) indicates whether item \(i\) is selected. Given predicted item costs $\hat c_t$, the top-$k$ oracle solves
\[
    \min_{w\in\{0,1\}^d}\ \hat c_t^\top w
    \quad \mathrm{s.t.}\quad
    \mathbf{1}^\top w = k .
\]

\paragraph{Shortest path.}
\label{sec:synthetic_shortest_path}
This benchmark uses a fixed directed acyclic grid \(\mathcal{G}=(\mathcal{V},\mathcal{E})\) with right/down edges, where \(|\mathcal{V}|=25\) and \(|\mathcal{E}|=q=40\). The binary decision variable \(w_e\) indicates whether edge \(e\) is on the selected path. Let \(A\) denote the node--arc incidence matrix and \(b_{sd}\in\mathbb{R}^{|\mathcal{V}|}\) the source--destination balance vector ($+1$ at the source, $-1$ at the destination, $0$ elsewhere). Given predicted edge costs $\hat c_t$, the shortest-path oracle solves
\[
    \min_{w\in\{0,1\}^q}\ \hat c_t^\top w
    \quad \mathrm{s.t.}\quad
    Aw=b_{sd} .
\]

\paragraph{Combinatorial pricing.}
\label{sec:synthetic_pricing}
This benchmark assigns one of $K_{\mathrm{price}}$ discrete price levels to each of $n$ products. Let \(w_{i\ell}\) indicate whether product \(i\) is assigned price level \(\ell\), let \(L_{\mathrm{promo}}\le K_{\mathrm{price}}\) be the number of lowest price tiers counted as promotional, and let \(B_{\mathrm{promo}}\) be the promotion budget. Given predicted product--price costs $\hat c_t$, the induced decision solves
\[
    \min_{w\in\{0,1\}^{nK_{\mathrm{price}}}}\ \hat c_t^\top w
    \quad \mathrm{s.t.}\quad
    \sum_{\ell=1}^{K_{\mathrm{price}}} w_{i\ell}=1\ \forall i,\quad
    \sum_{i=1}^n\sum_{\ell=1}^{L_{\mathrm{promo}}} w_{i\ell}\le B_{\mathrm{promo}} .
\]
Conditional on $x_t$, latent demand is sampled for every product--price pair; the action determines which entries are monetized, and the benchmark is written as cost minimization with cost equal to negative realized revenue.

\paragraph{Energy scheduling.}
\label{sec:synthetic_energy}
This benchmark schedules tasks on machines under day-ahead electricity prices, following \citet{Mandi_2024} with empirical SEMO data. Each iteration $t$ corresponds to one day. Each task $j$ has duration $d_j$ slots, and each resource $u$ has machine-$m$ capacity $\bar\rho_{mu}$ and per-task usage $\rho_{ju}$. The binary decision $w_{jms}\in\{0,1\}$ indicates that task $j$ starts on machine $m$ at slot $s$, with $w\in\{0,1\}^{d_w}$ indexing feasible (task, machine, start-slot) triples. Aggregating the day's per-slot prices over each task's run window yields an iteration-$t$ cost vector $c_t\in\mathbb{R}^{d_w}$ that depends linearly on the price profile. Given a predicted cost $\hat c_t$, the scheduling oracle solves
\[
    \min_{w\in\{0,1\}^{d_w}}\ \hat c_t^\top w
    \ \ \mathrm{s.t.}\ \
    \sum_{m,s} w_{jms}=1\ \forall j,\ \
    \sum_{j} \rho_{ju}\!\!\sum_{s:s\le s'<s+d_j}\!\! w_{jms} \le \bar\rho_{mu}\ \forall (m,u,s').
\]

\subsection{Experimental setup}

The proposed method is reported as \textsc{DFHPG}. It uses the hybrid gradient update in Algorithm~\ref{alg:main}, with the fixed mixing weight $\alpha$ replaced by a time-varying $\alpha_t$ decaying from $\alpha_{\max}$ to $\alpha_{\min}$. The score function gradient subtracts a variance-reduction baseline $b_t$. Two ablations isolate the two gradient components: \textsc{DFHPG-1} uses only the score function update, \textsc{DFHPG-0} only the plug-in update. Theorems~\ref{thm:main-alpha} and~\ref{thm:sur-main} cover the fixed-$\alpha$ plug-in and surrogate variants. The experiments include three heuristic modifications outside the theory: the time-varying $\alpha_t$ schedule (Appendix~\ref{app:alpha_schedule}), the variance-reduction baseline subtracted from the score function term (Appendix~\ref{app:advantage_choices}), and separate $L_2$ normalization of the score and plug-in components before mixing (Appendix~\ref{app:selected_hyperparameters}).

We compare against three contextual-bandit-style baselines that share \textsc{DFHPG}'s context, feedback, and linear optimization oracle, isolating the effect of the learning update from the action-selection rule. \textsc{GreedyCB} uses greedy exploitation with a point cost predictor; $\epsilon$-\textsc{GreedyCB} adds random exploration; \textsc{TSCB} samples a cost vector from the learned conditional cost distribution and solves the linear optimization at that sample, matching Thompson sampling's action-level randomization without an explicit Bayesian posterior. LinUCB and OFUL require a stronger oracle than linear minimization and fall outside this matched setup. Pseudocode is in Appendix~\ref{app:baseline_algorithms}.

The default distributional cost-vector prediction model is the Gaussian linear model; the Gaussian neural-network (NN), conditional normalizing-flow (CNF), and denoising diffusion probabilistic model (DDPM) variants are also supported and reported as alternatives in Section~\ref{sec:results} and Appendix~\ref{app:model_variants_remaining}. The Gaussian variants set $p_\theta(\cdot\mid x) = \mathcal N(\bar\mu_\theta(x),\, \sigma_{\mathrm{G}}^2 I_{d_w})$, where $\bar\mu_\theta$ is the mean prediction function parameterized either as a linear map (Gaussian linear) or as a two-layer neural network with $128$ hidden units (Gaussian NN), and $\sigma_{\mathrm{G}}>0$ is a fixed exploration scale. The CNF variant uses a two-layer conditional RealNVP flow~\citep{dinh2017density} with $128$-dimensional coupling networks; the DDPM variant uses a conditional denoiser~\citep{ho2020denoising} with a $128$-dimensional hidden layer. Update rules for the generative variants are in Appendix~\ref{app:generative_param}. The main metric reported is cumulative regret, $\operatorname{Regret}_T = \sum_{t=1}^T \bigl(c_t^\top w_t-\min_{w\in \mathcal S} c_t^\top w\bigr)$, where $w_t \in \mathcal S$ is the decision implemented at iteration $t$. Convergence figures plot the ergodic average regret per iteration, $\operatorname{Regret}_t/t$.

\subsection{Results}
\label{sec:results}

Table~\ref{tab:main-methods-selected-settings} reports final cumulative regret across the four benchmarks under both feedback regimes. Under pure bandit feedback, \textsc{DFHPG} achieves the lowest regret on top-$k$ selection, shortest path, and pricing, with \textsc{DFHPG-0} second-best in each; on energy, both \textsc{DFHPG} and \textsc{DFHPG-0} beat the contextual-bandit baselines by roughly $1.7$--$1.9\times$, and \textsc{DFHPG-0} slightly outperforms the hybrid update (within one standard error), consistent with the higher regret of \textsc{DFHPG-1} and indicating that the score function signal is too noisy on this low-signal real-data instance. Semi-bandit feedback drops cumulative regret by roughly $50$--$75\%$ on the synthetic benchmarks and ${\sim}35\%$ on energy, with \textsc{DFHPG-0} winning every column (including the semi-bandit-adapted CB baselines, Appendix~\ref{app:feedback_mode_comparison}): the coordinate-level signal flows directly into the nuisance fit and sharpens the plug-in gradient, leaving little to gain from the score function term, which is therefore most useful in the pure-bandit regime.

\begin{table}[!ht]
\centering
\scriptsize
\setlength{\tabcolsep}{3.5pt}
\caption{Final cumulative regret across benchmarks. The first three columns are synthetic degree-8 Gaussian linear cost-model experiments at horizon \(T=2{,}000\); the fourth is the real-data energy-scheduling benchmark. The top block reports pure bandit feedback, the bottom block semi-bandit feedback. Entries report mean $\pm$ standard error over 30 replications; lower is better. Within each block, the best result is bolded and the second-best is underlined.}
\label{tab:main-methods-selected-settings}
\resizebox{\textwidth}{!}{%
\begin{tabular}{lcccc}
\toprule
Method & Top-$k$ (deg. 8) & Shortest path (deg. 8) & Pricing (deg. 8) & Energy \\
\midrule
\multicolumn{5}{l}{\textit{Pure bandit feedback}} \\
\midrule
\textsc{GreedyCB}
& $6.94{\times}10^5 \pm 3.38{\times}10^4$
& $1.98{\times}10^6 \pm 4.40{\times}10^4$
& $7.41{\times}10^4 \pm 1.23{\times}10^4$
& $6.43{\times}10^7 \pm 1.95{\times}10^5$ \\
\(\epsilon\)-\textsc{GreedyCB}
& $7.17{\times}10^5 \pm 3.98{\times}10^4$
& $2.06{\times}10^6 \pm 4.12{\times}10^4$
& $7.89{\times}10^4 \pm 1.10{\times}10^4$
& $6.58{\times}10^7 \pm 3.01{\times}10^5$ \\
\textsc{TSCB}
& $6.93{\times}10^5 \pm 3.87{\times}10^4$
& $2.00{\times}10^6 \pm 4.37{\times}10^4$
& $7.33{\times}10^4 \pm 1.23{\times}10^4$
& $6.43{\times}10^7 \pm 1.94{\times}10^5$ \\
\textsc{DFHPG-1}
& $4.00{\times}10^5 \pm 4.49{\times}10^4$
& $1.81{\times}10^6 \pm 6.08{\times}10^4$
& $6.46{\times}10^4 \pm 1.16{\times}10^4$
& $1.60{\times}10^8 \pm 1.86{\times}10^7$ \\
\textsc{DFHPG}
& $\boldsymbol{1.68{\times}10^5 \pm 1.47{\times}10^4}$
& $\boldsymbol{1.50{\times}10^6 \pm 5.04{\times}10^4}$
& $\boldsymbol{2.98{\times}10^4 \pm 7.59{\times}10^3}$
& $\underline{3.85{\times}10^7 \pm 8.79{\times}10^6}$ \\
\textsc{DFHPG-0}
& $\underline{1.99{\times}10^5 \pm 2.38{\times}10^4}$
& $\underline{1.58{\times}10^6 \pm 5.15{\times}10^4}$
& $\underline{3.35{\times}10^4 \pm 8.09{\times}10^3}$
& $\boldsymbol{3.50{\times}10^7 \pm 8.12{\times}10^6}$ \\
\midrule
\multicolumn{5}{l}{\textit{Semi-bandit feedback}} \\
\midrule
\textsc{DFHPG}
& $\underline{7.10{\times}10^4 \pm 8.52{\times}10^3}$
& $\underline{6.83{\times}10^5 \pm 3.11{\times}10^4}$
& $\underline{7.54{\times}10^3 \pm 5.86{\times}10^2}$
& $\underline{2.35{\times}10^7 \pm 4.21{\times}10^5}$ \\
\textsc{DFHPG-0}
& $\boldsymbol{6.65{\times}10^4 \pm 5.30{\times}10^3}$
& $\boldsymbol{5.31{\times}10^5 \pm 2.28{\times}10^4}$
& $\boldsymbol{7.30{\times}10^3 \pm 6.46{\times}10^2}$
& $\boldsymbol{2.33{\times}10^7 \pm 3.78{\times}10^5}$ \\
\bottomrule
\end{tabular}%
}
\end{table}

\begin{figure}[!ht]
\begin{minipage}[t]{0.48\textwidth}
\centering
\includegraphics[width=0.9\textwidth]{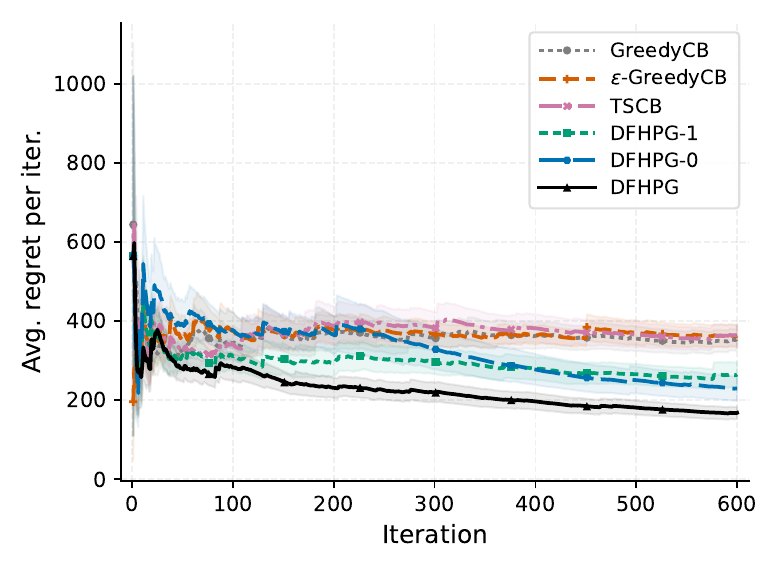}
\caption{Zoomed convergence plot for top-$k$ selection (first 600 iterations). Each curve reports average regret per iteration, with shaded $\pm 1$ standard-error bands across replications.}
\label{fig:main-convergence-topk}
\end{minipage}\hfill
\begin{minipage}[t]{0.48\textwidth}
\centering
\includegraphics[width=0.9\textwidth]{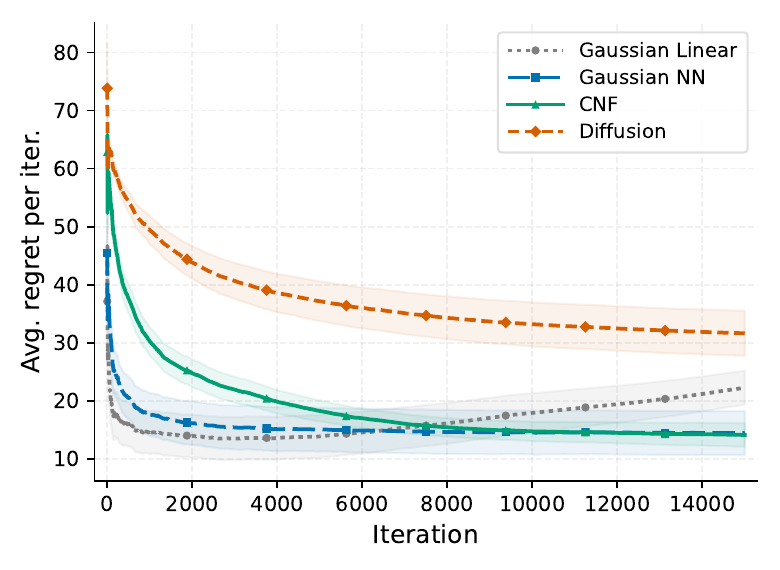}
\caption{Comparison of distributional cost models on pricing at $T=15{,}000$. Each curve reports average regret per iteration, with shaded $\pm 1$ standard-error bands across replications.}
\label{fig:main-model-variants}
\end{minipage}
\end{figure}

Figure~\ref{fig:main-convergence-topk} traces top-$k$ dynamics over the first 600 iterations and shows that the score function and plug-in components are complementary in time. For the first $\sim\!300$ iterations \textsc{DFHPG-1} sits below \textsc{DFHPG-0}: the score function gradient is unbiased and immediately usable, while the plug-in gradient relies on a nuisance that has not accumulated enough data. The order reverses past $\sim\!300$ iterations, when the nuisance stabilizes and the lower-variance plug-in signal pulls \textsc{DFHPG-0} below \textsc{DFHPG-1}; through its time-varying weighting, hybrid \textsc{DFHPG} absorbs both advantages and ends below the contextual-bandit baselines. Convergence plots for the other three benchmarks are in Appendix~\ref{app:convergence_traces}.

Figure~\ref{fig:main-model-variants} explores distributional cost models on pricing at the extended horizon $T=15{,}000$, revealing a data-vs-expressiveness trade-off. Early in training, when on-policy data is scarce, the simpler Gaussian linear model leads, quickly capturing the dominant linear cost structure with fewer parameters. As data accumulates, the more expressive CNF parameterization catches up and reaches lower final regret. Among the generative variants, CNF outperforms diffusion, possibly due to the tractable log-density of normalizing flows versus the evidence lower bound (ELBO) proxy used for diffusion. The same comparison on the other benchmarks, together with sensitivity studies (problem difficulty, mixing weight, surrogate loss, feedback mode), is in Appendix~\ref{app:additional_experimental_results}.

\section{Conclusion}

This paper studies online contextual linear optimization under partial feedback and proposes a decision-focused hybrid policy gradient (\textsc{DFHPG}) method that combines an unbiased score function update with a plug-in update leveraging downstream optimization structure through a nuisance estimate of the latent cost vector. The analysis establishes convergence to first-order stationarity for the coupled policy and nuisance updates. On three synthetic benchmarks and a real-data energy-scheduling instance, \textsc{DFHPG} (strongest on synthetic) and the plug-in-only \textsc{DFHPG-0} (marginally better on energy) both reduce final cumulative regret relative to contextual-bandit baselines and the score-only ablation; on energy, the score function signal is too noisy to add information beyond the plug-in component. The distributional algorithm extends naturally to nonlinear objectives, while the theory assumes linear objectives and exact oracle solves; extending it is left for future work.

\section*{Acknowledgments}

This research was partly supported by the NSF AI Institute for Advances in Optimization (Award 2112533). The authors sincerely thank Amira Hijazi for discussions that helped scope this project.

\bibliographystyle{apalike}
\bibliography{refs}

@article{ye2025contextual,
  title={Contextual stochastic optimization for omnichannel multicourier order fulfillment under delivery time uncertainty},
  author={Ye, Tinghan and Cheng, Sikai and Hijazi, Amira and Van Hentenryck, Pascal},
  journal={Manufacturing \& Service Operations Management},
  year={2025},
  publisher={INFORMS},
  doi={10.1287/msom.2024.1328},
  url={https://doi.org/10.1287/msom.2024.1328}
}

@article{hu2024multi,
  title={Multi-Stage Predict+Optimize for (Mixed Integer) Linear Programs},
  author={Hu, Xinyi and Lee, Jasper C. H. and Lee, Jimmy H. M. and Stuckey, Peter J.},
  journal={Advances in Neural Information Processing Systems},
  volume={37},
  pages={64794--64827},
  year={2024},
  doi={10.52202/079017-2068},
  url={https://openreview.net/forum?id=pXFiHHySEw}
}

@inproceedings{capitaine2025online,
  title={Online Decision-Focused Learning},
  author={Capitaine, Aymeric and Haddouche, Maxime and Moulines, Eric and Jordan, Michael I. and Boursier, Etienne and Durmus, Alain Oliviero},
  booktitle={International Conference on Learning Representations},
  year={2026},
  url={https://openreview.net/forum?id=FJhtHBphCt}
}

@article{Mandi_2024,
  title={Decision-Focused Learning: Foundations, State of the Art, Benchmark and Future Opportunities},
  author={Mandi, Jayanta and Kotary, James and Berden, Senne and Mulamba, Maxime and Bucarey, Victor and Guns, Tias and Fioretto, Ferdinando},
  journal={Journal of Artificial Intelligence Research},
  volume={80},
  pages={1623--1701},
  year={2024},
  doi={10.1613/jair.1.15320},
  url={https://doi.org/10.1613/jair.1.15320}
}

@article{elmachtoub2022smart,
  title={Smart “predict, then optimize”},
  author={Elmachtoub, Adam N and Grigas, Paul},
  journal={Management Science},
  volume={68},
  number={1},
  pages={9--26},
  year={2022},
  publisher={INFORMS},
  doi={10.1287/mnsc.2020.3922},
  url={https://doi.org/10.1287/mnsc.2020.3922}
}

@book{van2006online,
  title={Online stochastic combinatorial optimization},
  author={Van Hentenryck, Pascal and Bent, Russell},
  year={2006},
  publisher={The MIT Press}
}

@article{powell2022designing,
  title={Designing lookahead policies for sequential decision problems in transportation and logistics},
  author={Powell, Warren B},
  journal={IEEE Open Journal of Intelligent Transportation Systems},
  volume={3},
  pages={313--327},
  year={2022},
  publisher={IEEE}
}

@article{abbasi2011improved,
  title={Improved algorithms for linear stochastic bandits},
  author={Abbasi-Yadkori, Yasin and P{\'a}l, D{\'a}vid and Szepesv{\'a}ri, Csaba},
  journal={Advances in neural information processing systems},
  volume={24},
  year={2011}
}

@inproceedings{dinh2017density,
  title={Density Estimation using Real NVP},
  author={Dinh, Laurent and Sohl-Dickstein, Jascha and Bengio, Samy},
  booktitle={International Conference on Learning Representations},
  year={2017}
}

@article{ho2020denoising,
  title={Denoising Diffusion Probabilistic Models},
  author={Ho, Jonathan and Jain, Ajay and Abbeel, Pieter},
  journal={Advances in Neural Information Processing Systems},
  volume={33},
  pages={6840--6851},
  year={2020}
}

@inproceedings{li2010contextual,
  title={A contextual-bandit approach to personalized news article recommendation},
  author={Li, Lihong and Chu, Wei and Langford, John and Schapire, Robert E},
  booktitle={Proceedings of the 19th international conference on World wide web},
  pages={661--670},
  year={2010}
}

@misc{liu2022online,
  title={Online Contextual Decision-Making with a Smart Predict-then-Optimize Method},
  author={Liu, Heyuan and Grigas, Paul},
  year={2022},
  eprint={2206.07316},
  archivePrefix={arXiv},
  primaryClass={cs.LG},
  doi={10.48550/arXiv.2206.07316},
  url={https://arxiv.org/abs/2206.07316}
}

@inproceedings{berthet2020learning,
  title={Learning with Differentiable Perturbed Optimizers},
  author={Berthet, Quentin and Blondel, Mathieu and Teboul, Olivier and Cuturi, Marco and Vert, Jean-Philippe and Bach, Francis},
  booktitle={Advances in Neural Information Processing Systems},
  volume={33},
  pages={9508--9519},
  year={2020}
}

@article{hu2024contextual,
  title={Contextual linear optimization with bandit feedback},
  author={Hu, Yichun and Kallus, Nathan and Mao, Xiaojie and Wu, Yanchen},
  journal={Advances in Neural Information Processing Systems},
  volume={37},
  pages={26976--27004},
  year={2024}
}

@article{hu2025contextual,
  title={Contextual Linear Optimization under Partial Feedback},
  author={Hu, Yichun and Kallus, Nathan and Mao, Xiaojie and Wu, Yanchen},
  journal={arXiv preprint arXiv:2405.16564},
  year={2025}
}

@inproceedings{flaxman2005online,
author = {Flaxman, Abraham D. and Kalai, Adam Tauman and McMahan, H. Brendan},
title = {Online convex optimization in the bandit setting: gradient descent without a gradient},
year = {2005},
isbn = {0898715857},
publisher = {Society for Industrial and Applied Mathematics},
address = {USA},
booktitle = {Proceedings of the Sixteenth Annual ACM-SIAM Symposium on Discrete Algorithms},
pages = {385--394},
numpages = {10},
location = {Vancouver, British Columbia},
series = {SODA '05}
}

@article{sadana2025survey,
  title={A survey of contextual optimization methods for decision-making under uncertainty},
  author={Sadana, Utsav and Chenreddy, Abhilash and Delage, Erick and Forel, Alexandre and Frejinger, Emma and Vidal, Thibaut},
  journal={European Journal of Operational Research},
  volume={320},
  number={2},
  pages={271--289},
  year={2025},
  publisher={Elsevier}
}

@misc{orabona2019modern,
      title={A Modern Introduction to Online Learning}, 
      author={Francesco Orabona},
      year={2019},
      eprint={1912.13213},
      archivePrefix={arXiv},
      primaryClass={cs.LG},
      url={https://arxiv.org/abs/1912.13213}, 
}

@inproceedings{zinkevich2003online,
  title={Online convex programming and generalized infinitesimal gradient ascent},
  author={Zinkevich, Martin},
  booktitle={Proceedings of the 20th international conference on machine learning (icml-03)},
  pages={928--936},
  year={2003}
}

@article{gupta2024decision,
  title={Decision-focused learning with directional gradients},
  author={Gupta, Vishal and Huang, Michael},
  journal={Advances in Neural Information Processing Systems},
  volume={37},
  pages={79194--79220},
  year={2024}
}

@article{bubeck2012regretanalysisstochasticnonstochastic,
  title={Regret Analysis of Stochastic and Nonstochastic Multi-armed Bandit Problems},
  author={Bubeck, S{\'e}bastien and Cesa-Bianchi, Nicol{\`o}},
  journal={Foundations and Trends in Machine Learning},
  volume={5},
  number={1},
  pages={1--122},
  year={2012},
  doi={10.1561/2200000024},
  url={https://doi.org/10.1561/2200000024}
}

@inproceedings{sokolov2016stochastic,
  title={Stochastic Structured Prediction under Bandit Feedback},
  author={Sokolov, Artem and Kreutzer, Julia and Riezler, Stefan and Lo, Christopher},
  booktitle={Advances in Neural Information Processing Systems},
  year={2016}
}

@inproceedings{kreutzer2017bandit,
  title={Bandit Structured Prediction for Neural Sequence-to-Sequence Learning},
  author={Kreutzer, Julia and Sokolov, Artem and Riezler, Stefan},
  booktitle={Proceedings of the 55th Annual Meeting of the Association for Computational Linguistics},
  pages={1503--1513},
  year={2017}
}

@inproceedings{zhu2022contextual,
  title={Contextual Bandits with Large Action Spaces: Made Practical},
  author={Zhu, Yinglun and Foster, Dylan J. and Langford, John and Mineiro, Paul},
  booktitle={Proceedings of the 39th International Conference on Machine Learning},
  series={Proceedings of Machine Learning Research},
  volume={162},
  pages={27428--27453},
  year={2022}
}

@inproceedings{krishnamurthy2016contextual,
  title={Contextual Semibandits via Supervised Learning Oracles},
  author={Krishnamurthy, Akshay and Agarwal, Alekh and Dudik, Miroslav},
  booktitle={Advances in Neural Information Processing Systems},
  year={2016}
}

@inproceedings{zierahn2023nonstochastic,
  title={Nonstochastic Contextual Combinatorial Bandits},
  author={Zierahn, Lukas and van der Hoeven, Dirk and Cesa-Bianchi, Nicolo and Neu, Gergely},
  booktitle={Proceedings of The 26th International Conference on Artificial Intelligence and Statistics},
  series={Proceedings of Machine Learning Research},
  volume={206},
  pages={8771--8813},
  year={2023}
}

@article{williams1992simple,
  title={Simple Statistical Gradient-Following Algorithms for Connectionist Reinforcement Learning},
  author={Williams, Ronald J.},
  journal={Machine Learning},
  volume={8},
  pages={229--256},
  year={1992}
}

@inproceedings{sutton2000policy,
  title={Policy Gradient Methods for Reinforcement Learning with Function Approximation},
  author={Sutton, Richard S. and McAllester, David and Singh, Satinder and Mansour, Yishay},
  booktitle={Advances in Neural Information Processing Systems},
  year={2000}
}

@inproceedings{konda2000actor,
  title={Actor-Critic Algorithms},
  author={Konda, Vijay R. and Tsitsiklis, John N.},
  booktitle={Advances in Neural Information Processing Systems},
  year={2000}
}

@inproceedings{dani2008stochastic,
  title={Stochastic Linear Optimization under Bandit Feedback},
  author={Dani, Varsha and Hayes, Thomas P. and Kakade, Sham M.},
  booktitle={Proceedings of the 21st Annual Conference on Learning Theory},
  pages={355--366},
  year={2008}
}

@inproceedings{abernethy2008competing,
  title={Competing in the Dark: An Efficient Algorithm for Bandit Linear Optimization},
  author={Abernethy, Jacob and Hazan, Elad and Rakhlin, Alexander},
  booktitle={Proceedings of the 21st Annual Conference on Learning Theory},
  pages={263--273},
  year={2008}
}

@inproceedings{ito2019oracle,
  title={Oracle-Efficient Algorithms for Online Linear Optimization with Bandit Feedback},
  author={Ito, Shinji and Hatano, Daisuke and Sumita, Hanna and Takemura, Kei and Fukunaga, Takuro and Kakimura, Naonori and Kawarabayashi, Ken-Ichi},
  booktitle={Advances in Neural Information Processing Systems},
  volume={32},
  year={2019}
}

@article{silvestri2026score,
  title={Score function gradient estimation to widen the applicability of decision-focused learning},
  author={Silvestri, Mattia and Berden, Senne and Signorelli, Gaetano and Mahmuto{\u{g}}ullar{\i}, Ali {\.I}rfan and Mandi, Jayanta and Amos, Brandon and Guns, Tias and Lombardi, Michele},
  journal={Journal of Artificial Intelligence Research},
  volume={85},
  pages={1--34},
  year={2026},
  doi={10.1613/jair.1.19498},
  url={https://doi.org/10.1613/jair.1.19498}
}

@inproceedings{wang2025gendfl,
  title={Gen-{DFL}: Decision-Focused Generative Learning for Robust Decision Making},
  author={Wang, Prince Zizhuang and Chen, Shuyi and Liang, Jinhao and Fioretto, Ferdinando and Zhu, Shixiang},
  booktitle={International Conference on Learning Representations},
  year={2026},
  url={https://openreview.net/forum?id=GU2197a3Lm}
}

@inproceedings{zhao2026diffusion,
  title={Diffusion-{DFL}: Decision-focused Diffusion Models for Stochastic Optimization},
  author={Zhao, Zihao and Yeh, Christopher and Kong, Lingkai and Wang, Kai},
  booktitle={International Conference on Learning Representations},
  year={2026},
  url={https://openreview.net/forum?id=uhv3f80jmG}
}

@article{tang2024pyepo,
  title={{PyEPO}: A {PyTorch}-based End-to-End Predict-then-Optimize Library for Linear and Integer Programming},
  author={Tang, Bo and Khalil, Elias B.},
  journal={Mathematical Programming Computation},
  volume={16},
  pages={297--335},
  year={2024},
  doi={10.1007/s12532-024-00255-x},
  url={https://doi.org/10.1007/s12532-024-00255-x}
}

@misc{bennouna2025data,
  title={What Data Enables Optimal Decisions? An Exact Characterization for Linear Optimization},
  author={Bennouna, Omar and Bennouna, Amine and Amin, Saurabh and Ozdaglar, Asuman},
  year={2025},
  eprint={2505.21692},
  archivePrefix={arXiv},
  primaryClass={math.OC},
  doi={10.48550/arXiv.2505.21692},
  url={https://arxiv.org/abs/2505.21692}
}

@misc{bennouna2026informativeness,
  title={Data Informativeness in Linear Optimization under Uncertainty},
  author={Bennouna, Omar and Bennouna, Amine and Amin, Saurabh and Ozdaglar, Asuman},
  year={2026},
  eprint={2602.15365},
  archivePrefix={arXiv},
  primaryClass={math.OC},
  doi={10.48550/arXiv.2602.15365},
  url={https://arxiv.org/abs/2602.15365}
}

@book{sutton2018reinforcement,
  title={Reinforcement Learning: An Introduction},
  author={Sutton, Richard S. and Barto, Andrew G.},
  edition={2},
  year={2018},
  publisher={MIT Press},
  url={https://mitpress.mit.edu/9780262039246/reinforcement-learning/}
}

@article{ghadimi2013stochastic,
  title={Stochastic first-and zeroth-order methods for nonconvex stochastic programming},
  author={Ghadimi, Saeed and Lan, Guanghui},
  journal={SIAM journal on optimization},
  volume={23},
  number={4},
  pages={2341--2368},
  year={2013},
  publisher={SIAM}
}

@inproceedings{vlastelica2020differentiation,
  title={Differentiation of Blackbox Combinatorial Solvers},
  author={Vlastelica, Marin and Paulus, Anselm and Musil, V{\'\i}t and Martius, Georg and Rol{\'\i}nek, Michal},
  booktitle={International Conference on Learning Representations (ICLR)},
  year={2020}
}

@inproceedings{niepert2021implicit,
  title={Implicit {MLE}: Backpropagating Through Discrete Exponential Family Distributions},
  author={Niepert, Mathias and Minervini, Pasquale and Franceschi, Luca},
  booktitle={Advances in Neural Information Processing Systems (NeurIPS)},
  year={2021}
}

@inproceedings{minervini2023adaptive,
  title={Adaptive Perturbation-Based Gradient Estimation for Discrete Latent Variable Models},
  author={Minervini, Pasquale and Franceschi, Luca and Niepert, Mathias},
  booktitle={Proceedings of the AAAI Conference on Artificial Intelligence},
  year={2023}
}

@inproceedings{sahoo2023backpropagation,
  title={Backpropagation through Combinatorial Algorithms: Identity with Projection Works},
  author={Sahoo, Subham Sekhar and Paulus, Anselm and Vlastelica, Marin and Musil, V{\'\i}t and Kuleshov, Volodymyr and Martius, Georg},
  booktitle={International Conference on Learning Representations (ICLR)},
  year={2023}
}

@inproceedings{mulamba2021contrastive,
  title={Contrastive Losses and Solution Caching for Predict-and-Optimize},
  author={Mulamba, Maxime and Mandi, Jayanta and Diligenti, Michelangelo and Lombardi, Michele and Bucarey, V{\'\i}ctor and Guns, Tias},
  booktitle={Proceedings of the International Joint Conference on Artificial Intelligence (IJCAI)},
  year={2021}
}

@inproceedings{mandi2022decision,
  title={Decision-Focused Learning: Through the Lens of Learning to Rank},
  author={Mandi, Jayanta and Bucarey, V{\'\i}ctor and Tchomba, Maxime Mulamba Ke and Guns, Tias},
  booktitle={Proceedings of the International Conference on Machine Learning (ICML)},
  year={2022}
}

\appendix
\section{Partial feedback structure examples}\label{apdx:feedback_examples}
The feedback structure in Assumption~\ref{ass:feedback} is flexible enough to capture several practically important settings. We highlight the three main instances considered in this paper.

\begin{itemize}
    \item \textbf{Full feedback.} Set $H(w) = I$ and $e(w) = w$. The feedback coincides with the cost vector, $v = c$, and the linkage identity $c^\top w = e(w)^\top v$ holds trivially.
    \item \textbf{Bandit feedback.} Set $H(w)=w^{\top}$ and $e(w)=1$. Then $v=c^{\top}w$ is the scalar realized cost, and the linkage identity reduces to $c^{\top}w = e(w)^{\top}v$, which holds trivially. 
    \item \textbf{Semi-bandit / combinatorial feedback.} Let $\calS \subseteq \{0,1\}^d$. Set $H(w)=\mathrm{diag}(w)$ and $e(w)=\mathbf{1}$. Then the feedback $v = \mathrm{diag}(w)c$ records the cost of each selected coordinate. The linkage condition holds since $H(w)^\top e(w) = \mathrm{diag}(w)\mathbf{1} = w$, and the identity $c^\top w = e(w)^\top v$ follows immediately.
\end{itemize}

\section{Regularity of the perturbed optimizer and score function representation}\label{apdx:perturbed_optimizer}
We consider the perturbed optimizers
$w^\star_\theta(x) := \E_{\hat c\sim p_\theta(\cdot \mid x)}\!\bigl[w^\star(\hat c)\bigr]$. The smoothing induced by $p_\theta$ makes $ w^\star_\theta(x)$ differentiable in $\theta$ even though the underlying optimizer $\hat c\mapsto w^\star(\hat c)$ is piecewise constant. The following assumptions and supporting results formalize this differentiability and the resulting regularity of the policy value $\Vcost$.

\begin{assumption}[Bounded feasible set and costs]\label{ass:bounded}
$\calS$ is compact with $B_{\calS} := \sup_{w \in \calS}\|w\| < \infty$, and there exists $B_c > 0$ such that $\|c_t\| \le B_c$ almost surely.
\end{assumption}

\begin{assumption}[Smooth parametric family]\label{ass:score-bound}
The parametric family $p_\theta(\cdot \mid x)$ is twice continuously differentiable in $\theta$ for all $x \in \calX$, and there exist constants $M, M_\nabla < \infty$ such that for all $\theta, \theta' \in \R^{d_\theta}$ and $x \in \calX$,
\begin{align}
\int \|\nabla_\theta p_\theta(\hat c\mid x)\|\, \mathrm{d} \hat c
  &\le M \label{eq:density-grad-bound} \\
\int \|\nabla_\theta p_\theta(\hat c\mid x) - \nabla_{\theta} p_{\theta'}(\hat c\mid x)\|\, \mathrm{d} \hat c
  &\le M_\nabla \|\theta - \theta'\| \label{eq:density-grad-lip}
\end{align}
\end{assumption}
Conditions~\eqref{eq:density-grad-bound}--\eqref{eq:density-grad-lip} bound the parametric family in $L^1$ on the gradient and its variation, and are satisfied by all common smooth parameterizations, e.g., Gaussian families with smoothly parameterized mean and covariance, or exponential families with bounded sufficient statistics. They ensure that $\theta \mapsto \nabla_\theta w^\star_\theta(x)$ is itself Lipschitz, which is needed to establish $L_\theta$-smoothness of $\Vcost$ in Lemma~\ref{lem:J-reg}(ii).

\begin{assumption}[Reparameterization]\label{ass:reparam}
Under $p_\theta(\cdot \mid x)$, we assume the reparameterization $\hat c= \mu_\theta(x, \varepsilon)$ with $\varepsilon \sim \nu$ independent of $\theta$. The map $\mu_\theta$ is continuously differentiable in $\theta$ with $\|\nabla_\theta \mu_\theta(x, \varepsilon)\| \le G_\mu$ almost surely for all $\theta \in \R^{d_\theta}$ and $x \in \calX$, and there exists $L_\mu < \infty$ such that $\|\nabla_\theta \mu_\theta(x, \varepsilon) - \nabla_\theta \mu_{\theta'}(x, \varepsilon)\| \le L_\mu \|\theta - \theta'\|$ almost surely. 
\end{assumption}

\begin{corollary}[Bounded and Lipschitz mean-decision gradient]\label{cor:smooth-decision}
Under Assumptions~\ref{ass:bounded} and~\ref{ass:score-bound}, $\theta \mapsto  w^\star_\theta(x)$ is differentiable for every $x \in \calX$, with the score function representation
\begin{equation}\label{eq:F-jacobian}
  \nabla_\theta  w^\star_\theta(x)
  = \E_{\hat c\sim p_\theta(\cdot \mid x)}\!\bigl[w^\star(\hat c)\,\nabla_\theta \log p_\theta(\hat c\mid x)^\top\bigr].
\end{equation}
Moreover, the Jacobian satisfies the bound
\begin{equation}\label{eq:F-jacobian-bound}
  \|\nabla_\theta  w^\star_\theta(x)\| \;\le\; B_{\nabla w^\star_\theta} := B_{\calS}\, M, \qquad \forall \theta \in \R^{d_\theta},\ x \in \calX,
\end{equation}
and is $L_w$-Lipschitz in $\theta$ with $L_w := B_{\calS}\, M_\nabla$:
\begin{equation}\label{eq:F-jacobian-lip}
  \|\nabla_\theta  w^\star_\theta(x) - \nabla_\theta  w^\star_{\theta'}(x)\|
  \;\le\; L_w \, \|\theta - \theta'\|,
  \qquad \forall \theta, \theta' \in \R^{d_\theta},\ x \in \calX.
\end{equation}
\end{corollary}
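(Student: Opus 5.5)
The plan is to write $w^\star_\theta(x)$ as an integral against the density and differentiate under the integral sign. Fix $x\in\calX$ and write
\[
w^\star_\theta(x)=\int w^\star(\hat c)\,p_\theta(\hat c\mid x)\,\mathrm{d}\hat c .
\]
Since $w^\star(\hat c)\in\calS$, Assumption~\ref{ass:bounded} gives $\|w^\star(\hat c)\|\le B_{\calS}$ for every $\hat c$. The integrand is measurable because $w^\star$ is a fixed selection of the argmin, and we may assume this selection is measurable (for instance lexicographic tie-breaking).

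To justify differentiation, consider $\theta$ and $\theta+h$. By the mean value theorem in integral form,
\[
p_{\theta+h}-p_\theta=\int_0^1 \nabla_\theta p_{\theta+sh}^\top h\,\mathrm{d}s .
\]
Integrating against $w^\star$ and applying Fubini is legitimate because
\[
\int\!\!\int_0^1 \|\nabla p_{\theta+sh}\|\,\|h\|\,\mathrm{d}s\,\mathrm{d}\hat c\le M\|h\|
\]
by \eqref{eq:density-grad-bound}. This gives
\[
w^\star_{\theta+h}(x)-w^\star_\theta(x)=\int_0^1\!\int w^\star(\hat c)\,\nabla_\theta p_{\theta+sh}(\hat c\mid x)^\top h\,\mathrm{d}\hat c\,\mathrm{d}s .
\]
Define the candidate Jacobian $A(\theta):=\int w^\star(\hat c)\nabla_\theta p_\theta(\hat c\mid x)^\top\mathrm{d}\hat c$. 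By \eqref{eq:density-grad-lip},
\[
\|A(\theta+sh)-A(\theta)\|\le B_{\calS}M_\nabla s\|h\| .
\]
Hence the remainder $w^\star_{\theta+h}(x)-w^\star_\theta(x)-A(\theta)h$ has norm at most $\tfrac12 B_{\calS}M_\nabla\|h\|^2=o(\|h\|)$. This establishes (Fréchet) differentiability with $\nabla_\theta w^\star_\theta(x)=A(\theta)$.

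The score form \eqref{eq:F-jacobian} follows by writing $\nabla_\theta p_\theta=p_\theta\nabla_\theta\log p_\theta$ on $\{p_\theta>0\}$. On $\{p_\theta=0\}$ we have $\nabla_\theta p_\theta=0$, because $p\ge0$ is $C^1$ and attains its minimum there, so this set contributes nothing. I expect this measure-theoretic bookkeeping, namely the interchange and the zero-density set, to be the only delicate step. The quantitative Lipschitz bound \eqref{eq:density-grad-lip} handles it cleanly, without needing a separate dominating function.

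The two bounds are then immediate, using $\|w v^\top\|\le\|w\|\|v\|$ for operator (or Frobenius) norms:
\[
\|A(\theta)\|\le\int \|w^\star(\hat c)\|\,\|\nabla_\theta p_\theta\|\,\mathrm{d}\hat c\le B_{\calS}M,
\]
which is \eqref{eq:F-jacobian-bound}. Similarly,
\[
\|A(\theta)-A(\theta')\|\le B_{\calS}\int\|\nabla_\theta p_\theta-\nabla_\theta p_{\theta'}\|\,\mathrm{d}\hat c\le B_{\calS}M_\nabla\|\theta-\theta'\|,
\]
which is \eqref{eq:F-jacobian-lip} with $L_w=B_{\calS}M_\nabla$. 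All constants are uniform in $x$ and $\theta$ because the assumptions are.
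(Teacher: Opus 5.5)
Your proof is correct and follows the same route as the paper. Both write $w^\star_\theta(x)$ as an integral against $p_\theta$, differentiate under the integral sign, pass to the score form via $\nabla_\theta p_\theta = p_\theta\nabla_\theta\log p_\theta$, and obtain both bounds by the triangle inequality with $\|w^\star(\hat c)\|\le B_{\calS}$ and \eqref{eq:density-grad-bound}--\eqref{eq:density-grad-lip}. The only difference is the justification of the interchange, where the paper simply cites dominated convergence. Your integral mean-value/Fubini argument, combined with the $L^1$-Lipschitz condition and your handling of the set $\{p_\theta=0\}$, is the more careful version. This matters because the stated assumptions bound $\int\|\nabla_\theta p_\theta\|\,\mathrm{d}\hat c$ at each fixed $\theta$, not by an explicit dominating function.
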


\begin{proof}
Under Assumption~\ref{ass:score-bound}, dominated convergence applied to $ w^\star_\theta(x) = \int w^\star(\hat c)\, p_\theta(\hat c\mid x)\, d\hat c$ yields
\[
  \nabla_\theta  w^\star_\theta(x)
  = \int w^\star(\hat c)\,\nabla_\theta p_\theta(\hat c\mid x)\, d\hat c
  = \E_{\hat c\sim p_\theta(\cdot \mid x)}\!\bigl[w^\star(\hat c)\,\nabla_\theta \log p_\theta(\hat c\mid x)^\top\bigr],
\]
where the second equality uses the log-derivative identity $\nabla_\theta p_\theta = p_\theta\,\nabla_\theta \log p_\theta$.

\emph{Boundedness.} Applying the triangle inequality to~\eqref{eq:F-jacobian} in integral form, together with $\|w^\star(\hat c)\| \le B_{\calS}$ (Assumption~\ref{ass:bounded}) and~\eqref{eq:density-grad-bound},
\begin{align*}
  \|\nabla_\theta w^\star_\theta(x)\|
  \;\le\; \int \|w^\star(\hat c)\|\,\|\nabla_\theta p_\theta(\hat c\mid x)\|\,\mathrm{d}\hat c
  \;\le\; B_{\calS}\, \int \|\nabla_\theta p_\theta(\hat c\mid x)\|\,\mathrm{d}\hat c
  \;\le\; B_{\calS}\, M,
\end{align*}

\emph{Lipschitz continuity.} Subtracting the integral representations at $\theta$ and $\theta'$,
\[
  \nabla_\theta w^\star_\theta(x) - \nabla_\theta w^\star_{\theta'}(x)
  = \int w^\star(\hat c)\,\bigl(\nabla_\theta p_\theta(\hat c\mid x) - \nabla_{\theta} p_{\theta'}(\hat c\mid x)\bigr)^\top\,\mathrm{d}\hat c.
\]
Applying the triangle inequality, $\|w^\star(\hat c)\| \le B_{\calS}$, and~\eqref{eq:density-grad-lip} yields
\[
  \|\nabla_\theta w^\star_\theta(x) - \nabla_\theta w^\star_{\theta'}(x)\|
  \;\le\; B_{\calS}\,\int \|\nabla_\theta p_\theta(\hat c\mid x) - \nabla_{\theta} p_{\theta'}(\hat c\mid x)\|\,\mathrm{d}\hat c
  \;\le\; B_{\calS}\, M_\nabla\,\|\theta - \theta'\|.
\]
Setting $L_w := B_{\calS}\, M_\nabla$ yields~\eqref{eq:F-jacobian-lip}.
\end{proof}

\section{Online regret analysis of the nuisance estimator }\label{apdx:nuisance_analysis}
The nuisance update can be analyzed as an online convex optimization problem independently of the policy sequence $\{\theta_t\}_{t=1}^T$. Rather than tracking the moving per-iteration minimizer $\phi^\star_{\theta_t} =\argmin_{\phi \in \Phi}\RNE_{\theta}(f_\phi)$, we compare against the best fixed $\phi^\star \in \Phi$ in hindsight, a valid comparison since $f^\star$ is the universal minimizer of $\RNE_\theta$ for every $\theta$ simultaneously (Remark~\ref{rem:universal_min}). To analyze the nuisance update via online convex optimization arguments, we make the following assumptions, standard in the OCO literature~\citep{orabona2019modern}.

\begin{assumption}[Realizability]\label{ass:realizable}
There exists $\phi^\star \in \Phi$ such that $f_{\phi^\star} = f^\star$ (so $f^\star \in \calF$ with an explicit parameter-space comparator).
\end{assumption}

\begin{assumption}\label{ass:nuisance}
\begin{enumerate}
  \item[(i)] $\Phi$ is convex and compact with diameter $D_{\Phi} := \sup_{\phi,\phi'\in\Phi}\|\phi-\phi'\| < \infty$.
  \item[(ii)] There exists $G_\RNE<\infty$ such that $\|\nabla_{\phi}\ell_{t}(\phi)\|\le G_\RNE$ almost surely for all $t$ and $\phi\in\Phi$.
  \item[(iii)] There exists $\lambda > 0$ such that, for every $\theta \in \R^{d_\theta}$, $\nabla^2_\phi \RNE_\theta(f_\phi)
    \;\succeq\; \lambda\, I_{d}.$
\end{enumerate}
\end{assumption}
Assumption~\ref{ass:nuisance}(i)--(ii) are standard online convex optimization assumptions, bounding the feasible set diameter and gradient magnitudes respectively. Assumption~\ref{ass:nuisance}(iii) requires the nuisance objective to be strongly convex in $\phi$, uniformly over $\theta$; this holds, for example, for linear nuisance classes under standard feature non-degeneracy conditions. We first restate Proposition~\ref{prop:nuisance-regret} and give its proof.

\begin{proposition}[Nuisance regret]
Under Assumptions~\ref{ass:realizable}--\ref{ass:nuisance}, for any sequence $\{\theta_t\}_{t=1}^T$ and any fixed comparator $\phi^\star\in\Phi$:
\begin{enumerate}
  \item[\textup{(i)}] With $\eta_t=D_\Phi/(G_\RNE\sqrt{T})$
    $$\frac{1}{T}\sum_{t=1}^T
      \E\!\bigl[\RNE_{\theta_t}(f_{\phi_t})
              -\RNE_{\theta_t}(f_{\phi^\star})\bigr]
      \;\le\; \frac{D_\Phi G_\RNE}{\sqrt{T}}$$
  \item[\textup{(ii)}] With $\eta_t=1/(\lambda t)$
    $$\frac{1}{T}\sum_{t=1}^T
      \E\!\bigl[\RNE_{\theta_t}(f_{\phi_t})
              -\RNE_{\theta_t}(f_{\phi^\star})\bigr]
      \;\le\; \frac{G_\RNE^2(1+\log T)}{2\lambda T}$$
\end{enumerate}
\end{proposition}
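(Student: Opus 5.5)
The plan is to treat the nuisance update as projected online gradient descent on the sequence of convex losses $F_t(\phi) := \RNE_{\theta_t}(f_\phi)$. We compare against the fixed point $\phi^\star$ and use stochastic gradients $\nabla_\phi \ell_t(\phi_t)$. The key observation is a conditional-unbiasedness step. Let $\calF_{t-1}$ be the history up to the start of round $t$. Then $\phi_t$ and $\theta_t$ are $\calF_{t-1}$-measurable, while $(x_t,c_t,\hat c_t)$ is fresh, with $(x_t,c_t)\sim\calD$ i.i.d. and $\hat c_t\sim p_{\theta_t}(\cdot\mid x_t)$. Hence $(x_t,c_t,w_t)\sim\calD_{\theta_t}$ conditionally on $\calF_{t-1}$, and
\[
\E\bigl[\nabla_\phi \ell_t(\phi_t)\mid \calF_{t-1}\bigr] = \nabla_\phi F_t(\phi_t),
\]
after exchanging gradient and expectation (justified by the bounded gradients in Assumption~\ref{ass:nuisance}(ii)). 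Realizability (Assumption~\ref{ass:realizable}) together with Remark~\ref{rem:universal_min} guarantees that $\phi^\star$ minimizes every $F_t$ simultaneously. We do not need this fact for the regret inequality itself. It is what makes the left-hand side a nonnegative excess loss, which is later used as a bias proxy.

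Next we run the standard one-step projected-OGD inequality. Nonexpansiveness of $\Pi_\Phi$ and $\phi^\star\in\Phi$ give
\[
\|\phi_{t+1}-\phi^\star\|^2 \le \|\phi_t-\phi^\star\|^2 - 2\eta_t\langle \nabla_\phi\ell_t(\phi_t),\phi_t-\phi^\star\rangle + \eta_t^2 G_\RNE^2 .
\]
Taking conditional expectations replaces the stochastic gradient by $\nabla F_t(\phi_t)$. We then lower bound the inner product in one of two ways:
\begin{itemize}
\item convexity, $\langle \nabla F_t(\phi_t),\phi_t-\phi^\star\rangle \ge F_t(\phi_t)-F_t(\phi^\star)$, for (i);
\item $\lambda$-strong convexity (Assumption~\ref{ass:nuisance}(iii)), $\langle \nabla F_t(\phi_t),\phi_t-\phi^\star\rangle \ge F_t(\phi_t)-F_t(\phi^\star)+\tfrac{\lambda}{2}\|\phi_t-\phi^\star\|^2$, for (ii).
\end{itemize}
Rearranging and taking total expectations gives a per-round bound on $\E[F_t(\phi_t)-F_t(\phi^\star)]$.

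For (i) we use the constant step $\eta$. Summing telescopes the distance terms, giving total regret at most $D_\Phi^2/(2\eta) + \eta T G_\RNE^2/2$. With $\eta=D_\Phi/(G_\RNE\sqrt T)$ this equals $D_\Phi G_\RNE\sqrt T$, and dividing by $T$ gives the claim.

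For (ii) we use $\eta_t=1/(\lambda t)$. The coefficient of $\E\|\phi_t-\phi^\star\|^2$ after summation is $\tfrac{1}{2\eta_t}-\tfrac{1}{2\eta_{t-1}}-\tfrac{\lambda}{2} = \tfrac{\lambda t}{2}-\tfrac{\lambda(t-1)}{2}-\tfrac{\lambda}{2}=0$, so the distance terms cancel. The first-round term is $(\tfrac{\lambda}{2}-\tfrac{\lambda}{2})\|\phi_1-\phi^\star\|^2=0$. What remains is $\sum_t \tfrac{G_\RNE^2}{2\lambda t}\le \tfrac{G_\RNE^2(1+\log T)}{2\lambda}$, which is the Hazan-type logarithmic regret bound.

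The main obstacle is the conditioning step. The loss $F_t$ is itself random because it depends on $\theta_t$, which depends on the past, and the adversary is adaptive. The argument must apply convexity and strong convexity to $F_t$ conditionally on $\calF_{t-1}$, where $F_t$ is a fixed deterministic convex function. We must also check that strong convexity holds uniformly in $\theta$, which Assumption~\ref{ass:nuisance}(iii) provides, so the $\lambda$-cancellation survives the randomness of $\theta_t$. Because the comparator $\phi^\star$ does not depend on $\theta$, no tracking or dynamic-regret term appears. That independence is exactly the role of the universal-minimizer property.
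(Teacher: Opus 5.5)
Your proposal is correct and follows essentially the same route as the paper: projected stochastic gradient descent analyzed as online convex optimization against the fixed comparator $\phi^\star$, with plain convexity and a constant step for (i), and strong convexity with $\eta_t = 1/(\lambda t)$ so that the distance terms telescope for (ii). Your explicit filtration argument (conditioning on $\calF_{t-1}$ so that $F_t$ is a fixed convex function and the stochastic gradient is conditionally unbiased) makes rigorous the paper's brief step of taking expectations and using unbiasedness.
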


\begin{proof}
Both bounds follow from standard online convex optimization arguments \citep{orabona2019modern}; we include the derivations for completeness.
By non-expansiveness of the Euclidean projection $\Pi_\Phi$ and the update rule $\phi_{t+1}=\Pi_\Phi(\phi_t-\eta_t g_t)$,
\begin{equation}\label{eq:descent-ne}
  \|\phi_{t+1}-\phi^\star\|^2
  \le \|\phi_t-\eta_t g_t-\phi^\star\|^2
  = \|\phi_t-\phi^\star\|^2
    - 2\eta_t\langle g_t,\phi_t-\phi^\star\rangle
    + \eta_t^2\|g_t\|^2
\end{equation}
Taking expectations, using unbiasedness of $g_t$ and convexity of $\RNE_{\theta_t}$ in $\phi$,
$$\E\bigl[\langle\nabla_\phi\RNE_{\theta_t}(f_{\phi_t}),\phi_t-\phi^\star\rangle\bigr]
  \ge
  \E\bigl[\RNE_{\theta_t}(f_{\phi_t}) - \RNE_{\theta_t}(f_{\phi^\star})\bigr] $$
Substituting into~\eqref{eq:descent-ne} and rearranging,
$$2\eta_t\,\E\bigl[\RNE_{\theta_t}(f_{\phi_t})-\RNE_{\theta_t}(f_{\phi^\star})\bigr]
  \le
  \E\bigl[\|\phi_t-\phi^\star\|^2\bigr]
  - \E\bigl[\|\phi_{t+1}-\phi^\star\|^2\bigr]
  + \eta_t^2 G_{\RNE}^2 $$
Summing over $t=1,\dots,T$, using a constant step-size $\eta_t=\eta$ and telescoping,
$$2\eta\sum_{t=1}^T
  \E\bigl[\RNE_{\theta_t}(f_{\phi_t})-\RNE_{\theta_t}(f_{\phi^\star})\bigr]
  \le D_\Phi^2 + \eta^2 G_{\RNE}^2 T$$
Setting $\eta=D_\Phi/(G_{\RNE}\sqrt{T})$ yields the claimed bound.
The bound (ii) follows from standard strong-convexity arguments in online convex optimization.
When $\RNE_{\theta_t}$ is $\lambda$-strongly convex in $\phi$,
\begin{equation}\label{eq:sc-subgrad}
  \E\bigl[\RNE_{\theta_t}(f_{\phi_t})-\RNE_{\theta_t}(f_{\phi^\star})\bigr]
  \le
  \E\bigl[\langle g_t,\phi_t-\phi^\star\rangle\bigr]
  - \frac{\lambda}{2}\E\bigl[\|\phi_t-\phi^\star\|^2\bigr]
\end{equation}
Non-expansiveness of $\Pi_\Phi$ gives
$\E[\langle g_t,\phi_t-\phi^\star\rangle] \le
  \tfrac{1}{2\eta_t}(\E[\|\phi_t-\phi^\star\|^2] - \E[\|\phi_{t+1}-\phi^\star\|^2])
  + \tfrac{\eta_t G_{\RNE}^2}{2}$
Substituting into~\eqref{eq:sc-subgrad} and choosing $\eta_t = 1/(\lambda t)$,
\[\E\bigl[\RNE_{\theta_t}(f_{\phi_t})-\RNE_{\theta_t}(f_{\phi^\star})\bigr]
  \le
  \frac{\lambda(t-1)}{2}\E\bigl[\|\phi_t-\phi^\star\|^2\bigr]
  - \frac{\lambda t}{2}\E\bigl[\|\phi_{t+1}-\phi^\star\|^2\bigr]
  + \frac{G_{\RNE}^2}{2\lambda t}\]
Summing telescopes the distance terms (with non-positive boundary contribution), giving
$\sum_{t=1}^T \E[\RNE_{\theta_t}(f_{\phi_t})-\RNE_{\theta_t}(f_{\phi^\star})]
  \le \tfrac{G_{\RNE}^2(1+\log T)}{2\lambda}$
Dividing by $T$ completes the proof.
\end{proof}

\section{Convergence analysis: plug-in hybrid variant}\label{apdx:conv_analysis}
The nuisance regret translates into a bound on the gradient bias by combining strong convexity of $\RNE_\theta(\cdot)$ with the Lipschitz parameterization of $f_\phi(\cdot)$. We define the plug-in bias as
$$B(\phi,\theta) :=J(\phi,\theta)-\Vcost(\theta).$$
Under realizability (Assumption~\ref{ass:realizable}) $$\Vcost(\theta) = J(\phi^\star ,\theta)$$ so equivalently $$B(\phi,\theta) =J(\phi,\theta)-J(\phi^\star,\theta)$$
\begin{assumption}\label{ass:policy-class}
The nuisance estimator function class is bounded and $L_{f_\phi}$-Lipschitz, i.e., there exists $B_{f_\phi} < \infty$ such that $\|f_\phi(x)\| \le B_{f_\phi}$ and $\|f_\phi(x)-f_{\phi'}(x)\| \le L_{f_\phi}\,\|\phi-\phi'\|$ for all $\phi \in \Phi$ and $x \in \calX$.
\end{assumption}

\begin{lemma}[Regularity of $J$]\label{lem:J-reg}
Under Assumptions~\ref{ass:bounded},\ref{ass:policy-class} and Corollary~\ref{cor:smooth-decision},
the plug-in policy objective $J(\phi,\theta)$ satisfies:
\begin{enumerate}
  \item[(i)] $\|\nabla_\theta J(\phi,\theta)\| \le G^J 
             := B_{f_\phi} B_{\nabla w^\star_\theta}$;
  \item[(ii)] $\|\nabla_\theta J(\phi,\theta) - \nabla_\theta J(\phi,\theta')\| \le L^J_\theta\|\theta-\theta'\|$ 
              with $L^J_\theta := B_{f_\phi} L_w = B_{f_\phi}\, B_\calS\, M_\nabla$;
  \item[(iii)] $|J(\phi,\theta)-J(\phi',\theta)| \le L^J_\phi\|\phi-\phi'\|$ 
               with $L^J_\phi := B_\calS L_{f_\phi}$.
\end{enumerate}
\end{lemma}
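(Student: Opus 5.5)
The three claims follow from the integral representation of $J$ together with Corollary~\ref{cor:smooth-decision}. Since $f_\phi(x)$ does not depend on $\theta$, we write
\[
J(\phi,\theta)=\E_{x\sim\calD_x}\bigl[f_\phi(x)^\top w^\star_\theta(x)\bigr].
\]
We first justify differentiating under the outer expectation in $x$. Corollary~\ref{cor:smooth-decision} shows that $\theta\mapsto w^\star_\theta(x)$ is differentiable with Jacobian bounded uniformly in $(\theta,x)$ by $B_{\nabla w^\star_\theta}=B_\calS M$. Assumption~\ref{ass:policy-class} gives $\|f_\phi(x)\|\le B_{f_\phi}$. So the integrand $\theta\mapsto f_\phi(x)^\top w^\star_\theta(x)$ has a gradient dominated by the constant $B_{f_\phi}B_\calS M$. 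Dominated convergence (or a mean-value argument) then yields the gradient formula~\eqref{eq:plug-in-grad}:
\[
\nabla_\theta J(\phi,\theta)=\E_x\bigl[(\nabla_\theta w^\star_\theta(x))^\top f_\phi(x)\bigr].
\]
This interchange is the only step that needs care. Everything after it is a norm inequality.

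\emph{Part (i).} Apply Jensen's inequality to the norm, then the operator-norm bound:
\[
\|\nabla_\theta J\|\le \E_x\bigl[\|\nabla_\theta w^\star_\theta(x)\|\,\|f_\phi(x)\|\bigr]\le B_{\nabla w^\star_\theta}B_{f_\phi}=G^J.
\]

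\emph{Part (ii).} Subtract the gradient representations at $\theta$ and $\theta'$ with the same $\phi$:
\[
\nabla_\theta J(\phi,\theta)-\nabla_\theta J(\phi,\theta')=\E_x\bigl[(\nabla_\theta w^\star_\theta(x)-\nabla_\theta w^\star_{\theta'}(x))^\top f_\phi(x)\bigr].
\]
Bound its norm by $B_{f_\phi}$ times the Jacobian Lipschitz constant $L_w=B_\calS M_\nabla$ from~\eqref{eq:F-jacobian-lip}. This gives $L^J_\theta=B_{f_\phi}B_\calS M_\nabla$.

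\emph{Part (iii).} Here $\theta$ is fixed and no differentiation is needed:
\[
|J(\phi,\theta)-J(\phi',\theta)|=\bigl|\E_x[(f_\phi(x)-f_{\phi'}(x))^\top w^\star_\theta(x)]\bigr|.
\]
Since $w^\star_\theta(x)$ is an average of points of $\calS$, Jensen's inequality gives $\|w^\star_\theta(x)\|\le B_\calS$ by Assumption~\ref{ass:bounded}. Cauchy--Schwarz together with the Lipschitz property in Assumption~\ref{ass:policy-class} then bounds the difference by $B_\calS L_{f_\phi}\|\phi-\phi'\|$.
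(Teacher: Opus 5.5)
Your proposal is correct and follows the paper's proof essentially line by line. You use the representation $\nabla_\theta J(\phi,\theta)=\E_x\bigl[(\nabla_\theta w^\star_\theta(x))^\top f_\phi(x)\bigr]$ with the bounds from Corollary~\ref{cor:smooth-decision} and Assumption~\ref{ass:policy-class} for (i)--(ii), and Cauchy--Schwarz with $\|w^\star_\theta(x)\|\le B_\calS$ for (iii). Your dominated-convergence justification of the interchange and the Jensen argument for $\|w^\star_\theta(x)\|\le B_\calS$ are details the paper leaves implicit, and both are sound.
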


\begin{proof}
Part (i). Differentiating under the expectation, $\nabla_\theta J(\phi,\theta) = \E_x[(\nabla_\theta w^{\star}_{\theta}(x))^{\top}f_\phi(x)]$, so $\|\nabla_\theta J(\phi,\theta)\| \le \E_x[\|\nabla_\theta w^{\star}_{\theta}(x)\|\|f_\phi(x)\|] \le B_{\nabla w^\star_\theta} B_{f_\phi}$, where the last step uses Corollary~\ref{cor:smooth-decision} and Assumption~\ref{ass:policy-class}.

Part (ii). For any $\theta,\theta'\in \R^{d_\theta}$,
\begin{align*}
  \|\nabla_\theta J(\phi,\theta)-\nabla_\theta J(\phi,\theta')\|
  &\le \E_x\bigl[\|f_\phi(x)\|
      \|\nabla_\theta w^{\star}_{\theta}(x)
       - \nabla_\theta w^{\star}_{\theta'}(x)\|\bigr] \\
  &\le B_{f_\phi}\, L_w\,\|\theta-\theta'\|
\end{align*}
where the second inequality uses Assumption~\ref{ass:policy-class} and the Lipschitz property of $\theta\mapsto\nabla_\theta w^{\star}_{\theta}(x)$ established in Corollary~\ref{cor:smooth-decision}. 

Part (iii). For any $\phi,\phi'\in\Phi$,
\begin{align*}
  |J(\phi,\theta)-J(\phi',\theta)| &\le \E_x\bigl[\|f_\phi(x)-f_{\phi'}(x)\|\|w^{\star}_{\theta}(x)\|\bigr] \le B_\calS\, L_{f_\phi}\,\|\phi-\phi'\|.
\end{align*}
\end{proof}

\begin{proposition}[Averaged bias gradient]\label{prop:bias-avg}
Under Assumptions~\ref{ass:realizable},\ref{ass:nuisance},\ref{ass:policy-class}, and Corollary~\ref{cor:smooth-decision},
\begin{enumerate}
  \item[\textup{(i)}] With $\eta_t = D_\Phi/(G_\RNE\sqrt{T})$,
    $$\frac{1}{T}\sum_{t=1}^{T}
      \E\!\bigl[\|\nabla_\theta B(\phi_t,\theta_t)\|^2\bigr]
      \;\le\;
       \frac{2 B_{\nabla w^\star_\theta}^2L_{f_\phi}^2 D_\Phi G_\RNE}{\lambda\sqrt{T}}
      $$
  \item[\textup{(ii)}] With $\eta_t = 1/(\lambda t)$,
    $$ \frac{1}{T}\sum_{t=1}^{T}
      \E\!\bigl[\|\nabla_\theta B(\phi_t,\theta_t)\|^2\bigr]
      \;\le\;  \frac{B_{\nabla w^\star_\theta}^2L_{f_\phi}^2G_\RNE^2(1+\log T)}{\lambda^2 T}$$
\end{enumerate}
\end{proposition}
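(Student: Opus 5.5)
The plan is to bound the bias gradient pointwise by the parameter error $\|\phi_t-\phi^\star\|$, convert that error into excess nuisance loss via strong convexity, and then invoke Proposition~\ref{prop:nuisance-regret}.

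\textbf{Step 1: pointwise bound.} By realizability, $B(\phi,\theta)=J(\phi,\theta)-J(\phi^\star,\theta)$. The plug-in gradient formula \eqref{eq:plug-in-grad} then gives
\[
\nabla_\theta B(\phi,\theta)=\E_x\bigl[(\nabla_\theta w^\star_\theta(x))^\top (f_\phi(x)-f_{\phi^\star}(x))\bigr].
\]
Apply Jensen, the Jacobian bound \eqref{eq:F-jacobian-bound} of Corollary~\ref{cor:smooth-decision}, and the Lipschitz property of Assumption~\ref{ass:policy-class}. This yields, deterministically for every $(\phi,\theta)$,
\[
\|\nabla_\theta B(\phi,\theta)\|^2\le B_{\nabla w^\star_\theta}^2 L_{f_\phi}^2\|\phi-\phi^\star\|^2.
\]

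\textbf{Step 2: strong convexity.} Remark~\ref{rem:universal_min} and Assumption~\ref{ass:realizable} make $\phi^\star$ a minimizer of $\phi\mapsto\RNE_\theta(f_\phi)$ for every $\theta$. This holds over all functions, hence over $\Phi$, so the first-order optimality condition $\langle\nabla_\phi \RNE_\theta(f_{\phi^\star}),\phi-\phi^\star\rangle\ge 0$ holds for all $\phi\in\Phi$. Combining it with the uniform $\lambda$-strong convexity of Assumption~\ref{ass:nuisance}(iii) gives the quadratic growth bound
\[
\tfrac{\lambda}{2}\|\phi-\phi^\star\|^2\le \RNE_\theta(f_\phi)-\RNE_\theta(f^\star)\qquad\text{for all }\theta,\ \phi\in\Phi.
\]
The key point is that this inequality is uniform in $\theta$. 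We apply it with $\theta=\theta_t$ and $\phi=\phi_t$, both random, pathwise, and then take expectations:
\[
\E\|\nabla_\theta B(\phi_t,\theta_t)\|^2\le \tfrac{2}{\lambda}B_{\nabla w^\star_\theta}^2L_{f_\phi}^2\,\E\bigl[\RNE_{\theta_t}(f_{\phi_t})-\RNE_{\theta_t}(f^\star)\bigr].
\]

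\textbf{Step 3: averaging.} Average over $t$ and invoke Proposition~\ref{prop:nuisance-regret}. Part (i) gives $\tfrac{2}{\lambda}B_{\nabla w^\star_\theta}^2L_{f_\phi}^2\cdot D_\Phi G_\RNE/\sqrt T$, which is exactly the claimed bound. Part (ii) gives $\tfrac{2}{\lambda}B_{\nabla w^\star_\theta}^2L_{f_\phi}^2\cdot G_\RNE^2(1+\log T)/(2\lambda T)$; the factors of $2$ cancel, giving the claimed $\lambda^{-2}$ bound.

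\textbf{Main obstacle.} The delicate point is the expectation in Proposition~\ref{prop:nuisance-regret}, which is taken over the random pairs $(\theta_t,\phi_t)$. The quantity $\RNE_{\theta_t}(f_{\phi_t})$ must be read as the population loss evaluated at the realized parameters, so that the pathwise quadratic growth bound of Step 2 applies before taking expectations. Because strong convexity is assumed uniformly in $\theta$, no coupling between the policy and nuisance iterates needs to be tracked. The only other care point is that $\phi^\star$ is feasible in $\Phi$, which the realizability assumption guarantees.
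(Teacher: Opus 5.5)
Your proposal is correct and follows essentially the same route as the paper. Both arguments bound $\|\nabla_\theta B(\phi_t,\theta_t)\|^2 \le B_{\nabla w^\star_\theta}^2 L_{f_\phi}^2\|\phi_t-\phi^\star\|^2$ via Jensen, Cauchy--Schwarz, Corollary~\ref{cor:smooth-decision} and Assumption~\ref{ass:policy-class}, convert this to excess nuisance loss by quadratic growth (universality of $f^\star$ plus uniform $\lambda$-strong convexity), and then average and apply Proposition~\ref{prop:nuisance-regret}; your first-order-optimality justification of quadratic growth over $\Phi$ spells out a step the paper leaves implicit, and your constants match both parts exactly.
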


\begin{proof}
Since $B(\phi,\theta)=J(\phi,\theta)-J(\phi^\star,\theta)$ by Assumption~\ref{ass:realizable}, then $$\nabla_\theta B(\phi_t,\theta_t)
  = \E_x\bigl[
      (\nabla_\theta w^{\star}_{\theta_t}(x))^{\top}
      (f_{\phi_t}(x)-f^\star(x))\bigr].$$
By Jensen's inequality, Cauchy-Schwarz, Corollary~\ref{cor:smooth-decision}, and the Lipschitz parameterization (Assumption~\ref{ass:policy-class})
\begin{align*}
    \|\nabla_\theta B(\phi_t,\theta_t)\|^2
  \le B_{\nabla w^\star_\theta}^2\,\E_x\bigl[\|f_{\phi_t}(x)-f^\star(x)\|^2\bigr]
  \le B_{\nabla w^\star_\theta}^2 L_{f_\phi}^2\,\|\phi_t - \phi^\star\|^2.
\end{align*}

Under Assumption~\ref{ass:realizable}, $f^\star$ minimizes $\RNE_\theta(f)$ for every $\theta$ (Remark~\ref{rem:universal_min}). Combined with $\lambda$-strong convexity (Assumption~\ref{ass:nuisance}(iii)), this yields the quadratic-growth inequality
\begin{equation}\label{eq:quad-growth}
  \tfrac{\lambda}{2}\,\|\phi_t - \phi^\star\|^2 
  \le \RNE_{\theta_t}(f_{\phi_t}) - \RNE_{\theta_t}(f^\star) .
\end{equation}
Combining the previous bound with~\eqref{eq:quad-growth} yields
$$
  \|\nabla_\theta B(\phi_t,\theta_t)\|^2 
  \le \frac{2 B_{\nabla w^\star_\theta}^2L_{f_\phi}^2}{\lambda}
       \bigl(\RNE_{\theta_t}(f_{\phi_t}) - \RNE_{\theta_t}(f^\star)\bigr).
$$
Averaging over $t = 1,\dots,T$ and applying Proposition~\ref{prop:nuisance-regret} yields the two stated bounds.
\end{proof}

The hybrid gradient is conditionally biased, but only through the nuisance error, as the following identity formalizes.

\begin{lemma}[Hybrid gradient bias]\label{lem:hybrid-bias}
Under Assumptions~\ref{ass:bounded},\ref{ass:score-bound},\ref{ass:realizable}, and the log-derivative identity for $g_t^{\mathrm{score}}$,
\begin{equation}\label{eq:hybrid-cond-mean}
  \E\!\bigl[g_t^\alpha \mid \phi_t,\theta_t\bigr]
  = \nabla_\theta\Vcost(\theta_t)
    + (1-\alpha)\,\nabla_\theta B(\phi_t,\theta_t)
\end{equation}
\end{lemma}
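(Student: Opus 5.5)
The plan is to compute the conditional expectations of the two components of $g_t^\alpha$ separately and then combine them by linearity. Throughout, condition on $\mathcal{F}_t := \sigma(\phi_t,\theta_t)$. Both parameters are determined by data up to time $t-1$. The fresh triple $(x_t,c_t,\hat c_t)$ is drawn as $(x_t,c_t)\sim\calD$ independently of the past, with $\hat c_t\sim p_{\theta_t}(\cdot\mid x_t)$ independent of $c_t$ given $x_t$. Hence, conditionally on $\mathcal{F}_t$, the triple $(x_t,c_t,w_t)$ is distributed as $\calD_{\theta_t}$ and satisfies $w_t\perp c_t\mid x_t$.

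\emph{Score term.} By the tower property over $\hat c_t$ given $x_t$, and then over $c_t$ given $x_t$ (using conditional independence),
\[
\E\bigl[g_t^{\mathrm{score}}\mid \mathcal{F}_t\bigr]
= \E_{x}\Bigl[\E_{\hat c\sim p_{\theta_t}(\cdot\mid x)}\bigl[f^\star(x)^\top w^\star(\hat c)\,\nabla_\theta\log p_{\theta_t}(\hat c\mid x)\bigr]\Bigr].
\]
Corollary~\ref{cor:smooth-decision}, i.e.\ the score representation \eqref{eq:F-jacobian}, turns the inner expectation into $(\nabla_\theta w^\star_{\theta_t}(x))^\top f^\star(x)$. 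Under Assumption~\ref{ass:realizable} we have $f^\star=f_{\phi^\star}$, so \eqref{eq:plug-in-grad} gives
\[
\E_x\bigl[(\nabla_\theta w^\star_{\theta_t}(x))^\top f^\star(x)\bigr]=\nabla_\theta J(\phi^\star,\theta_t)=\nabla_\theta\Vcost(\theta_t).
\]
The outer interchange of $\nabla_\theta$ and $\E_x$ is justified by dominated convergence, since the integrand is bounded by $B_c B_{\calS} M$ (Assumption~\ref{ass:bounded} together with \eqref{eq:density-grad-bound}).

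\emph{Plug-in term.} The same computation applies with $f^\star(x)$ replaced by $f_{\phi_t}(x)$, which is $\mathcal{F}_t$-measurable as a function of $x$, so no averaging over $c_t$ is needed. It gives $\E[g_t^{\mathrm{plug\text{-}in}}\mid\mathcal{F}_t]=\nabla_\theta J(\phi_t,\theta_t)$, with domination by $B_{f_\phi}B_\calS M$ from Assumption~\ref{ass:policy-class}.

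\emph{Combine.} By linearity,
\[
\E[g_t^\alpha\mid\mathcal{F}_t]=\alpha\nabla_\theta\Vcost(\theta_t)+(1-\alpha)\nabla_\theta J(\phi_t,\theta_t).
\]
Writing $J(\phi_t,\theta_t)=\Vcost(\theta_t)+B(\phi_t,\theta_t)$ by the definition of $B$, and differentiating in $\theta$ (both are differentiable by Lemma~\ref{lem:J-reg}), yields \eqref{eq:hybrid-cond-mean}.

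The main obstacle is the conditioning bookkeeping. One must check that $\theta_t$ and $\phi_t$ are measurable with respect to past data, so that the fresh sample is independent of them. One must also check that $c_t$ enters the score term only through $f^\star(x_t)$, which is exactly where $w_t\perp c_t\mid x_t$ is used. Beyond that, the only technical points are the integrability conditions needed to swap gradient and expectation, and these are already supplied by Corollary~\ref{cor:smooth-decision}.
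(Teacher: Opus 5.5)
Your proposal is correct and follows the paper's proof: compute the conditional means of the score and plug-in components separately (the plug-in one via the score representation of Corollary~\ref{cor:smooth-decision}), then combine by linearity using $J(\phi_t,\theta_t)=\Vcost(\theta_t)+B(\phi_t,\theta_t)$. The only cosmetic difference is in the score term: you first integrate out $c_t$ via $w_t\perp c_t\mid x_t$, which reduces it to the plug-in computation with $f^\star$ in place of $f_{\phi_t}$, whereas the paper differentiates $\Vcost$ directly under the joint expectation over $(x,c,\hat c)$.
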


\begin{proof}
By Assumption~\ref{ass:score-bound} and dominated convergence, we may exchange differentiation and expectation: $$\nabla_\theta\Vcost(\theta_t)= \E_{x,c,\hat c}[c^\top w^\star(\hat c)\,\nabla_\theta \log p_{\theta_t}(\hat c \mid x)]= \E[g_t^{\mathrm{score}} \mid \phi_t,\theta_t],$$ where the second equality uses $w_t = w^\star(\hat c_t)$ and $y_t = c_t^\top w_t$.

For the plug-in estimator, since $w_t=w^\star(\hat c_t)$ and applying the log-derivative identity,
\begin{align*}
\E[g_t^{\mathrm{plug\text{-}in}} \mid \phi_t, \theta_t]
&= \E_{x,\hat c}\!\bigl[(w^\star(\hat c)^\top f_{\phi_t}(x))\,\nabla_\theta \log p_{\theta_t}(\hat c\mid x)\bigr]\\
&= \E_x\!\left[f_{\phi_t}(x)^\top \E_{\hat c}\!\bigl[w^\star(\hat c)\,\nabla_\theta \log p_{\theta_t}(\hat c\mid x)^\top\bigr]\right]\\
&= \E_x\!\left[f_{\phi_t}(x)^\top \nabla_\theta w^\star_{\theta_t}(x)\right] = \nabla_\theta J(\phi_t,\theta_t),
\end{align*}
using Corollary~\ref{cor:smooth-decision} for the third equality.
Combining the two by linearity of expectation yields the claim.
\end{proof}

\begin{assumption}[Gradient variances] \label{ass:gradient-bound}
There exist $\sigma_{\mathrm{score}}, \sigma_{\mathrm{plug\text{-}in}}< \infty$ such that: $$
  \mathrm{Var}\bigl[g_t^{\mathrm{score}} \,\big|\, \phi_t, \theta_t\bigr] 
  \le \sigma_{\mathrm{score}}^2, 
  \qquad 
  \mathrm{Var}\bigl[g_t^{\mathrm{plug\text{-}in}} \,\big|\, \phi_t, \theta_t\bigr] 
  \le \sigma_{\mathrm{plug\text{-}in}}^2.
$$
\end{assumption}

\begin{lemma}[One-Step Descent]\label{lem:one-step-alpha}
Under Assumptions~\ref{ass:realizable}--\ref{ass:gradient-bound}, the iterates of the policy update satisfy
\begin{equation}\label{eq:one-step-alpha}
\begin{aligned}
  \E\!\bigl[\Vcost(\theta_{t+1})\bigr]
  &\leq \E\!\bigl[\Vcost(\theta_t)\bigr]
  -\frac{\beta}{2} \E\!\bigl[\|\nabla_\theta\Vcost(\theta_t)\|^2\bigr] \\
  &\quad +\frac{\beta(1-\alpha)^2}{2}
   \E\!\bigl[\|\nabla_\theta B(\phi_t,\theta_t)\|^2\bigr]
  +\frac{L^J_\theta\beta^2}{2}((G^J)^2+\sigma_\alpha^2).
\end{aligned}
\end{equation}
with $\sigma_\alpha^2 := (\alpha\sigma_{\mathrm{score}}+(1-\alpha)\sigma_{\mathrm{plug\text{-}in}})^2$.
\end{lemma}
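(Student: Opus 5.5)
The plan is to apply the standard descent lemma for $L$-smooth functions to $\Vcost$, use Lemma~\ref{lem:hybrid-bias} to control the first-order term, and bound the second moment of $g_t^\alpha$ by splitting it into squared conditional mean plus conditional variance.

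\emph{Step 1: smoothness of $\Vcost$.} Under Assumption~\ref{ass:realizable} we have $\Vcost(\theta)=J(\phi^\star,\theta)$. Lemma~\ref{lem:J-reg}(ii) holds for every $\phi\in\Phi$, in particular for $\phi^\star$. Hence $\Vcost$ is $L^J_\theta$-smooth, and since $\theta_{t+1}=\theta_t-\beta g_t^\alpha$,
\[
\Vcost(\theta_{t+1})\le \Vcost(\theta_t)-\beta\langle\nabla_\theta\Vcost(\theta_t),g_t^\alpha\rangle+\tfrac{L^J_\theta\beta^2}{2}\|g_t^\alpha\|^2 .
\]

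\emph{Step 2: the linear term.} I will condition on $(\phi_t,\theta_t)$. These are measurable with respect to the history up to time $t-1$, while $(x_t,c_t,\hat c_t)$ is a fresh draw. By Lemma~\ref{lem:hybrid-bias}, the conditional expectation of the linear term is
\[
-\beta\|\nabla_\theta\Vcost(\theta_t)\|^2-\beta(1-\alpha)\langle\nabla_\theta\Vcost(\theta_t),\nabla_\theta B(\phi_t,\theta_t)\rangle .
\]
Young's inequality gives $|\langle a,(1-\alpha)b\rangle|\le\tfrac12\|a\|^2+\tfrac{(1-\alpha)^2}{2}\|b\|^2$. Applying it to the cross term bounds the linear term by
\[
-\tfrac{\beta}{2}\|\nabla_\theta\Vcost(\theta_t)\|^2+\tfrac{\beta(1-\alpha)^2}{2}\|\nabla_\theta B(\phi_t,\theta_t)\|^2 ,
\]
which yields exactly the two middle terms of~\eqref{eq:one-step-alpha}.

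\emph{Step 3: the quadratic term.} I will write $\E[\|g_t^\alpha\|^2\mid\phi_t,\theta_t]=\|\E[g_t^\alpha\mid\phi_t,\theta_t]\|^2+\mathrm{Var}[g_t^\alpha\mid\phi_t,\theta_t]$.

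For the mean, I will not use the decomposition from Lemma~\ref{lem:hybrid-bias}, because that would produce a cross term involving $\nabla B$. Instead, the proof of that lemma shows
\[
\E[g_t^\alpha\mid\phi_t,\theta_t]=\alpha\nabla_\theta J(\phi^\star,\theta_t)+(1-\alpha)\nabla_\theta J(\phi_t,\theta_t).
\]
This is a convex combination of two vectors, each of norm at most $G^J$ by Lemma~\ref{lem:J-reg}(i). Hence its squared norm is at most $(G^J)^2$.

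For the variance, the centered parts $\alpha(g^{\mathrm{score}}_t-\E g^{\mathrm{score}}_t)$ and $(1-\alpha)(g^{\mathrm{plug\text{-}in}}_t-\E g^{\mathrm{plug\text{-}in}}_t)$ are correlated, since both are driven by the same sample $\hat c_t$. I therefore apply Minkowski's inequality in conditional $L^2$ rather than adding variances. Combined with Assumption~\ref{ass:gradient-bound}, this gives
\[
\mathrm{Var}[g_t^\alpha\mid\phi_t,\theta_t]^{1/2}\le\alpha\sigma_{\mathrm{score}}+(1-\alpha)\sigma_{\mathrm{plug\text{-}in}},
\]
that is, a variance bound of $\sigma_\alpha^2$.

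\emph{Step 4: assemble.} Substituting Steps 2 and 3 into the Step 1 inequality bounds $\E[\Vcost(\theta_{t+1})\mid\phi_t,\theta_t]$. Taking total expectation by the tower property then gives~\eqref{eq:one-step-alpha}.

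The main obstacle is bookkeeping in Step 3: obtaining the clean constant $(G^J)^2+\sigma_\alpha^2$ with no extra bias or factor of two. The two choices that deliver it are bounding the conditional mean as a convex combination of bounded $J$-gradients, and handling the correlated variances by Minkowski's inequality. A secondary point is the measurability of $(\phi_t,\theta_t)$. In Algorithm~\ref{alg:main}, $\phi_{t+1}$ is computed from the same sample as $\theta_{t+1}$, but only $\phi_t$ enters $g_t^\alpha$, so the conditioning in Step 2 is valid.
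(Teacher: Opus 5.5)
Your proposal is correct and follows essentially the same route as the paper. Both arguments apply the descent lemma to the $L^J_\theta$-smooth $\Vcost=J(\phi^\star,\cdot)$, use Lemma~\ref{lem:hybrid-bias} with Young's inequality on the cross term, and split $\E[\|g_t^\alpha\|^2\mid\phi_t,\theta_t]$ into variance plus squared mean, bounding the mean as a convex combination of $\nabla_\theta J(\phi^\star,\theta_t)$ and $\nabla_\theta J(\phi_t,\theta_t)$. Your Minkowski step for the variance is the paper's covariance expansion with Cauchy--Schwarz and completing the square, written in one line.
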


\begin{proof}[Proof of Lemma~\ref{lem:one-step-alpha}]
Since $\Vcost(\theta)=J(\phi^\star,\theta)$ is $L^J_\theta$-smooth in $\theta$ (Lemma~\ref{lem:J-reg}(ii)), the descent lemma applied to the update $\theta_{t+1}=\theta_t-\beta g_t^\alpha$ gives
\begin{equation}\label{eq:descent-theta}
  \Vcost(\theta_{t+1})
  \leq \Vcost(\theta_t)
    + \bigl\langle\nabla_\theta\Vcost(\theta_t),\,
                  \theta_{t+1}-\theta_t\bigr\rangle
    + \frac{L^J_\theta}{2}\|\theta_{t+1}-\theta_t\|^2
\end{equation}

Substituting $\theta_{t+1}-\theta_t=-\beta g_t^\alpha$ and taking conditional expectation given $(\phi_t,\theta_t)$,
$$\E\bigl[\langle\nabla_\theta\Vcost(\theta_t),\,
    -\beta g_t^\alpha\rangle\mid\phi_t,\theta_t\bigr]
  = -\beta\bigl\langle\nabla_\theta\Vcost(\theta_t),\,
    \E[g_t^\alpha\mid\phi_t,\theta_t]\bigr\rangle$$
Applying Lemma~\ref{lem:hybrid-bias}, $\E[g_t^\alpha\mid\phi_t,\theta_t] = \nabla_\theta\Vcost(\theta_t)+(1-\alpha)\nabla_\theta B(\phi_t,\theta_t)$, so
$$-\beta\bigl\langle\nabla_\theta\Vcost,\,
    \nabla_\theta\Vcost+(1-\alpha)\nabla_\theta B\bigr\rangle
  = -\beta\|\nabla_\theta\Vcost\|^2
    - \beta\langle\nabla_\theta\Vcost,\,(1-\alpha)\nabla_\theta B\rangle$$
Applying the inequality $-\langle a,b\rangle \leq \frac{1}{2}\|a\|^2+\frac{1}{2}\|b\|^2$ to the cross term,
\begin{equation}\label{eq:inner-bound}
  \E\bigl[\langle\nabla_\theta\Vcost,\,
           -\beta g_t^\alpha\rangle\mid\phi_t,\theta_t\bigr]
  \leq
  -\frac{\beta}{2}\|\nabla_\theta\Vcost(\theta_t)\|^2
  + \frac{\beta(1-\alpha)^2}{2}\|\nabla_\theta B(\phi_t,\theta_t)\|^2
\end{equation}

We bound $\E[\|g_t^\alpha\|^2\mid\phi_t,\theta_t]$ using the bias-variance decomposition
\begin{equation}\label{eq:bv-decomp}
  \E\bigl[\|g_t^\alpha\|^2\mid\phi_t,\theta_t\bigr]
  = \mathrm{Var}\bigl[g_t^\alpha\mid\phi_t,\theta_t\bigr]
    + \bigl\|\E\bigl[g_t^\alpha\mid\phi_t,\theta_t\bigr]\bigr\|^2
\end{equation}
For the variance term, write $g_t^\alpha = \alpha g_t^{\mathrm{score}} +(1-\alpha)g_t^{\mathrm{plug\text{-}in}}$. Expanding,
\begin{align*}
  \mathrm{Var}\bigl[g_t^\alpha\mid\phi_t,\theta_t\bigr]
  &= \alpha^2\mathrm{Var}\bigl[g_t^{\mathrm{score}}\mid\phi_t,\theta_t\bigr]
   + (1-\alpha)^2\mathrm{Var}\bigl[g_t^{\mathrm{plug\text{-}in}}\mid\phi_t,\theta_t\bigr] \\
  &\quad + 2\alpha(1-\alpha)
    \mathrm{Cov}\bigl[g_t^{\mathrm{score}},\, g_t^{\mathrm{plug\text{-}in}}\mid\phi_t,\theta_t\bigr]
\end{align*}
We bound the covariance term via Cauchy-Schwarz
$$\bigl|\mathrm{Cov}\bigl[g_t^{\mathrm{score}},\,
  g_t^{\mathrm{plug\text{-}in}}\mid\phi_t,\theta_t\bigr]\bigr|
  \leq \sigma_{\mathrm{score}}\,\sigma_{\mathrm{plug\text{-}in}}$$
where the second inequality uses Assumption~\ref{ass:gradient-bound}.
Substituting and completing the square,
\begin{equation}\label{eq:var-bound}
  \mathrm{Var}\bigl[g_t^\alpha\mid\phi_t,\theta_t\bigr]
  \leq \bigl(\alpha\sigma_{\mathrm{score}}+(1-\alpha)\sigma_{\mathrm{plug\text{-}in}}\bigr)^2
  =: \sigma_\alpha^2
\end{equation}
For the squared mean term, by Lemma~\ref{lem:hybrid-bias}, the triangle inequality, $\|\E[g_t^\alpha \mid \phi_t, \theta_t]\| \le \alpha \|\nabla_\theta \Vcost(\theta_t)\| + (1-\alpha) \|\nabla_\theta J(\phi_t, \theta_t)\| =\alpha \|\nabla_\theta J(\phi^\star, \theta_t)\| + (1-\alpha) \|\nabla_\theta J(\phi_t, \theta_t)\| \le G^J$ where the last inequality is given by Lemma~\ref{lem:J-reg}.
Substituting into~\eqref{eq:bv-decomp},
\begin{equation}\label{eq:quad-bound}
  \E\bigl[\|g_t^\alpha\|^2\mid\phi_t,\theta_t\bigr]
  \leq \sigma_\alpha^2 + (G^J)^2
\end{equation}

Substituting~\eqref{eq:inner-bound} and~\eqref{eq:quad-bound} into~\eqref{eq:descent-theta} and taking full expectations,
\begin{align*}
  \E\bigl[\Vcost(\theta_{t+1})\bigr]
  \leq \E\bigl[\Vcost(\theta_t)\bigr]
   & - \frac{\beta}{2}
       \E\bigl[\|\nabla_\theta\Vcost(\theta_t)\|^2\bigr]\\&
    + \frac{\beta(1-\alpha)^2}{2}
       \E\bigl[\|\nabla_\theta B(\phi_t,\theta_t)\|^2\bigr]
    + \frac{L^J_\theta\beta^2}{2}(\sigma_\alpha^2+(G^J)^2)
\end{align*}
which is the claimed inequality.
\end{proof}
We are now in a position to state Theorem~\ref{thm:main-alpha} with explicit constants and provide its proof. The result follows directly by combining the one-step descent inequality (Lemma~\ref{lem:one-step-alpha}) with the averaged bias bound of Proposition~\ref{prop:bias-avg}.
\begin{theorem}[Non-convex stationarity, plug-in variant]
Let $V^\star := \inf_\theta \Vcost(\theta) > -\infty$ Under Assumptions~\ref{ass:bounded}--\ref{ass:gradient-bound}, with step size
$\beta = 1/(L^J_\theta\sqrt{T})$, Algorithm~\ref{alg:main} satisfies,
\begin{equation}\label{eq:main-alpha-master}
\frac{1}{T}\sum_{t=1}^T\E\bigl[\|\nabla_\theta\Vcost(\theta_t)\|^2\bigr]
\;\le\;\frac{2L^J_\theta\Delta_1+(G^J)^2+\sigma_\alpha^2}{\sqrt{T}}
+ (1-\alpha)^2 B_{\nabla w^\star_\theta}^2 L_{f_\phi}^2 \cdot \mathcal{B}_{\mathrm{bias}}(T),
\end{equation}
where $\Delta_1 := \Vcost(\theta_1)-V^\star$, and the bias decay $\mathcal{B}_{\mathrm{bias}}(T)$ depends on the nuisance step size:
\begin{enumerate}
  \item[\textup{(i)}] With $\eta_t=D_\Phi/(G_\RNE\sqrt{T})$:\quad $\mathcal{B}_{\mathrm{bias}}(T) = \frac{2 D_\Phi G_\RNE}{\lambda \sqrt{T}}$.
  \item[\textup{(ii)}] With $\eta_t = 1/(\lambda t)$:\quad $\mathcal{B}_{\mathrm{bias}}(T) = \frac{G_\RNE^2(1+\log T)}{\lambda^2 T}$.
\end{enumerate}
\end{theorem}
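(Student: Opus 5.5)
The plan is to telescope the one-step descent inequality of Lemma~\ref{lem:one-step-alpha} over $t=1,\dots,T$ and then control the accumulated bias term with Proposition~\ref{prop:bias-avg}. We sum \eqref{eq:one-step-alpha} for $t=1,\dots,T$ and move the gradient-norm terms to the left-hand side:
\begin{equation*}
\frac{\beta}{2}\sum_{t=1}^T \E\bigl[\|\nabla_\theta\Vcost(\theta_t)\|^2\bigr] \le \Vcost(\theta_1)-\E\bigl[\Vcost(\theta_{T+1})\bigr] + \frac{\beta(1-\alpha)^2}{2}\sum_{t=1}^T\E\bigl[\|\nabla_\theta B(\phi_t,\theta_t)\|^2\bigr] + \frac{L^J_\theta\beta^2 T}{2}\bigl((G^J)^2+\sigma_\alpha^2\bigr).
\end{equation*}
Since $\Vcost(\theta_{T+1})\ge V^\star>-\infty$, the first difference is at most $\Delta_1$.

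Next we multiply through by $2/(\beta T)$. This gives
\begin{equation*}
\frac{1}{T}\sum_{t=1}^T \E\bigl[\|\nabla_\theta\Vcost(\theta_t)\|^2\bigr] \le \frac{2\Delta_1}{\beta T} + L^J_\theta\beta\bigl((G^J)^2+\sigma_\alpha^2\bigr) + (1-\alpha)^2\,\frac{1}{T}\sum_{t=1}^T\E\bigl[\|\nabla_\theta B(\phi_t,\theta_t)\|^2\bigr].
\end{equation*}
Substituting $\beta=1/(L^J_\theta\sqrt{T})$ turns the first two terms into $2L^J_\theta\Delta_1/\sqrt{T}$ and $((G^J)^2+\sigma_\alpha^2)/\sqrt{T}$. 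Together these form the leading term of \eqref{eq:main-alpha-master}.

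For the bias average we invoke Proposition~\ref{prop:bias-avg}. Under schedule (i) it gives $2B_{\nabla w^\star_\theta}^2L_{f_\phi}^2D_\Phi G_\RNE/(\lambda\sqrt{T})$. Under schedule (ii) it gives $B_{\nabla w^\star_\theta}^2L_{f_\phi}^2G_\RNE^2(1+\log T)/(\lambda^2T)$. Factoring out $B_{\nabla w^\star_\theta}^2L_{f_\phi}^2$ leaves exactly the stated $\mathcal{B}_{\mathrm{bias}}(T)$ in each case.

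The only delicate point is that Proposition~\ref{prop:bias-avg} must hold along the actual coupled trajectory, in which $\theta_t$ depends on past nuisance iterates. This is the step I would check most carefully. It is fine because Proposition~\ref{prop:nuisance-regret} holds for an arbitrary (adapted) sequence $\{\theta_t\}$, which in turn follows from the universal-minimizer property in Remark~\ref{rem:universal_min}. The quadratic-growth step in Proposition~\ref{prop:bias-avg} is also pointwise in $\theta_t$. Everything else is routine bookkeeping of constants.
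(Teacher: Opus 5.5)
Your proposal is correct and matches the paper's proof step for step: you telescope Lemma~\ref{lem:one-step-alpha}, bound $\Vcost(\theta_1)-\E[\Vcost(\theta_{T+1})]$ by $\Delta_1$, divide by $\beta T/2$, substitute $\beta = 1/(L^J_\theta\sqrt{T})$, and invoke Proposition~\ref{prop:bias-avg}(i)--(ii), and the constants factor exactly as you describe. Your remark that the bias bound holds along the coupled trajectory is also the paper's own justification: Proposition~\ref{prop:nuisance-regret} applies to any sequence $\{\theta_t\}$ because $f^\star$ is a universal minimizer of $\RNE_\theta$.
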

\begin{proof} 
Rearranging Lemma~\ref{lem:one-step-alpha},
\begin{align*}
\frac{\beta}{2}\,\E\bigl[\|\nabla_\theta\Vcost(\theta_t)\|^2\bigr]
  &\le \E\bigl[\Vcost(\theta_t)\bigr]-\E\bigl[\Vcost(\theta_{t+1})\bigr]
     \\&+\frac{\beta(1-\alpha)^2}{2}\E\bigl[\|\nabla_\theta B(\phi_t,\theta_t)\|^2\bigr]
     +\frac{L^J_\theta\beta^2}{2}((G^J)^2+\sigma_\alpha^2)
\end{align*}
Summing over $t=1,\dots,T$ and using 
$\E[\Vcost(\theta_{T+1})]\ge V^\star$,
$$\frac{\beta}{2}\sum_{t=1}^T\E\bigl[\|\nabla_\theta\Vcost(\theta_t)\|^2\bigr]
  \le \Vcost(\theta_1)-V^\star
     +\frac{\beta(1-\alpha)^2}{2}\sum_{t=1}^T\E\bigl[\|\nabla_\theta B\|^2\bigr]
     +\frac{L^J_\theta\beta^2 T}{2}((G^J)^2+\sigma_\alpha^2)$$
Dividing by $\beta T/2$ and setting $\beta=1/(L^J_\theta\sqrt{T})$,
$$\frac{1}{T}\sum_{t=1}^T\E\bigl[\|\nabla_\theta\Vcost(\theta_t)\|^2\bigr]
  \le \frac{2L^J_\theta(\Vcost(\theta_1)-V^\star)+(G^J)^2+\sigma_\alpha^2}{\sqrt{T}}
     + (1-\alpha)^2\frac{1}{T}\sum_{t=1}^T\E\bigl[\|\nabla_\theta B\|^2\bigr] $$
Applying Proposition~\ref{prop:bias-avg}(i)-(ii) yields the results.
\end{proof}

\section{Convergence analysis: surrogate hybrid variant}\label{apdx:surrogate}
This appendix extends the hybrid gradient framework to the surrogate setting, where the plug-in policy gradient is replaced by a differentiable proxy. A \emph{surrogate cost function} $h^{\mathrm{sur}}:\R^{d_w}\times\R^{d_w}\to\R$ replaces the inner objective $c^\top w^\star(\hat c)$ with a smooth, globally defined surrogate.

For a given policy $\theta$, the plug-in surrogate objective is defined in Equation~\ref{eq:Jsur}
  $$J^{\mathrm{sur}}(\phi,\theta):= \E_x\!\left[ \E_{\hat c\sim p_\theta(\cdot\mid x)}\!\left[ h^{\mathrm{sur}}\!\left(f_\phi(x),\,\hat c\right) \right] \right]$$

Realizability (Assumption~\ref{ass:realizable}) implies $\Vcost(\theta) = J(\phi^\star,\theta)$ as an exact identity, reducing the problem to the plug-in objective $J(\phi,\theta)$ and its surrogate $J^{\mathrm{sur}}(\phi,\theta)$. Our analysis therefore takes $J^{\mathrm{sur}}(\phi^\star,\theta)$ directly as the surrogate target.

The surrogate nuisance bias is entirely captured by the surrogate analog:$$B^{\mathrm{sur}}(\phi,\theta) := J^{\mathrm{sur}}(\phi,\theta) - J^{\mathrm{sur}}(\phi^\star,\theta). $$The bias vanishes when $f_\phi = f^\star$, isolating the contribution of nuisance error. Whenever the surrogate is faithful and $\phi$ is close to $\phi^\star$, a good policy under $J^{\mathrm{sur}}(\phi,\cdot)$ remains good under $J(\phi,\cdot)$, and therefore under $\Vcost$.
\begin{assumption}[Surrogate regularity]\label{ass:surrogate}
The surrogate $h^{\mathrm{sur}}(c, \hat c)$ satisfies:
\begin{enumerate}
  \item[(i)] $\hat c\mapsto h^{\mathrm{sur}}(c, \hat c)$ is continuously differentiable
        for every $c \in \R^{d_w}$, and there exists $G_\mathrm{sur} < \infty$ such that
        $\|\nabla_{\hat c} h^{\mathrm{sur}}(c, \hat c)\| \le G_\mathrm{sur}$ almost surely.
    \item[(ii)] There exist $L_c^\mathrm{sur}, L_{\hat c}^\mathrm{sur} < \infty$ such that for all $c, c', \hat c, \hat c' \in \R^{d_w}$,
    \begin{align*}
        \|\nabla_{\hat c} h^{\mathrm{sur}}(c,\hat c) - \nabla_{\hat c} h^{\mathrm{sur}}(c',\hat c)\| &\le L_c^\mathrm{sur} \|c - c'\|, \\
        \|\nabla_{\hat c} h^{\mathrm{sur}}(c,\hat c) - \nabla_{\hat c} h^{\mathrm{sur}}(c,\hat c')\| &\le L_{\hat c}^\mathrm{sur} \|\hat c- \hat c'\|.
    \end{align*}
\end{enumerate}
\end{assumption}

To construct a tractable estimator of $\nabla_\theta J^{\mathrm{sur}}$ we assume that $p_\theta(\cdot \mid x)$ admits a reparameterization $\hat c= \mu_\theta(x, \varepsilon)$ with $\varepsilon \sim \nu$ independent of $\theta$ (Assumption~\ref{ass:reparam}).
At iteration $t$, having sampled $\varepsilon_t \sim \nu$ and computed $\hat c_t = \mu_{\theta_t}(x_t, \varepsilon_t)$, the chain rule yields the surrogate gradient estimator
$$ g^{\mathrm{sur}}_t :=
  \bigl(\nabla_\theta \mu_{\theta_t}(x_t, \varepsilon_t)\bigr)^\top
  \nabla_{\hat c} h^{\mathrm{sur}}\!\bigl(f_{\phi_t}(x_t), \hat c_t\bigr)$$

\begin{proposition}[Unbiasedness of $g^\mathrm{sur}_t$]
    Under Assumptions~\ref{ass:reparam} and~\ref{ass:surrogate},
    $$\E[g^{\mathrm{sur}}_t \mid \phi_t, \theta_t]=\nabla_\theta J^{\mathrm{sur}}(\phi_t, \theta_t)$$
\end{proposition}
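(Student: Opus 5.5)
The plan is to rewrite $J^{\mathrm{sur}}(\phi,\theta)$ using the reparameterization of Assumption~\ref{ass:reparam}, so that $\theta$ no longer enters the sampling distribution. Then I differentiate under the expectation and identify the integrand with $g^{\mathrm{sur}}_t$.

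\emph{Step 1: reparameterized objective.} Because $\hat c=\mu_\theta(x,\varepsilon)$ with $\varepsilon\sim\nu$ independent of $\theta$, we can write
\[
J^{\mathrm{sur}}(\phi,\theta)=\E_{x\sim\calD_x}\E_{\varepsilon\sim\nu}\bigl[h^{\mathrm{sur}}\bigl(f_\phi(x),\mu_\theta(x,\varepsilon)\bigr)\bigr],
\]
where the outer measure $\calD_x\otimes\nu$ does not depend on $\theta$. For fixed $(x,\varepsilon,\phi)$, the map $\theta\mapsto h^{\mathrm{sur}}(f_\phi(x),\mu_\theta(x,\varepsilon))$ is $C^1$. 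This follows by composing Assumption~\ref{ass:surrogate}(i), which gives continuous differentiability in $\hat c$, with the continuous differentiability of $\mu_\theta$. The chain rule then gives the pointwise gradient $(\nabla_\theta\mu_\theta(x,\varepsilon))^\top\nabla_{\hat c}h^{\mathrm{sur}}(f_\phi(x),\mu_\theta(x,\varepsilon))$.

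\emph{Step 2: exchange of derivative and expectation (main obstacle).} This is the only nontrivial step. I would justify it with dominated convergence via the mean value theorem. The pointwise gradient is bounded in norm by $G_\mu G_{\mathrm{sur}}$ uniformly in $(x,\varepsilon,\theta)$, using $\|\nabla_\theta\mu_\theta\|\le G_\mu$ and $\|\nabla_{\hat c}h^{\mathrm{sur}}\|\le G_{\mathrm{sur}}$. Hence the difference quotients $\bigl(h^{\mathrm{sur}}(f_\phi(x),\mu_{\theta+se}(x,\varepsilon))-h^{\mathrm{sur}}(f_\phi(x),\mu_\theta(x,\varepsilon))\bigr)/s$ along any unit direction $e$ are bounded by the integrable constant $G_\mu G_{\mathrm{sur}}$. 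They also converge pointwise to the directional derivative, so dominated convergence yields
\[
\nabla_\theta J^{\mathrm{sur}}(\phi,\theta)=\E_{x,\varepsilon}\bigl[(\nabla_\theta\mu_\theta(x,\varepsilon))^\top\nabla_{\hat c}h^{\mathrm{sur}}(f_\phi(x),\mu_\theta(x,\varepsilon))\bigr].
\]
This representation also shows that the gradient is continuous in $\theta$, by the Lipschitz conditions in Assumption~\ref{ass:reparam} and Assumption~\ref{ass:surrogate}(ii), so $J^{\mathrm{sur}}$ is genuinely differentiable and not only directionally so.

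\emph{Step 3: conditioning.} In Algorithm~\ref{alg:main}, $(\phi_t,\theta_t)$ is determined by the history up to time $t-1$. The fresh draws $x_t\sim\calD_x$ (i.i.d.\ over time) and $\varepsilon_t\sim\nu$ are independent of that history. Therefore, conditional on $(\phi_t,\theta_t)$, $g^{\mathrm{sur}}_t$ is the integrand above evaluated at $(\phi,\theta)=(\phi_t,\theta_t)$ and an independent sample $(x_t,\varepsilon_t)$. Taking the conditional expectation, which is justified by the freezing lemma since the integrand is bounded, gives $\E[g^{\mathrm{sur}}_t\mid\phi_t,\theta_t]=\nabla_\theta J^{\mathrm{sur}}(\phi_t,\theta_t)$.

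The feedback $c_t$ plays no role here, because $g^{\mathrm{sur}}_t$ does not depend on $c_t$.
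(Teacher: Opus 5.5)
Your proof is correct and follows the same route as the paper: rewrite $J^{\mathrm{sur}}$ through the reparameterization $\hat c=\mu_\theta(x,\varepsilon)$, then interchange $\nabla_\theta$ with the expectation using the bounds $G_\mu$ and $G_{\mathrm{sur}}$. Then identify the resulting integrand with $g^{\mathrm{sur}}_t$ evaluated at the fresh draw $(x_t,\varepsilon_t)$; you simply spell out the dominated-convergence step and the conditioning (freezing) argument that the paper's one-line proof leaves implicit.
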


\begin{proof}
By Assumption~\ref{ass:reparam}, $\hat c_t=\mu_{\theta_t}(x_t,\varepsilon_t)$
with $\varepsilon_t\sim\nu$ independent of $\theta_t$, so the reparameterization
trick and Assumptions~\ref{ass:surrogate}(i) and~\ref{ass:reparam} justify
exchanging differentiation and expectation:
\[
  \nabla_\theta J^{\mathrm{sur}}(\phi_t,\theta_t)
  = \E_{x,\varepsilon}\!\left[
      \bigl(\nabla_\theta\mu_{\theta_t}(x,\varepsilon)\bigr)^\top
      \nabla_{\hat c}h^{\mathrm{sur}}\!\bigl(f_{\phi_t}(x),\mu_{\theta_t}(x,\varepsilon)\bigr)
    \right]
  = \E\bigl[g^{\mathrm{sur}}_t\mid\phi_t,\theta_t\bigr]. \qedhere
\]
\end{proof}
           
The surrogate hybrid gradient estimator is
$ g^{\alpha,\mathrm{sur}}_t := \alpha\, g_t^{\mathrm{score}} + (1-\alpha)\, g^{\mathrm{sur}}_t.$

We operate under the same regime as the plug-in policy case. Define the target objective 
\begin{equation}\label{eq:Valpha}
  V^\alpha(\theta) := \alpha\, \Vcost(\theta) + (1-\alpha)\, J^{\mathrm{sur}}(\phi^\star, \theta)
\end{equation}
let $\tilde G_\alpha := \alpha G^J + (1-\alpha)G_\mathrm{sur}$ bound the norm of $\E[g^{\alpha,\mathrm{sur}}_t\mid\phi_t,\theta_t]$.

\begin{assumption}[Surrogate gradient moments]\label{ass:sur-moments}
There exist $\sigma_{\mathrm{sur}}<\infty$ such that $\mathrm{Var}[g^{\mathrm{sur}}_t \mid \phi_t, \theta_t] \le \sigma_{\mathrm{sur}}^2$.
\end{assumption}
\begin{assumption}[Surrogate gradient norm bound]\label{ass:sur-grad-bound}
There exists $G_{\mathrm{sur}}^J < \infty$ such that 
$\|\nabla_\theta J^{\mathrm{sur}}(\phi,\theta)\| \le G_{\mathrm{sur}}^J$ 
for all $\phi \in \Phi$, $\theta \in \R^{d_\theta}$. 
Under Assumptions~\ref{ass:reparam}--\ref{ass:surrogate}, this holds with $G_{\mathrm{sur}}^J := G_\mu G_{\mathrm{sur}}$.
\end{assumption}
\begin{lemma}[Surrogate hybrid gradient bias]\label{lem:sur-bias}
Under Assumptions~\ref{ass:bounded},\ref{ass:score-bound},\ref{ass:reparam},\ref{ass:realizable},\ref{ass:surrogate} and the log-derivative identity for $g_t^{\mathrm{score}}$,
\begin{equation*}
  \E\bigl[g^{\alpha,\mathrm{sur}}_t \mid \phi_t, \theta_t\bigr]
  = \nabla_\theta V^\alpha(\theta_t) + (1-\alpha)\, \nabla_\theta B^{\mathrm{sur}}(\phi_t, \theta_t)
\end{equation*}
\end{lemma}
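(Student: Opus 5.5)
The identity follows from linearity of conditional expectation, handling the two components of $g^{\alpha,\mathrm{sur}}_t = \alpha g_t^{\mathrm{score}} + (1-\alpha) g_t^{\mathrm{sur}}$ separately. Throughout, we condition on $(\phi_t,\theta_t)$. Given this conditioning, $x_t\sim\calD_x$, $c_t\mid x_t\sim p^\star(\cdot\mid x_t)$ and $\varepsilon_t\sim\nu$ (equivalently $\hat c_t\sim p_{\theta_t}(\cdot\mid x_t)$) are drawn fresh and independently of the past.

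\emph{Score component.} This is exactly the first half of the proof of Lemma~\ref{lem:hybrid-bias}. By Assumption~\ref{ass:score-bound}, dominated convergence lets us exchange $\nabla_\theta$ with the integral defining $\Vcost(\theta)=\E_x[f^\star(x)^\top w^\star_\theta(x)]$. Corollary~\ref{cor:smooth-decision} then gives $\nabla_\theta\Vcost(\theta_t)=\E_{x,\hat c}[f^\star(x)^\top w^\star(\hat c)\,\nabla_\theta\log p_{\theta_t}(\hat c\mid x)]$. Since $c\perp\hat c\mid x$ and $\E[c\mid x]=f^\star(x)$, this equals $\E[y_t\nabla_\theta\log p_{\theta_t}(\hat c_t\mid x_t)\mid\phi_t,\theta_t]=\E[g_t^{\mathrm{score}}\mid\phi_t,\theta_t]$. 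Assumption~\ref{ass:bounded} guarantees integrability of $y_t$.

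\emph{Surrogate component.} By the unbiasedness proposition for $g^{\mathrm{sur}}_t$, which uses Assumptions~\ref{ass:reparam} and~\ref{ass:surrogate}, we have $\E[g^{\mathrm{sur}}_t\mid\phi_t,\theta_t]=\nabla_\theta J^{\mathrm{sur}}(\phi_t,\theta_t)$. Write $J^{\mathrm{sur}}(\phi_t,\theta_t)=J^{\mathrm{sur}}(\phi^\star,\theta_t)+B^{\mathrm{sur}}(\phi_t,\theta_t)$. Because $\phi^\star$ is fixed (Assumption~\ref{ass:realizable}), both pieces are differentiable in $\theta$: $J^{\mathrm{sur}}(\phi^\star,\cdot)$ is differentiable by the same reparameterization argument applied at $\phi=\phi^\star$, and $B^{\mathrm{sur}}$ is a difference of differentiable functions. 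Hence $\nabla_\theta J^{\mathrm{sur}}(\phi_t,\theta_t)=\nabla_\theta J^{\mathrm{sur}}(\phi^\star,\theta_t)+\nabla_\theta B^{\mathrm{sur}}(\phi_t,\theta_t)$.

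\emph{Combination.} Summing with weights $\alpha$ and $1-\alpha$ gives $\alpha\nabla_\theta\Vcost(\theta_t)+(1-\alpha)\nabla_\theta J^{\mathrm{sur}}(\phi^\star,\theta_t)+(1-\alpha)\nabla_\theta B^{\mathrm{sur}}(\phi_t,\theta_t)$. The first two terms are $\nabla_\theta V^\alpha(\theta_t)$ by definition~\eqref{eq:Valpha}, which gives the claim.

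The only delicate point is the interchange of derivative and expectation in each component. One subtlety: $\phi_t$ is random, but it is measurable with respect to the conditioning, so it may be frozen when differentiating in $\theta$. The score-side interchange rests on the $L^1$ bound~\eqref{eq:density-grad-bound} together with boundedness of $c^\top w$. The surrogate-side interchange rests on the almost-sure bound $\|\nabla_\theta\mu_\theta\|\le G_\mu$ combined with $\|\nabla_{\hat c}h^{\mathrm{sur}}\|\le G_{\mathrm{sur}}$, which gives a uniform dominating constant $G_\mu G_{\mathrm{sur}}$. I expect this justification, rather than the algebra, to be the main step needing care.
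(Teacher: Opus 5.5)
Your proposal is correct and follows the paper's proof: you use linearity of conditional expectation, unbiasedness of $g_t^{\mathrm{score}}$ for $\nabla_\theta \Vcost(\theta_t)$ via the log-derivative identity, unbiasedness of $g_t^{\mathrm{sur}}$ for $\nabla_\theta J^{\mathrm{sur}}(\phi_t,\theta_t)$, and the split $J^{\mathrm{sur}}(\phi_t,\cdot)=J^{\mathrm{sur}}(\phi^\star,\cdot)+B^{\mathrm{sur}}(\phi_t,\cdot)$. Your justification of the derivative--expectation interchanges (the $L^1$ bound on $\nabla_\theta p_\theta$, the dominating constant $G_\mu G_{\mathrm{sur}}$, and freezing $\phi_t$ under the conditioning) is more careful than the paper's one-line argument, but the route is the same.
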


\begin{proof}
By the log-derivative identity, $\E[g_t^{\mathrm{score}}\mid\phi_t,\theta_t]=\nabla_\theta\Vcost(\theta_t)$. Unbiasedness of the reparameterization estimator gives $\E[g^{\mathrm{sur}}_t\mid\phi_t,\theta_t]=\nabla_\theta J^{\mathrm{sur}}(\phi_t,\theta_t) =\nabla_\theta J^{\mathrm{sur}}(\phi^\star, \theta_t)+\nabla_\theta B^{\mathrm{sur}}(\phi_t,\theta_t)$. Linearity of expectation yields the claim.
\end{proof}

\begin{lemma}[Smoothness of $V^\alpha$]\label{lem:Valpha-smooth}
Under Assumptions~\ref{ass:realizable},\ref{ass:surrogate} and~\ref{ass:reparam}, $J^{\mathrm{sur}}(\phi^\star,\cdot)$ is $L^{\mathrm{sur}}_\theta$-smooth. Consequently $V^\alpha$ is $L^\alpha_\theta$-smooth with $L^\alpha_\theta := \alpha L^J_\theta + (1-\alpha) L^{\mathrm{sur}}_\theta$.
\end{lemma}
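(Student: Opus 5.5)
We prove $L^{\mathrm{sur}}_\theta$-smoothness of $J^{\mathrm{sur}}(\phi^\star,\cdot)$ directly from the reparameterized gradient formula, then combine with Lemma~\ref{lem:J-reg}(ii). By the unbiasedness proposition for $g^{\mathrm{sur}}_t$ (equivalently, the reparameterization trick justified by Assumptions~\ref{ass:reparam} and~\ref{ass:surrogate}(i)), for every $\theta$
\[
\nabla_\theta J^{\mathrm{sur}}(\phi^\star,\theta)=\E_{x,\varepsilon}\!\left[\bigl(\nabla_\theta\mu_\theta(x,\varepsilon)\bigr)^\top \nabla_{\hat c}h^{\mathrm{sur}}\bigl(f^\star(x),\mu_\theta(x,\varepsilon)\bigr)\right],
\]
where realizability (Assumption~\ref{ass:realizable}) gives $f_{\phi^\star}=f^\star$. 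Fix $\theta,\theta'$ and abbreviate $A=\nabla_\theta\mu_\theta(x,\varepsilon)$, $A'=\nabla_\theta\mu_{\theta'}(x,\varepsilon)$, $u=\nabla_{\hat c}h^{\mathrm{sur}}(f^\star(x),\mu_\theta(x,\varepsilon))$, $u'=\nabla_{\hat c}h^{\mathrm{sur}}(f^\star(x),\mu_{\theta'}(x,\varepsilon))$.

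The key step is the add-and-subtract split $A^\top u-A'^\top u'=(A-A')^\top u+A'^\top(u-u')$. For the first term I use $\|A-A'\|\le L_\mu\|\theta-\theta'\|$ (Assumption~\ref{ass:reparam}) and $\|u\|\le G_{\mathrm{sur}}$ (Assumption~\ref{ass:surrogate}(i)). For the second, $\|A'\|\le G_\mu$ and, by Assumption~\ref{ass:surrogate}(ii) in the $\hat c$ argument, $\|u-u'\|\le L^{\mathrm{sur}}_{\hat c}\|\mu_\theta(x,\varepsilon)-\mu_{\theta'}(x,\varepsilon)\|$.

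It remains to show that $\theta\mapsto\mu_\theta(x,\varepsilon)$ is $G_\mu$-Lipschitz. Since $\mu_\theta$ is $C^1$ in $\theta$ on the convex domain $\R^{d_\theta}$, the mean value inequality gives
\[
\|\mu_\theta-\mu_{\theta'}\|\le \sup_{s\in[0,1]}\|\nabla_\theta\mu_{\theta'+s(\theta-\theta')}\|\,\|\theta-\theta'\|\le G_\mu\|\theta-\theta'\|.
\]
Combining the two terms pointwise and then applying Jensen's inequality to pull the norm inside the expectation yields
\[
\|\nabla_\theta J^{\mathrm{sur}}(\phi^\star,\theta)-\nabla_\theta J^{\mathrm{sur}}(\phi^\star,\theta')\|\le \bigl(L_\mu G_{\mathrm{sur}}+G_\mu^2L^{\mathrm{sur}}_{\hat c}\bigr)\|\theta-\theta'\|,
\]
so I will set $L^{\mathrm{sur}}_\theta:=L_\mu G_{\mathrm{sur}}+G_\mu^2L^{\mathrm{sur}}_{\hat c}$.

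For $V^\alpha$, note that $\Vcost(\theta)=J(\phi^\star,\theta)$ under realizability, and this is $L^J_\theta$-smooth by Lemma~\ref{lem:J-reg}(ii) with $\phi=\phi^\star\in\Phi$. By linearity of the gradient and the triangle inequality,
\[
\|\nabla V^\alpha(\theta)-\nabla V^\alpha(\theta')\|\le \alpha L^J_\theta\|\theta-\theta'\|+(1-\alpha)L^{\mathrm{sur}}_\theta\|\theta-\theta'\|,
\]
which is the claim with $L^\alpha_\theta=\alpha L^J_\theta+(1-\alpha)L^{\mathrm{sur}}_\theta$.

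The main obstacle is rigor in the first step, namely interchanging $\nabla_\theta$ with $\E_{x,\varepsilon}$. I would justify this by dominated convergence, since the integrand's derivative is bounded by $G_\mu G_{\mathrm{sur}}$ uniformly in $\theta$. A second, minor point is that the Lemma~\ref{lem:J-reg} constants assume the uniform bound $B_{f_\phi}$, which covers $f^\star=f_{\phi^\star}$.
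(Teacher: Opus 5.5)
Your proof is correct and follows the paper's argument essentially line for line: the same add-and-subtract split $(A-A')^\top u + A'^\top(u-u')$, the same bounds giving $L^{\mathrm{sur}}_\theta = L_\mu G_{\mathrm{sur}} + G_\mu^2 L^{\mathrm{sur}}_{\hat c}$, and the same convex-combination step that uses Lemma~\ref{lem:J-reg}(ii) for $\Vcost = J(\phi^\star,\cdot)$. Your mean-value-inequality derivation of the $G_\mu$-Lipschitzness of $\theta\mapsto\mu_\theta(x,\varepsilon)$ and your dominated-convergence remark add a little rigor to steps the paper simply asserts from Assumption~\ref{ass:reparam}.
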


\begin{proof}
We show $J^{\mathrm{sur}}(\phi^\star, \cdot)$ is $L^{\mathrm{sur}}_\theta$-smooth,
the smoothness of $V^\alpha$ then follows because a convex combination of smooth
functions is smooth.
By Assumption~\ref{ass:reparam}, we may write
\[
\nabla_\theta J^{\mathrm{sur}}(\phi^\star,\theta)
= \E_{x,\varepsilon}\!\left[(\nabla_\theta \mu_\theta(x,\varepsilon))^\top \nabla_{\hat c} h^{\mathrm{sur}}(f^\star(x),\mu_\theta(x,\varepsilon))\right].
\]
For any $\theta,\theta'\in\R^{d_\theta}$, denote $\hat c = \mu_\theta(x,\varepsilon)$ and $\hat c' = \mu_{\theta'}(x,\varepsilon)$. Adding and subtracting $(\nabla_\theta\mu_{\theta'}(x,\varepsilon))^\top \nabla_{\hat c}h^{\mathrm{sur}}(f^\star(x),\hat c')$ and applying the triangle inequality,
\begin{align*}
&\|\nabla_\theta J^{\mathrm{sur}}(\phi^\star,\theta) - \nabla_\theta J^{\mathrm{sur}}(\phi^\star,\theta')\| \\
&\quad\le \E_{x,\varepsilon}\!\left[\|\nabla_\theta\mu_\theta(x,\varepsilon)-\nabla_\theta\mu_{\theta'}(x,\varepsilon)\|\,\|\nabla_{\hat c}h^{\mathrm{sur}}(f^\star(x),\hat c)\|\right] \\
&\qquad + \E_{x,\varepsilon}\!\left[\|\nabla_\theta\mu_{\theta'}(x,\varepsilon)\|\,\|\nabla_{\hat c}h^{\mathrm{sur}}(f^\star(x),\hat c)-\nabla_{\hat c}h^{\mathrm{sur}}(f^\star(x),\hat c')\|\right] \\
&\quad\le L_\mu G_{\mathrm{sur}}\,\|\theta-\theta'\| + G_\mu L_{\hat c}^{\mathrm{sur}} \,\E_{x,\varepsilon}\!\left[\|\hat c-\hat c'\|\right] \\
&\quad\le \bigl(L_\mu G_{\mathrm{sur}} + G_\mu^2 L_{\hat c}^{\mathrm{sur}}\bigr)\,\|\theta-\theta'\|,
\end{align*}
where the second inequality uses Assumption~\ref{ass:reparam} ($\|\nabla_\theta\mu\|\le G_\mu$, $L_\mu$-Lipschitz) and Assumption~\ref{ass:surrogate}(i)-(ii); the third inequality uses $\|\hat c-\hat c'\|=\|\mu_\theta(x,\varepsilon)-\mu_{\theta'}(x,\varepsilon)\|\le G_\mu\|\theta-\theta'\|$ from Assumption~\ref{ass:reparam}. Setting $L^{\mathrm{sur}}_\theta := L_\mu G_{\mathrm{sur}} + G_\mu^2 L_{\hat c}^{\mathrm{sur}}$ proves smoothness of $J^{\mathrm{sur}}(\phi^\star,\cdot)$. Since $\Vcost = J(\phi^\star,\cdot)$ is $L^J_\theta$-smooth (Lemma~\ref{lem:J-reg}(ii)) and $V^\alpha$ is a convex combination, $V^\alpha$ is $L^\alpha_\theta$-smooth with $L^\alpha_\theta := \alpha L^J_\theta + (1-\alpha)L^{\mathrm{sur}}_\theta$.
\end{proof}

\begin{proposition}[Averaged surrogate bias gradient]\label{prop:sur-bias-grad}
Under Assumptions~\ref{ass:realizable}--\ref{ass:nuisance},\ref{ass:surrogate}--\ref{ass:sur-moments}, with step size $\eta_t=1/(\lambda t)$ in the nuisance update,
$$\frac{1}{T}\sum_{t=1}^T \E\bigl[\|\nabla_\theta B^{\mathrm{sur}}(\phi_t, \theta_t)\|^2\bigr]
  \le \frac{{(L_c^\mathrm{sur})^2} G_\mu^2\,L_{f_\phi}^2 G_\RNE^2 (1 + \log T)}{\lambda^2 T}
  = \calO\!\left(\frac{\log T}{T}\right)$$
\end{proposition}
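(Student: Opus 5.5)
The plan mirrors the proof of Proposition~\ref{prop:bias-avg}. First I will express the bias gradient in closed form via the reparameterization of Assumption~\ref{ass:reparam}. Then I will bound it pointwise by the parameter error $\|\phi_t-\phi^\star\|^2$. Finally I will convert that error into excess nuisance loss and invoke the nuisance regret bound of Proposition~\ref{prop:nuisance-regret}(ii).

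\textbf{Step 1 (gradient representation).} As in the unbiasedness proposition for $g^{\mathrm{sur}}_t$, Assumptions~\ref{ass:reparam} and~\ref{ass:surrogate}(i) justify differentiating under the expectation for both $\phi_t$ and $\phi^\star$. Writing $\hat c=\mu_{\theta_t}(x,\varepsilon)$, this gives
\[
\nabla_\theta B^{\mathrm{sur}}(\phi_t,\theta_t)=\E_{x,\varepsilon}\Bigl[\bigl(\nabla_\theta\mu_{\theta_t}(x,\varepsilon)\bigr)^\top\bigl(\nabla_{\hat c}h^{\mathrm{sur}}(f_{\phi_t}(x),\hat c)-\nabla_{\hat c}h^{\mathrm{sur}}(f^\star(x),\hat c)\bigr)\Bigr].
\]
Here $\theta_t$ and $\phi_t$ are held fixed, so we condition on them; the fresh $(x,\varepsilon)$ are independent of them.

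\textbf{Step 2 (pointwise bound).} I apply Jensen's inequality to move the squared norm inside the expectation. I then bound the Jacobian by $\|\nabla_\theta\mu\|\le G_\mu$ (Assumption~\ref{ass:reparam}). The difference of surrogate gradients is controlled by the first-argument Lipschitz condition of Assumption~\ref{ass:surrogate}(ii), and Assumption~\ref{ass:policy-class} together with $f^\star=f_{\phi^\star}$ (Assumption~\ref{ass:realizable}) handles the nuisance difference. Together these give
\[
\|\nabla_\theta B^{\mathrm{sur}}(\phi_t,\theta_t)\|^2\le G_\mu^2 (L_c^{\mathrm{sur}})^2\,\E_x\|f_{\phi_t}(x)-f^\star(x)\|^2\le G_\mu^2(L_c^{\mathrm{sur}})^2L_{f_\phi}^2\|\phi_t-\phi^\star\|^2 .
\]

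\textbf{Step 3 (from parameter error to regret).} Remark~\ref{rem:universal_min} makes $\phi^\star$ a minimizer of every $\RNE_{\theta_t}$, and Assumption~\ref{ass:nuisance}(iii) gives $\lambda$-strong convexity. Together they yield the quadratic-growth inequality~\eqref{eq:quad-growth}:
\[
\|\phi_t-\phi^\star\|^2\le \tfrac{2}{\lambda}\bigl(\RNE_{\theta_t}(f_{\phi_t})-\RNE_{\theta_t}(f^\star)\bigr).
\]
Taking expectations, averaging over $t$, and applying Proposition~\ref{prop:nuisance-regret}(ii) bounds the average of the right-hand side by $G_\RNE^2(1+\log T)/(2\lambda T)$. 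The factors $2/\lambda$ and $1/(2\lambda)$ combine to $1/\lambda^2$, which yields exactly the claimed constant $(L_c^{\mathrm{sur}})^2G_\mu^2L_{f_\phi}^2G_\RNE^2(1+\log T)/(\lambda^2T)$.

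\textbf{Main obstacle.} The only delicate point is the regret bound in Step 3. Proposition~\ref{prop:nuisance-regret} is stated for an arbitrary sequence $\{\theta_t\}$, but here $\theta_t$ is random and coupled to past nuisance iterates. Two facts make the proposition still apply. First, its proof compares against the fixed universal minimizer $\phi^\star$. Second, it uses only conditional unbiasedness of $\nabla_\phi\ell_t$ given the history; this holds because $(x_t,c_t)$ is fresh and $w_t\sim\pi_{\theta_t}$, so $\E[\nabla_\phi\ell_t(\phi_t)\mid\mathcal{F}_{t-1}]=\nabla_\phi\RNE_{\theta_t}(f_{\phi_t})$. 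I will state this filtration argument explicitly. The remaining steps are routine.
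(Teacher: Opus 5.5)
Your proposal is correct and follows the paper's proof step for step: the reparameterized representation of $\nabla_\theta B^{\mathrm{sur}}$, then Jensen with the bounds $G_\mu$ and $L_c^{\mathrm{sur}}$, then the quadratic-growth conversion via the universal minimizer and $\lambda$-strong convexity, then Proposition~\ref{prop:nuisance-regret}(ii), with the constants combining exactly as you compute. You also add two precisions the paper leaves implicit: the filtration argument for applying the nuisance regret bound to the random, coupled sequence $\{\theta_t\}$, and the explicit citation of Assumptions~\ref{ass:reparam} and~\ref{ass:policy-class}, which the paper's proof uses even though the statement does not list them.
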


\begin{proof}
Differentiating under the expectation (justified by Assumptions~\ref{ass:surrogate}(i) and~\ref{ass:reparam}),
$$\nabla_\theta B^{\mathrm{sur}}(\phi_t, \theta_t)
  = \E_{x,\varepsilon}\!\Bigl[
      (\nabla_\theta \mu_{\theta_t}(x, \varepsilon))^\top
      \bigl(\nabla_{\hat c} h^{\mathrm{sur}}(f_{\phi_t}(x), \hat c)
      - \nabla_{\hat c} h^{\mathrm{sur}}(f^\star(x), \hat c)\bigr)\Bigr]$$
Jensen's inequality, Cauchy-Schwarz, $\|\nabla_\theta\mu\|\le G_\mu$, and Assumption~\ref{ass:surrogate}(ii) give
\begin{equation}\label{eq:sur-bias-bound1}
  \|\nabla_\theta B^{\mathrm{sur}}(\phi_t, \theta_t)\|^2
  \le(L_c^\mathrm{sur})^2\, G_\mu^2 \,\E_x\bigl[\|f_{\phi_t}(x) - f^\star(x)\|^2\bigr]
\end{equation}
By strong convexity
$\E_x[\|f_{\phi_t}(x)-f^\star(x)\|^2] \le \frac{2L_{f_\phi}^2}{\lambda}\,\E[\RNE_{\theta_t}(f_{\phi_t})-\RNE_{\theta_t}(f^\star)]$.
Averaging~\eqref{eq:sur-bias-bound1} over $t$ and applying Proposition~\ref{prop:nuisance-regret}(ii) yields the claim.
\end{proof}

\begin{lemma}[One-Step Descent, Surrogate Variant]\label{lem:sur-descent}
 Under Assumptions~\ref{ass:realizable}--\ref{ass:nuisance},\ref{ass:surrogate}--\ref{ass:sur-moments}, the iterates of the hybrid update $\theta_{t+1} = \theta_t - \beta g^{\alpha,\mathrm{sur}}_t$ satisfy
\begin{equation}\label{eq:sur-descent}
\begin{aligned}
\E[V^\alpha(\theta_{t+1})] - \E[V^\alpha(\theta_t)]
&\le -\frac{\beta}{2}\E\big[\|\nabla V^\alpha(\theta_t)\|^2\big] \\
&\quad +\frac{\beta(1-\alpha)^2}{2}
   \E\big[\|\nabla_\theta B^{\mathrm{sur}}(\phi_t, \theta_t)\|^2\big] \\
&\quad +\frac{L^\alpha_\theta \beta^2}{2}(\tilde G_\alpha^2 + \tilde\sigma_\alpha^2).
\end{aligned}
\end{equation}
with $\tilde\sigma_\alpha^2 := (\alpha \sigma_{\mathrm{score}} + (1-\alpha)\sigma_{\mathrm{sur}})^2$ and $\tilde G_\alpha := \alpha G^J + (1-\alpha) G^J_\mathrm{sur}$.
\end{lemma}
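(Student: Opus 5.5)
The plan mirrors the proof of Lemma~\ref{lem:one-step-alpha}, with $\Vcost$ replaced by $V^\alpha$ and $J$ by $J^{\mathrm{sur}}$. First, Lemma~\ref{lem:Valpha-smooth} gives $L^\alpha_\theta$-smoothness of $V^\alpha$. Applying the descent lemma to $\theta_{t+1}=\theta_t-\beta g^{\alpha,\mathrm{sur}}_t$ then yields
\[
V^\alpha(\theta_{t+1})\le V^\alpha(\theta_t)-\beta\langle\nabla V^\alpha(\theta_t),g^{\alpha,\mathrm{sur}}_t\rangle+\tfrac{L^\alpha_\theta\beta^2}{2}\|g^{\alpha,\mathrm{sur}}_t\|^2 .
\]
Next I take the conditional expectation given $(\phi_t,\theta_t)$. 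Since $\theta_t$ is measurable with respect to this conditioning, the inner product only sees the conditional mean of the estimator.

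\emph{Linear term.} I substitute Lemma~\ref{lem:sur-bias}, $\E[g^{\alpha,\mathrm{sur}}_t\mid\phi_t,\theta_t]=\nabla V^\alpha(\theta_t)+(1-\alpha)\nabla_\theta B^{\mathrm{sur}}(\phi_t,\theta_t)$. This gives
\[
-\beta\|\nabla V^\alpha(\theta_t)\|^2-\beta(1-\alpha)\langle\nabla V^\alpha(\theta_t),\nabla_\theta B^{\mathrm{sur}}(\phi_t,\theta_t)\rangle .
\]
I then apply Young's inequality $-\langle a,b\rangle\le\tfrac12\|a\|^2+\tfrac12\|b\|^2$ with $a=\nabla V^\alpha(\theta_t)$ and $b=(1-\alpha)\nabla_\theta B^{\mathrm{sur}}(\phi_t,\theta_t)$. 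The result is $-\tfrac{\beta}{2}\|\nabla V^\alpha\|^2+\tfrac{\beta(1-\alpha)^2}{2}\|\nabla_\theta B^{\mathrm{sur}}\|^2$, which matches the first two terms of \eqref{eq:sur-descent}.

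\emph{Quadratic term.} I use the bias--variance decomposition $\E[\|g\|^2\mid\cdot]=\mathrm{Var}[g\mid\cdot]+\|\E[g\mid\cdot]\|^2$.
\begin{itemize}
\item \textbf{Variance.} Expanding the variance of $\alpha g^{\mathrm{score}}_t+(1-\alpha)g^{\mathrm{sur}}_t$ and bounding the cross-covariance by Cauchy--Schwarz gives $\mathrm{Var}\le(\alpha\sigma_{\mathrm{score}}+(1-\alpha)\sigma_{\mathrm{sur}})^2=\tilde\sigma_\alpha^2$. This uses Assumption~\ref{ass:gradient-bound} for the score part and Assumption~\ref{ass:sur-moments} for the surrogate part.
\item \textbf{Mean.} I write the conditional mean as $\alpha\nabla_\theta\Vcost(\theta_t)+(1-\alpha)\nabla_\theta J^{\mathrm{sur}}(\phi_t,\theta_t)$, using the two unbiasedness facts directly rather than the $V^\alpha+B^{\mathrm{sur}}$ form. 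By the triangle inequality, Lemma~\ref{lem:J-reg}(i) (since $\Vcost=J(\phi^\star,\cdot)$) and Assumption~\ref{ass:sur-grad-bound}, its norm is at most $\alpha G^J+(1-\alpha)G^J_{\mathrm{sur}}=\tilde G_\alpha$.
\end{itemize}
Together, $\E[\|g^{\alpha,\mathrm{sur}}_t\|^2\mid\phi_t,\theta_t]\le\tilde G_\alpha^2+\tilde\sigma_\alpha^2$. Combining both terms and taking full expectations (tower property) gives \eqref{eq:sur-descent}.

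\emph{Main obstacle.} The delicate step is the mean-norm bound. Bounding $\|\nabla V^\alpha+(1-\alpha)\nabla B^{\mathrm{sur}}\|$ term by term would give a looser constant, so I will avoid that form and instead rewrite the mean as the convex combination of $\nabla_\theta J(\phi^\star,\theta_t)$ and $\nabla_\theta J^{\mathrm{sur}}(\phi_t,\theta_t)$. The key point is that Assumption~\ref{ass:sur-grad-bound} holds uniformly over $\phi\in\Phi$, and this is exactly what lets the bound cover the current iterate $\phi_t$. The remaining steps are routine.
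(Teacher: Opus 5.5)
Your proposal is correct and follows the paper's proof essentially step for step: the descent lemma via Lemma~\ref{lem:Valpha-smooth}, Lemma~\ref{lem:sur-bias} plus Young's inequality for the linear term, and for the quadratic term the bias--variance split with the Cauchy--Schwarz covariance bound and the convex-combination mean bound $\alpha G^J+(1-\alpha)G^J_{\mathrm{sur}}$ (which is what the paper means by ``linearity of expectation''). Your explicit appeal to Assumptions~\ref{ass:gradient-bound} and~\ref{ass:sur-grad-bound} and to Lemma~\ref{lem:J-reg}(i) is slightly more careful than the lemma's stated hypothesis range, which omits them even though the proof uses them.
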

\begin{proof}
By Lemma~\ref{lem:Valpha-smooth}, $V^\alpha$ is $L^\alpha_\theta$-smooth, so the descent lemma yields
\begin{equation*}
V^\alpha(\theta_{t+1}) \le V^\alpha(\theta_t)
+ \langle \nabla V^\alpha(\theta_t), \theta_{t+1} - \theta_t\rangle
+ \frac{L^\alpha_\theta}{2}\|\theta_{t+1} - \theta_t\|^2
\end{equation*}
Substituting $\theta_{t+1} - \theta_t = -\beta g^{\alpha,\mathrm{sur}}_t$ and taking conditional expectation given $(\phi_t, \theta_t)$, Lemma~\ref{lem:sur-bias} gives
\begin{equation*}
\E[\langle \nabla V^\alpha, -\beta g^{\alpha,\mathrm{sur}}_t \rangle \mid \phi_t, \theta_t]
= -\beta \|\nabla V^\alpha(\theta_t)\|^2
- \beta\langle \nabla V^\alpha(\theta_t), (1-\alpha)\nabla_\theta B^{\mathrm{sur}}(\phi_t, \theta_t)\rangle
\end{equation*}
Applying Young's inequality to the cross term,
\begin{equation}\label{eq:sur-cross-term}
\E[\langle \nabla V^\alpha, -\beta g^{\alpha,\mathrm{sur}}_t \rangle \mid \phi_t, \theta_t]
\le -\frac{\beta}{2}\|\nabla V^\alpha(\theta_t)\|^2
+ \frac{\beta(1-\alpha)^2}{2}\|\nabla_\theta B^{\mathrm{sur}}(\phi_t, \theta_t)\|^2
\end{equation}
For the quadratic term, the bias-variance decomposition gives
$\E[\|g^{\alpha,\mathrm{sur}}_t\|^2 \mid \phi_t, \theta_t]
= \mathrm{Var}[g^{\alpha,\mathrm{sur}}_t \mid \phi_t, \theta_t]
+ \|\E[g^{\alpha,\mathrm{sur}}_t \mid \phi_t, \theta_t]\|^2$.
By the same Cauchy-Schwarz argument as in the proof of Lemma~\ref{lem:one-step-alpha} (applied with $\sigma_{\mathrm{sur}}$ in place of $\sigma_{\mathrm{plug\text{-}in}}$), and linearity of expectation
\begin{equation*}
\mathrm{Var}[g^{\alpha,\mathrm{sur}}_t \mid \phi_t, \theta_t]
\le (\alpha \sigma_{\mathrm{score}} + (1-\alpha)\sigma_{\mathrm{sur}})^2 = \tilde\sigma_\alpha^2,
\quad
\|\E[g^{\alpha,\mathrm{sur}}_t \mid \phi_t, \theta_t]\|^2
\le \tilde G_\alpha^2
\end{equation*}
with $\tilde G_\alpha := \alpha G^J + (1-\alpha) G^J_\mathrm{sur}$.
Substituting into the descent lemma and taking full expectations gives
\eqref{eq:sur-descent}.
\end{proof}

We restate Theorem~\ref{thm:sur-main} with explicit constants. The result follows by the same argument as Theorem~\ref{thm:main-alpha}, replacing Lemma~\ref{lem:one-step-alpha} with Lemma~\ref{lem:sur-descent} and Proposition~\ref{prop:bias-avg} with Proposition~\ref{prop:sur-bias-grad}.

\begin{theorem}[Non-convex stationarity, surrogate variant]
Let $V^{\alpha,\star} := \inf_\theta V^\alpha(\theta)$ where $V^\alpha(\theta) := \alpha\,\Vcost(\theta) + (1-\alpha)\,J^\mathrm{sur}(\phi^\star,\theta)$. Running Algorithm~\ref{alg:main} with $\beta = 1/(L^\alpha_\theta \sqrt{T})$ and $\eta_t = 1/(\lambda t)$ yields
\begin{equation}\label{eq:sur-main-bound-explicit}
\frac{1}{T}\sum_{t=1}^T \E\bigl[\|\nabla V^\alpha(\theta_t)\|^2\bigr]
\;\leq\; \frac{2 L^\alpha_\theta \Delta_1^\alpha
+ \tilde G_\alpha^2 + \tilde\sigma_\alpha^2}{\sqrt{T}}
+ \frac{(1-\alpha)^2{(L_c^\mathrm{sur})^2} G^2_\mu\,L_{f_\phi}^2 G_\RNE^2 (1 + \log T)}{\lambda^2 T}
\end{equation}
with $\Delta_1^\alpha = V^\alpha(\theta_1) - V^{\alpha,\star} $, $\tilde\sigma_\alpha^2 := (\alpha \sigma_{\mathrm{score}} + (1-\alpha)\sigma_{\mathrm{sur}})^2$ and $\tilde G_\alpha := \alpha G^J + (1-\alpha) G_{\mathrm{sur}}^J$.
\end{theorem}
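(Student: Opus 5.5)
The plan mirrors the proof of Theorem~\ref{thm:main-alpha}, with $V^\alpha$ in place of $\Vcost$. The starting point is the surrogate one-step descent inequality, Lemma~\ref{lem:sur-descent}, which already packages three ingredients: the $L^\alpha_\theta$-smoothness of $V^\alpha$ (Lemma~\ref{lem:Valpha-smooth}), the conditional-bias identity of Lemma~\ref{lem:sur-bias}, and the variance and mean bounds $\tilde\sigma_\alpha^2$ and $\tilde G_\alpha^2$. I rearrange Lemma~\ref{lem:sur-descent} to isolate $\frac{\beta}{2}\E[\|\nabla V^\alpha(\theta_t)\|^2]$ on the left. The right-hand side is then a telescoping difference $\E[V^\alpha(\theta_t)]-\E[V^\alpha(\theta_{t+1})]$, plus the bias term $\frac{\beta(1-\alpha)^2}{2}\E[\|\nabla_\theta B^{\mathrm{sur}}(\phi_t,\theta_t)\|^2]$, plus the noise term $\frac{L^\alpha_\theta\beta^2}{2}(\tilde G_\alpha^2+\tilde\sigma_\alpha^2)$.

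Next I sum over $t=1,\dots,T$. The values telescope to $V^\alpha(\theta_1)-\E[V^\alpha(\theta_{T+1})]$, which is at most $\Delta_1^\alpha$ since $V^\alpha\ge V^{\alpha,\star}$; here I implicitly need $V^{\alpha,\star}>-\infty$, which I will note as an assumption. Dividing by $\beta T/2$ gives
\[
\frac{1}{T}\sum_{t=1}^T\E\bigl[\|\nabla V^\alpha(\theta_t)\|^2\bigr]\le \frac{2\Delta_1^\alpha}{\beta T}+L^\alpha_\theta\beta(\tilde G_\alpha^2+\tilde\sigma_\alpha^2)+(1-\alpha)^2\frac{1}{T}\sum_{t=1}^T\E\bigl[\|\nabla_\theta B^{\mathrm{sur}}(\phi_t,\theta_t)\|^2\bigr].
\]
Substituting $\beta=1/(L^\alpha_\theta\sqrt T)$ turns the first two terms into $(2L^\alpha_\theta\Delta_1^\alpha+\tilde G_\alpha^2+\tilde\sigma_\alpha^2)/\sqrt T$.

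Finally, Proposition~\ref{prop:sur-bias-grad} bounds the averaged bias with $\eta_t=1/(\lambda t)$ and gives exactly the stated $\calO(\log T/T)$ term with constant $(L_c^{\mathrm{sur}})^2G_\mu^2L_{f_\phi}^2G_\RNE^2$.

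The only real subtlety sits upstream, inside Proposition~\ref{prop:sur-bias-grad}: converting nuisance regret into a bound on $\E_x\|f_{\phi_t}(x)-f^\star(x)\|^2$. This step uses the quadratic-growth inequality \eqref{eq:quad-growth}, which holds because $f^\star$ is a universal minimizer of every $\RNE_{\theta_t}$, together with the Lipschitz parameterization, so the policy-dependent losses can all be compared against one fixed comparator $\phi^\star$. Since that proposition is taken as given, the remaining care is bookkeeping: check that $\tilde G_\alpha$ really bounds $\|\E[g_t^{\alpha,\mathrm{sur}}\mid\phi_t,\theta_t]\|$, using $G^J$ for the score part and $G^J_{\mathrm{sur}}$ for the surrogate part at $\phi_t$. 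Also check that conditioning on $(\phi_t,\theta_t)$ is legitimate when taking the tower expectation. It is, because the fresh sample $(x_t,c_t,\hat c_t)$ is independent of the past given the current iterates.
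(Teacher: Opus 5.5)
Your proposal is correct and follows essentially the same route as the paper. Both arguments rearrange Lemma~\ref{lem:sur-descent}, telescope using $\E[V^\alpha(\theta_{T+1})]\ge V^{\alpha,\star}$, divide by $\beta T/2$, substitute $\beta=1/(L^\alpha_\theta\sqrt{T})$, and invoke Proposition~\ref{prop:sur-bias-grad}. Your side remark that $V^{\alpha,\star}>-\infty$ is implicitly needed is fair; otherwise $\Delta_1^\alpha=+\infty$ and the bound is vacuous, which the paper leaves unstated.
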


\begin{proof}[Proof of Theorem~\ref{thm:sur-main}]
Following the same proof as Theorem~\ref{thm:main-alpha}, rearranging Lemma~\ref{lem:sur-descent}, summing over $t=1,\dots,T$, telescoping, and using $\E[V^\alpha(\theta_{T+1})]\ge V^{\alpha,\star}$,
$$ \frac{\beta}{2}\sum_{t=1}^T \E\bigl[\|\nabla V^\alpha(\theta_t)\|^2\bigr]
  \le V^\alpha(\theta_1) - V^{\alpha,\star}
     +\frac{\beta(1-\alpha)^2}{2}\sum_{t=1}^T \E\bigl[\|\nabla_\theta B^{\mathrm{sur}}\|^2\bigr]
     +\frac{L^\alpha_\theta \beta^2 T}{2}(\tilde G_\alpha^2+\tilde\sigma_\alpha^2) $$
Dividing by $\beta T/2$, substituting $\beta=1/(L^\alpha_\theta\sqrt{T})$, and applying Proposition~\ref{prop:sur-bias-grad} yields~\eqref{eq:sur-main-bound-explicit}.
\end{proof}

\section{Benchmark data and generation details}
\label{app:synthetic_datagen}

\paragraph{Top-$k$ instance.}
The default top-$k$ configuration has \(d=15\) items, cardinality \(k=2\), and context dimension \(p=5\). Contexts are sampled i.i.d. according to \(x_t \sim \mathcal{N}(0,I_p)\). At environment construction time, the generator draws and fixes a binary coefficient matrix \(\omega_{\mathrm{top}}\in\{0,1\}^{d\times p}\) with i.i.d. Bernoulli\((0.5)\) entries. Let \(\deg\) denote the polynomial degree parameter and \(\bar\xi\) the multiplicative-noise radius. At iteration \(t\), it forms a latent vector \(u_t\in\mathbb{R}^d\) as \(u_t = [1 + (1 + \omega_{\mathrm{top}} x_t/\sqrt{p})^{\deg}] \odot \xi_t\), where \(\xi_{t,j}\sim \mathrm{Unif}[1-\bar\xi,\,1+\bar\xi]\) when multiplicative noise is enabled, while \(\xi_{t,j}=1\) otherwise. The realized cost vector is then \(c_t = -u_t\).

\paragraph{Shortest-path instance.}
The default shortest-path configuration has context dimension \(p=10\) and \(q=40\) edges over the fixed \(5\times 5\) right/down DAG described in Section~\ref{sec:synthetic_shortest_path}. Contexts are sampled i.i.d. as \(x_t \sim \mathcal{N}(0,I_p)\). The generator draws an independent binary coefficient matrix \(\omega_{\mathrm{sp}}\in\{0,1\}^{q\times p}\) and forms \(u_t\in\mathbb{R}^q\) by the same polynomial construction as in the top-$k$ instance, with \(\omega_{\mathrm{sp}}\) replacing \(\omega_{\mathrm{top}}\). The generated vector is used directly as the realized edge-cost vector, \(c_t = u_t\).

\paragraph{Pricing instance.}
The default pricing configuration has \(n=20\) products, \(K_{\mathrm{price}}=4\) normalized price levels in $[0.1, 0.9]$, promotion threshold \(L_{\mathrm{promo}}=2\), and budget \(B_{\mathrm{promo}}=10\). At environment initialization, each product \(i\) is assigned a fixed latent direction \(b_i \in \mathbb R^p\), sampled once and row-normalized. At the beginning of period \(t\), the context is drawn as \(x_t \sim \mathcal N(0,I_p)\), with default \(p=25\). Let \(k_c\) and \(k_p\) control the context and price nonlinearities, with \(k_c = k_p = \deg\) in the paper experiments. For each product--price pair \((i,\ell)\), the context-dependent mean demand is \(\mu_{i\ell}(x_t) = \operatorname{softplus}(|b_i^\top x_t|^{k_c} + (1-\tilde{\pi}_\ell)^{k_p})\). The environment may additionally apply a centered common multiplicative shock with scale \(\sigma_{\mathrm{shock}}\), default \(\sigma_{\mathrm{shock}}=0.1\), using \(\zeta_t \sim \mathcal N(0,1)\): \(\widetilde{\mu}_{i\ell,t} = \mu_{i\ell}(x_t)\exp(\sigma_{\mathrm{shock}}\, \zeta_t - \tfrac{1}{2}\sigma_{\mathrm{shock}}^2)\). Conditional on this perturbed mean, latent demand is sampled independently as \(y_{i\ell,t} \sim \operatorname{Poisson}(\widetilde{\mu}_{i\ell,t})\) across product--price pairs. Letting \(a_t(i)\in\{1,\dots,K_{\mathrm{price}}\}\) denote the price level assigned to product \(i\) by action \(w_t\), the realized revenue at period \(t\) is \(r_t = \sum_{i=1}^n \tilde{\pi}_{a_t(i)} y_{i,a_t(i),t}\), and the bandit feedback is the realized cost \(v_t = c_t^{\mathrm{price}}(w_t,x_t)^\top w_t = -r_t\).

\paragraph{Energy-scheduling instance.}
\label{app:energy_scheduling_results}
The energy-scheduling benchmark of Section~\ref{sec:synthetic_energy} follows the decision-focused learning benchmark suite of \citet{Mandi_2024} and uses empirical day-level SEMO price data. The context is a \(48\times 8\) slot-feature matrix, and \(\mathcal{J}\), \(\mathcal{M}\), \(\mathcal{U}\), \(\mathcal{T}_\mathrm{slot}=\{1,\dots,48\}\) index tasks, machines, resources, and slots. With per-task power \(p_j\) and slot length \(\delta_{\mathrm{slot}}\), the iteration-$t$ cost vector $c_t\in\mathbb{R}^{d_w}$ of Section~\ref{sec:synthetic_energy} has components
\(
  c_{jms,t} \;:=\; p_j\,\delta_{\mathrm{slot}}\!\!\sum_{s'=s}^{s+d_j-1}\!\pi^\star_{t,s'},
\)
giving the energy cost of task $j$ when started on machine $m$ at slot $s$ under price profile $\pi^\star_t$. The scheduling instance is fixed within a run, with feasible start-slot sets \(\mathcal{T}_j=\{s: a_j\le s,\ s+d_j-1\le b_j\}\) given by per-task earliest start \(a_j\) and latest end \(b_j\); remaining instance specifics (covariate definitions, exact task counts, and resource limits) follow \citet{Mandi_2024}. This benchmark uses chronological train/test splits over days rather than freshly sampled Gaussian contexts.

\paragraph{Optimization oracles.}
All four oracles are solved exactly: top-$k$ by sorting predicted item costs, shortest path and combinatorial pricing by dynamic programming, and energy scheduling as a mixed-integer program with Gurobi.

\paragraph{Feedback modes.}
All four benchmarks support pure bandit, semi-bandit, and full-information feedback in the cost-vector representation $c_t \in \mathbb{R}^{d_w}$ that matches Assumption~\ref{ass:feedback}. The three feedback modes share the same $(H, e, d_v)$ across benchmarks: pure bandit reveals the scalar realized cost $v_t = c_t^\top w_t$ via $H(w) = w^\top$, $e(w) = 1$, $d_v = 1$; semi-bandit reveals $v_t = \operatorname{diag}(w_t)\,c_t$, the cost contribution of each action-selected coordinate, via $H(w) = \operatorname{diag}(w)$, $e(w) = \mathbf{1}$, $d_v = d_w$; full information reveals $v_t = c_t$ via $H(w) = I$, $e(w) = w$, $d_v = d_w$. What differs across benchmarks is only what each coordinate of $c_t$ represents: per-item costs (top-$k$), per-edge costs (shortest path), per-(product, price-level) realized demand contributions (pricing), and per-(task, machine, start-slot) energy costs (energy). On the energy benchmark, semi-bandit feedback exposes these per-(task, machine, start-slot) entries of the lifted $c_t$, not the original 48-slot price profile.

\section{Baseline algorithms}
\label{app:baseline_algorithms}

The contextual-bandit baselines below use the same pure bandit feedback $v_t = c_t^\top w_t$ as Algorithm~\ref{alg:main}. GreedyCB exploits a point prediction, $\epsilon$-GreedyCB replaces that point prediction by a standard-normal random cost vector on exploration iterations (default $\varepsilon = 0.1$), and TSCB samples from the learned conditional cost distribution.

\IncMargin{1em}
\begin{algorithm}[H]
  \SetAlgoLined
  \KwIn{Initial parameters $\theta_1 \in \R^{d_\theta}$; stepsize $\eta_t > 0$}
  \For{$t = 1, 2, \dots, T$}{
    Observe context $x_t$\;
    Commit to $w_t \in \argmin_{w \in \calS} f_{\theta_t}(x_t)^\top w$\;
    Observe feedback $v_t$\;
    $L_t^{\mathrm{G}}(\theta)
      \gets
      \bigl(f_\theta(x_t)^\top w_t - v_t\bigr)^2$\;
    $\theta_{t+1}
      \gets
      \theta_t - \eta \,
      \nabla_\theta L_t^{\mathrm{G}}(\theta)\big|_{\theta=\theta_t}$\;
  }
  \caption{Greedy contextual bandit (\textsc{GreedyCB})}
  \label{alg:bandit_scalar_regression}
\end{algorithm}

\begin{algorithm}[H]
  \SetAlgoLined
  \KwIn{Initial parameters $\theta_1 \in \R^{d_\theta}$; stepsize $\eta > 0$; exploration probability $\varepsilon \in [0,1]$}
  \For{$t = 1, 2, \dots, T$}{
    Observe context $x_t$\;
    Draw $U_t\sim \operatorname{Unif}[0,1]$\;
    \eIf{$U_t\le \varepsilon$}{
      Sample a random exploratory cost vector $\xi_t\sim \mathcal N(0,I_d)$\;
      $\tilde c_t \gets \xi_t$\;
    }{
      $\tilde c_t \gets f_{\theta_t}(x_t)$\;
    }
    Commit to $w_t \in \argmin_{w \in \calS} \tilde c_t^\top w$\;
    Observe feedback $v_t$\;
    $L_t^{\epsilon\text{-}\mathrm{G}}(\theta)
      \gets
      \bigl(f_\theta(x_t)^\top w_t - v_t\bigr)^2$\;
    $\theta_{t+1}
      \gets
      \theta_t - \eta \,
      \nabla_\theta L_t^{\epsilon\text{-}\mathrm{G}}(\theta)\big|_{\theta=\theta_t}$\;
  }
  \caption{$\epsilon$-Greedy contextual bandit ($\epsilon$-\textsc{GreedyCB})}
  \label{alg:epsilon_greedy_contextual_bandit}
\end{algorithm}

\begin{algorithm}[H]
  \SetAlgoLined
  \KwIn{Initial parameters $\theta_1 \in \R^{d_\theta}$ of a conditional distribution $p_\theta(\cdot\mid x)$; stepsize $\eta > 0$}
  \For{$t = 1, 2, \dots, T$}{
    Observe context $x_t$\;
    Sample $\hat c_t\sim p_{\theta_t}(\cdot\mid x_t)$\;
    Commit to $w_t \in \argmin_{w \in \calS} \hat c_t^\top w$\;
    Observe feedback $v_t$\;
    Let $q_\theta(\cdot\mid x_t,w_t)$ be the distribution of $c^\top w_t$
    induced by $c\sim p_\theta(\cdot\mid x_t)$\;
    $L_t^{\mathrm{TS}}(\theta)
      \gets
      -\log q_\theta(v_t\mid x_t,w_t)$\;
    $\theta_{t+1}
      \gets
      \theta_t - \eta \,
      \nabla_\theta L_t^{\mathrm{TS}}(\theta_t)$\;
  }
  \caption{Thompson-style contextual bandit (\textsc{TSCB})}
  \label{alg:thompson_contextual_bandit}
\end{algorithm}
\DecMargin{1em}

\paragraph{Semi-bandit adaptation.}
Under semi-bandit feedback, the CB baselines retain their action-selection rules but replace the scalar squared-loss update by the vector form $\|v_t - H(w_t)\,f_\theta(x_t)\|^2$, i.e., the same nuisance least-squares objective minimized by $f_\phi$ in Algorithm~\ref{alg:main}, applied to the CB cost predictor. For coordinate-mask observations $H(w_t) = \operatorname{diag}(w_t)$ this reduces to a per-coordinate masked regression on the action-selected entries. CB learning rates are not retuned for semi-bandit; the comparison reflects identical CB policies fed a richer supervised signal.

\section{Generative parameterizations}
\label{app:generative_param}

\paragraph{Conditional normalizing-flow score function.}
For a conditional normalizing flow with tractable density $p_\theta(\cdot\mid x)$, the score function gradient uses the exact log-density,
\[
  g_{t,\mathrm{CNF}}^{\mathrm{score}}
  =
  y_t\,\nabla_\theta\log p_{\theta_t}\!\left(\hat c_t\mid x_t\right),
\]

\paragraph{Conditional diffusion score function.}
For a conditional diffusion model $p_\theta(\cdot\mid x)$, the log-density is intractable, so the score function gradient is approximated using an ELBO proxy $\mathcal B_\theta(c,x)$ for $\log p_\theta(c\mid x)$~\citep{ho2020denoising},
\[
  g_{t,\mathrm{diff}}^{\mathrm{score}}
  =
  y_t\,\nabla_\theta\mathcal B_{\theta_t}\!\left(\hat c_t,x_t\right),
\]
Because $\mathcal B_\theta$ is a biased proxy for $\log p_\theta$, the diffusion variant falls outside Theorem~\ref{thm:sur-main}'s coverage and is reported as an empirical extension; the diffusion configurations in Appendix~\ref{app:selected_hyperparameters} use $\alpha_{\max} = \alpha_{\min} = 0$ to deactivate the score function path in practice.

\paragraph{Plug-in/surrogate-gradient variants for generative cost models.}
Both generative parameterizations admit the same plug-in/surrogate-gradient construction. At iteration $t$, sample $K$ cost scenarios from the policy,
\[
  \hat c_t^{(1)},\ldots,\hat c_t^{(K)} \sim p_{\theta_t}(\cdot\mid x_t),
  \qquad
  \bar c_t := \frac{1}{K}\sum_{k=1}^K \hat c_t^{(k)},
\]
and let $g_{\mathrm{loss}}\bigl(\hat c, f_\phi(x)\bigr)$ denote the analytical plug-in/surrogate gradient at predicted cost $\hat c$ and nuisance estimate $f_\phi(x)$. We use two variants, defined as surrogate losses whose gradient with respect to $\theta$ flows only through the reparameterized samples $\hat c_t^{(k)}(\theta)$ and $\bar c_t(\theta)$, with $g_{\mathrm{loss}}$ evaluated at the current iterate.

\textit{Per-scenario gradient variant.} Evaluate the surrogate gradient separately for each generated scenario and average,
\[
  g_t^{\mathrm{ps}} = \frac{1}{K}\sum_{k=1}^K g_{\mathrm{loss}}\bigl(\hat c_t^{(k)},\, f_{\phi_t}(x_t)\bigr),
  \qquad
  L_t^{\mathrm{ps}}(\theta)
  =
  \frac{1}{K}\sum_{k=1}^K
  \bigl\langle \hat c_t^{(k)}(\theta),\, g_t^{\mathrm{ps}}\bigr\rangle .
\]
For oracle-call point surrogates, this requires $K$ surrogate evaluations.

\textit{Sample-average approximation (SAA) variant.} Average scenarios first, then evaluate the surrogate gradient once at the empirical mean,
\[
  g_t^{\mathrm{SAA}} = g_{\mathrm{loss}}\bigl(\bar c_t,\, f_{\phi_t}(x_t)\bigr),
  \qquad
  L_t^{\mathrm{SAA}}(\theta)
  =
  \bigl\langle \bar c_t(\theta),\, g_t^{\mathrm{SAA}}\bigr\rangle .
\]
For oracle-call point surrogates, SAA requires a single surrogate evaluation rather than $K$.

The two variants give the same gradient only when $g_{\mathrm{loss}}(\bar c_t,\, f_{\phi_t}(x_t)) = \tfrac{1}{K}\sum_{k=1}^K g_{\mathrm{loss}}(\hat c_t^{(k)},\, f_{\phi_t}(x_t))$, for example when the surrogate gradient is affine in the predicted cost or independent of it after fixing the candidate pool. For oracle-call losses such as SPO+, DBB, PG, IMLE, AIMLE, and PFYL, this need not hold because the oracle solution can change with the predicted cost.

\paragraph{Hybrid policy update for the diffusion parameterization.}
With auxiliary plug-in $g_t^{\mathrm{aux}} \in \{g_t^{\mathrm{ps}}, g_t^{\mathrm{SAA}}\}$ and the simple (unweighted) denoising-MSE regularizer
\[
  g_t^{\mathrm{reg}}
  =
  \nabla_\theta\,\mathbb{E}_{\tau,\,\varepsilon}\!\Bigl[\bigl\|\varepsilon - \varepsilon_{\theta_t}(\tilde c_t^{(\tau)}, \tau, x_t)\bigr\|^2\Bigr],
\]
where $\tilde c_t \sim p_{\theta_t}(\cdot \mid x_t)$ and $\tilde c_t^{(\tau)}$ is its forward-noised version at diffusion step $\tau$, the hybrid policy update is
\[
  g_t^{\alpha_t}
  =
  \alpha_t\, g_{t,\mathrm{diff}}^{\mathrm{score}}
  + (1-\alpha_t)\Bigl[\kappa_{\mathrm{dfl}}\, g_t^{\mathrm{aux}} + \lambda_{\mathrm{reg}}\, g_t^{\mathrm{reg}}\Bigr].
\]
The hyperparameters $\kappa_{\mathrm{dfl}}$ and $\lambda_{\mathrm{reg}}$ are tuned per setting (Appendix~\ref{app:selected_hyperparameters}).

Without the $\lambda_{\mathrm{reg}}\, g_t^{\mathrm{reg}}$ term, the auxiliary surrogate alone does not anchor the generated scenarios to the conditional density of the true cost: gradients can drift the actor into low-entropy or off-distribution regions that minimize $g_t^{\mathrm{aux}}$ locally without preserving the meaning of $\tilde c_t \sim p_{\theta_t}(\cdot \mid x_t)$.

\section{Score function variance-reduction baselines}
\label{app:advantage_choices}

The score function component in Algorithm~\ref{alg:main} uses the realized scalar feedback $v_t$ as the weight on $\nabla_\theta \log p_{\theta_t}(\hat c_t\mid x_t)$. In experiments we replace $v_t$ with $v_t - b_t$ for some baseline $b_t$,
\[
  g_t^{\mathrm{score}}
  =
  (v_t - b_t)\,\nabla_\theta\log p_{\theta_t}(\hat c_t\mid x_t),
\]
which preserves unbiasedness whenever $b_t$ is independent of $\hat c_t$ given $x_t$ and typically reduces variance. We consider two baselines.

\paragraph{Moving-average baseline.}
A scalar running average of past feedback. Initialize $\bar v_0 = 0$ and update
\[
  b_t^{\mathrm{ma}} = \bar v_t,
  \qquad
  \bar v_{t+1} = \psi\,\bar v_t + (1-\psi)\,v_t,
\]
with default momentum $\psi = 0.95$. This baseline is zero-cost and never noisier than $v_t$ itself, but ignores the context $x_t$.

\paragraph{Nuisance-induced baseline.}
A context-dependent baseline computed from the nuisance estimate. Let
\[
  w_t^\phi
  := w^\star(f_{\phi_t}(x_t))
  \in \argmin_{w \in \calS} f_{\phi_t}(x_t)^\top w
\]
denote the nuisance-induced optimal decision, and define
\[
  b_t^{\mathrm{ni}} = f_{\phi_t}(x_t)^\top w_t^\phi,
\]
the nuisance's own predicted optimum at $x_t$. When the implemented decision $w_t$ matches $w_t^\phi$, the gradient weight $v_t - b_t^{\mathrm{ni}}$ collapses to the residual $v_t - f_{\phi_t}(x_t)^\top w_t$; when $w_t$ is worse than $w_t^\phi$ under the nuisance, the weight is positive, so the gradient descent step reduces the probability of generating $\hat c_t$. This choice uses the context but adds one oracle solve per iteration. Since $b_t^{\mathrm{ni}}$ depends only on $x_t$ and past data, a miscalibrated nuisance affects the variance of the score function gradient but not its unbiasedness.

\section{Adaptive mixing-weight schedule}
\label{app:alpha_schedule}

Algorithm~\ref{alg:main} treats the mixing parameter $\alpha$ as a fixed scalar; in our experiments we instead use a time-varying $\alpha_t$, applied in the hybrid gradient $g_t^{\alpha_t} = \alpha_t\,g_t^{\mathrm{score}} + (1-\alpha_t)\,g_t^{\mathrm{plug\text{-}in}}$ (and analogously with $g_t^{\mathrm{sur}}$ in the surrogate variant). The schedule decays $\alpha_t$ from $\alpha_{\max}$ toward $\alpha_{\min}$, weighting the unbiased score function gradient more heavily early on (when the nuisance estimate is unreliable) and the lower-variance plug-in gradient later (as the nuisance estimate stabilizes). We use two variants that differ only in how the decay is gated. Both gates are specified in absolute iterations rather than as a fraction of $T$ to avoid leaking the horizon into the schedule.

\paragraph{Fixed-warmup variant.}
For the first $T_{\mathrm{warm}}$ iterations (default $T_{\mathrm{warm}} = 100$), the target is held at $\alpha_t^\star = \alpha_{\max}$; afterwards $\alpha_t^\star = \alpha_{\min}$. 

\paragraph{Exponential-gate variant.}
Used in focused generative-model tuning. The target blends $\alpha_{\max}$ and $\alpha_{\min}$ via an exponentially decaying weight,
\[
    g_t = \exp\!\left(-(t-1)/\tau_{\mathrm{sched}}\right), \qquad
    \alpha_t^\star = g_t\,\alpha_{\max} + (1-g_t)\,\alpha_{\min},
\]
where $\tau_{\mathrm{sched}}$ controls the decay timescale (smaller $\tau_{\mathrm{sched}}$ moves to the $\alpha_{\min}$ regime faster).

\paragraph{Smoothed update.}
In both variants, the actual mixing weight applies smoothing with rate $\eta_\alpha$ (default $0.05$) and re-clips,
\[
    \alpha_t = \mathrm{clip}\!\left((1-\eta_\alpha)\,\alpha_{t-1} + \eta_\alpha\,\alpha_t^\star,\;\alpha_{\min},\;\alpha_{\max}\right).
\]

\paragraph{Remark.}
Both variants use $\alpha_{\min}$ as a fixed post-warmup target. More adaptive heuristics are straightforward to substitute: for example, $\alpha_{\min}$ can be replaced by a reliability-driven target that responds to the running prediction error of the nuisance estimate (so that $\alpha_t$ rises again whenever the nuisance loses accuracy).

\section{Additional experimental results}
\label{app:additional_experimental_results}

This section collects supplementary tables and figures for the numerical study. The main text reports the headline comparisons; the appendix gives convergence plots, sensitivity analyses, and the feedback-mode comparison. Compute resources used to run the experiments are reported in Appendix~\ref{app:compute_resources}.

\subsection{Convergence plots}
\label{app:convergence_traces}

The main text reports the top-$k$ convergence plot. Figure~\ref{fig:app-convergence} gives the corresponding shortest-path, pricing, and real-data energy-scheduling trajectories, zoomed into the first 600 iterations of the \(T=2{,}000\) run for the synthetic benchmarks. On shortest path, all methods pay for the noisy first iterations, but the hybrid curve settles below the contextual-bandit baselines and remains the lowest curve through the later part of this window. On pricing, after a short transient, \textsc{DFHPG} and \textsc{DFHPG-0} separate from the rest of the methods and continue to reduce average regret, while \textsc{DFHPG-1} behaves much more like the contextual-bandit baselines, indicating that the scalar revenue feedback alone is not enough to recover the useful price structure and the decision-focused surrogate supplies most of the useful direction.

\begin{figure}[!ht]
\centering
\begin{subfigure}[t]{0.49\textwidth}
\centering
\includegraphics[width=\textwidth]{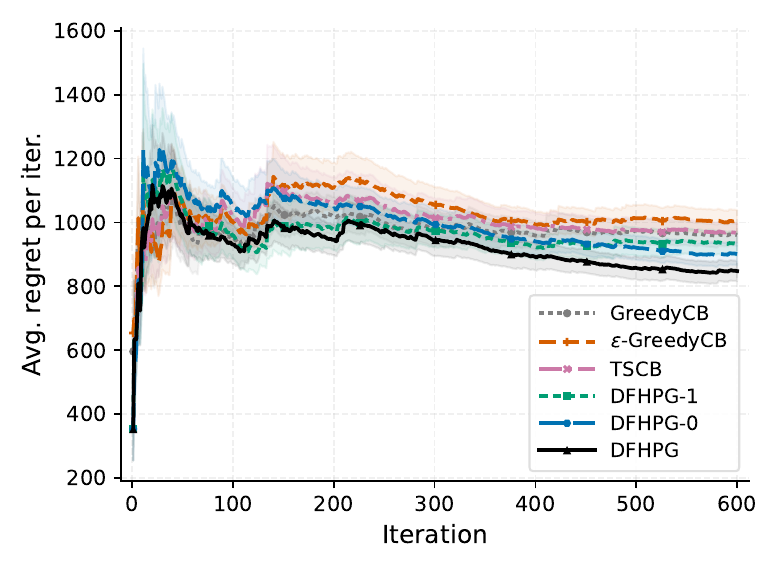}
\caption{Shortest path}
\end{subfigure}\hfill
\begin{subfigure}[t]{0.49\textwidth}
\centering
\includegraphics[width=\textwidth]{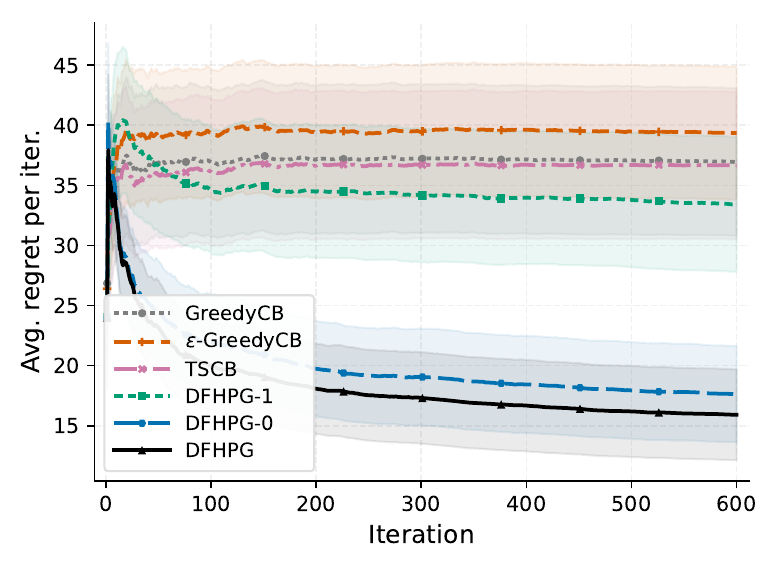}
\caption{Pricing}
\end{subfigure}

\vspace{0.75em}
\begin{subfigure}[t]{0.49\textwidth}
\centering
\includegraphics[width=\textwidth]{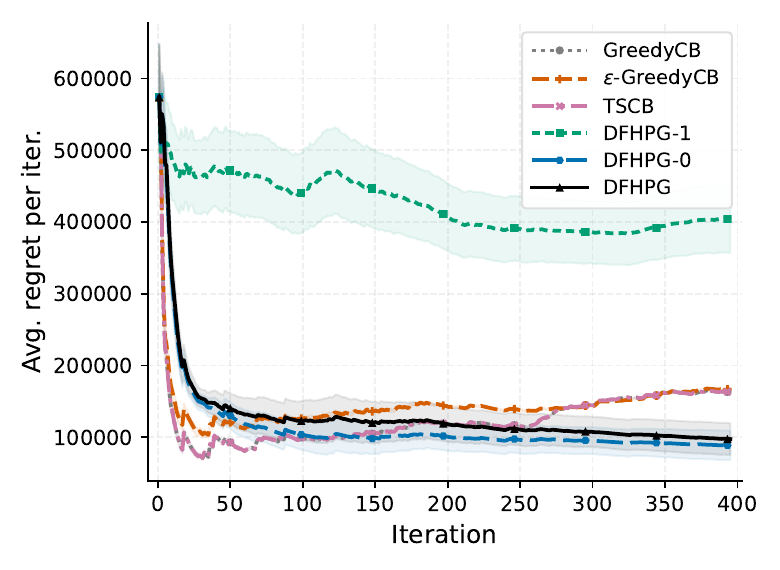}
\caption{Energy scheduling}
\end{subfigure}
\caption{Convergence plots for shortest path, pricing, and energy scheduling. Synthetic benchmarks are zoomed to the first 600 iterations; the energy benchmark uses its native test-window length. Each curve reports average regret per iteration, with shaded $\pm 1$ standard-error bands across replications.}
\label{fig:app-convergence}
\end{figure}

\subsection{Distributional cost models on the remaining benchmarks}
\label{app:model_variants_remaining}

The main text (Figure~\ref{fig:main-model-variants}) compares distributional cost models -- Gaussian linear, CNF, and diffusion -- on pricing at the extended horizon $T=15{,}000$. Figure~\ref{fig:app-model-variants-remaining} reports the same comparison on the three remaining benchmarks: top-$k$ selection, shortest path, and energy scheduling. The synthetic benchmarks use $T=15{,}000$; the energy benchmark uses its native test-window length.

On top-$k$ selection and shortest path, the ranking from pricing carries over: the Gaussian linear model leads early, while CNF closes the gap and reaches the lowest final regret. Diffusion trails throughout; one possible factor is the ELBO-based approximation that replaces an exact log-density. On the energy-scheduling benchmark, the differences across distributional cost models are within run-to-run variability, which we attribute to the low-signal real-data regime already noted in the main text.

\begin{figure}[!ht]
\centering
\begin{subfigure}[t]{0.49\textwidth}
\centering
\includegraphics[width=\textwidth]{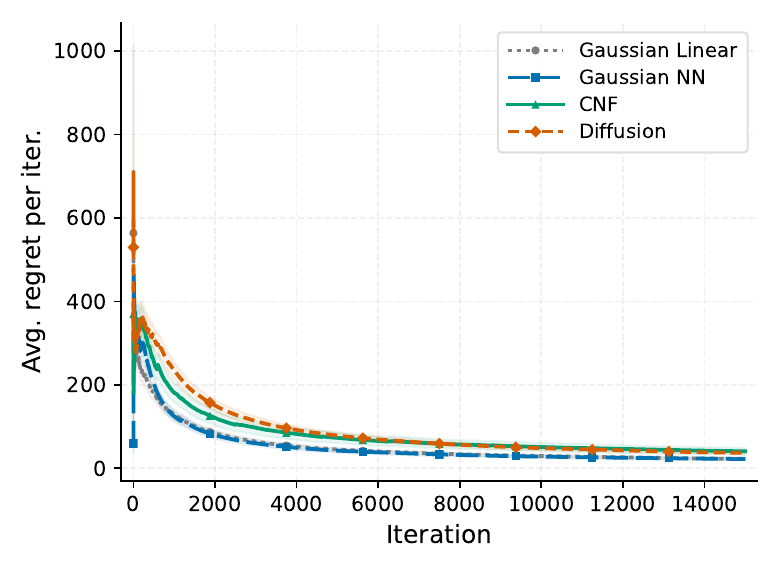}
\caption{Top-$k$ selection ($T=15{,}000$)}
\end{subfigure}\hfill
\begin{subfigure}[t]{0.49\textwidth}
\centering
\includegraphics[width=\textwidth]{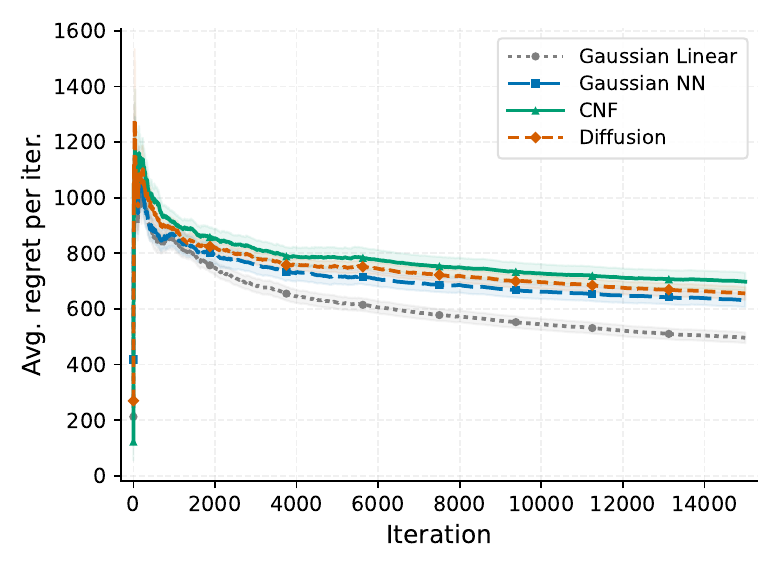}
\caption{Shortest path ($T=15{,}000$)}
\end{subfigure}

\vspace{0.75em}
\begin{subfigure}[t]{0.49\textwidth}
\centering
\includegraphics[width=\textwidth]{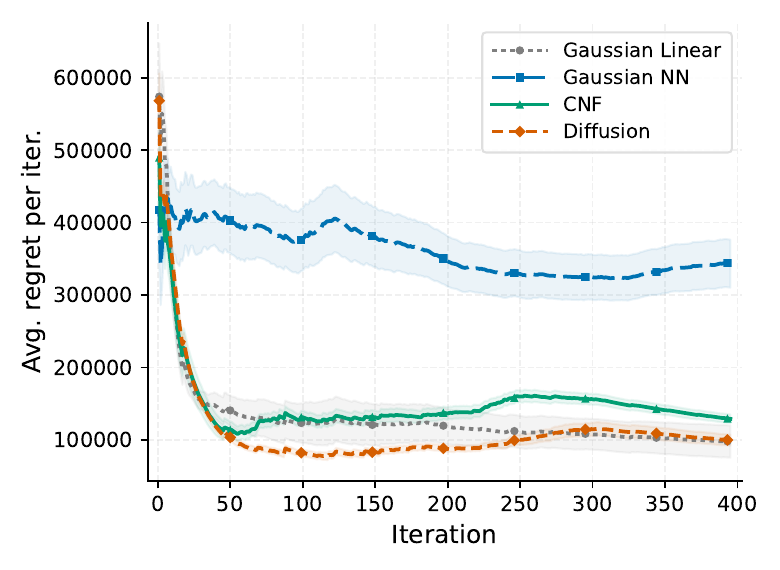}
\caption{Energy scheduling}
\end{subfigure}
\caption{Comparison of distributional cost models on top-$k$ selection, shortest path, and energy scheduling. Each curve reports average regret per iteration, with shaded $\pm 1$ standard-error bands across replications.}
\label{fig:app-model-variants-remaining}
\end{figure}

\subsection{Surrogate-loss definitions}
\label{app:surrogate_definitions}

We follow the gradient form of each surrogate from the reference implementations of PyEPO~\citep{tang2024pyepo}; pool-based variants follow the cumulative-pool formulation of the PredOpt-Benchmarks suite of \citet{Mandi_2024}. Throughout, $\hat c$ is the predicted cost vector, $c$ the target cost (the true cost in offline DFL; the nuisance estimate $f_\phi(x)$ in our hybrid plug-in update), $w^\star(\cdot) \in \arg\min_{w \in \calS}(\cdot)^\top w$ the linear-optimization oracle, $w^\star_{\mathrm{ptb}}(c) := \E_\xi[w^\star(c + \sigma\xi)]$ a perturbed-oracle expectation with $\xi \sim \mathcal{N}(0, I)$, and $\calL_{\mathrm{task}}(w; c) = c^\top w$ the regret-style task loss. All subgradients below are stated with respect to $\hat c$ in the cost-minimization convention; the parameter update applies $\hat c \leftarrow \hat c - \eta\,\nabla_{\hat c}\ell^{\mathrm{sur}}$.

\paragraph{Oracle-call surrogates.} These call $w^\star$ once or twice per gradient evaluation.
\begin{center}
\small
\resizebox{\textwidth}{!}{%
\begin{tabular}{ll}
\toprule
Loss & Subgradient $\nabla_{\hat c}\,\ell^{\mathrm{sur}}(\hat c, c)$ \\
\midrule
SPO+~\citep{elmachtoub2022smart} & $2\bigl(w^\star(c) - w^\star(2\hat c - c)\bigr)$ \\
MSE+SPO+ & $\beta_{\mathrm{mse}}(\hat c - c) + (1-\beta_{\mathrm{mse}})\,\nabla_{\hat c}\ell_{\mathrm{SPO+}}$ \\
DBB~\citep{vlastelica2020differentiation} & $\bigl(w^\star(\hat c) - w^\star(\hat c + \lambda_{\mathrm{ptb}} c)\bigr)/\lambda_{\mathrm{ptb}}$ \\
IMLE~\citep{niepert2021implicit} & $\bigl(w^\star_{\mathrm{ptb}}(\hat c) - w^\star_{\mathrm{ptb}}(\hat c + \lambda_{\mathrm{ptb}} c)\bigr)/\lambda_{\mathrm{ptb}}$ \\
AIMLE~\citep{minervini2023adaptive} & IMLE with adaptive $\lambda_{\mathrm{ptb}} = \alpha\,\|\hat c\|/\|c\|$, $\alpha$ updated by EMA on grad sparsity \\
NID~\citep{sahoo2023backpropagation} & $-c$ \quad (negative-identity straight-through) \\
PFYL~\citep{berthet2020learning} & $\bar w^\star_{\mathrm{ptb}}(\hat c) - w^\star(c)$ \\
DPO~\citep{berthet2020learning} & $\frac{1}{M\sigma}\sum_{i=1}^{M} \xi_i\,\bigl\langle w^\star(\hat c + \sigma\xi_i),\, c\bigr\rangle$,\; $\xi_i \sim \mathcal{N}(0, I)$ \\
PG~\citep{gupta2024decision} & $\bigl(w^\star(\hat c - \sigma c) - w^\star(\hat c)\bigr)/\sigma$ \\
\bottomrule
\end{tabular}%
}
\end{center}

\paragraph{Pool-based surrogates.} Let $\mathcal{P}_t$ denote the cumulative pool of feasible solutions at iteration $t$, grown online by appending $w^\star(\hat c_t)$ and $w^\star(c_t)$ at each step. Let $w_b := \arg\min_{w \in \mathcal{P}_t} c^\top w$ denote the pool's best solution under the target cost and $\calR_t := \mathcal{P}_t \setminus \{w_b\}$ the rest of the pool; let $\delta > 0$ be the LTR margin and $\tau_{\mathrm{lw}} > 0$ the listwise temperature.
\begin{center}
\small
\begin{tabular}{ll}
\toprule
Loss & Subgradient $\nabla_{\hat c}\,\ell^{\mathrm{sur}}(\hat c, c)$ \\
\midrule
NCE~\citep{mulamba2021contrastive} & $w^\star(c) - \tfrac{1}{|\mathcal{P}_t|}\sum_{w \in \mathcal{P}_t} w$ \\
contrastiveMAP~\citep{mulamba2021contrastive} & $w^\star(c) - \arg\min_{w \in \mathcal{P}_t}\hat c^\top w$ \\
pointwiseLTR~\citep{mandi2022decision} & $\tfrac{2}{|\mathcal{P}_t|}\sum_{w \in \mathcal{P}_t}\bigl((\hat c - c)^\top w\bigr)w$ \\
pairwiseDiff~\citep{mandi2022decision} & $\tfrac{2}{|\calR_t|}\sum_{r \in \calR_t}\Delta_{br}(w_b - w_r)$,\; $\Delta_{br} = (\hat c - c)^\top(w_b - w_r)$ \\
pairwiseLTR~\citep{mandi2022decision} & $\sum_{r \in \calR_t \,:\, \hat c^\top w_b > \hat c^\top w_r - \delta}(w_b - w_r)$ \quad (max-margin hinge) \\
listwiseLTR~\citep{mandi2022decision} & $\tfrac{1}{\tau_{\mathrm{lw}}}\bigl(\sigma_{\mathrm{soft}}(-\hat c^\top\!\mathcal{P}_t/\tau_{\mathrm{lw}}) - \sigma_{\mathrm{soft}}(-c^\top\!\mathcal{P}_t/\tau_{\mathrm{lw}})\bigr)\,\mathcal{P}_t$ \\
NCE\_c~\citep{mulamba2021contrastive} & Same subgradient as NCE; cached/centered pool variant \\
MAP\_c (MAP-C)~\citep{mulamba2021contrastive} & $w^\star(c) - \arg\min_{w \in \mathcal{P}_t}(\hat c-c)^\top w$ \quad (cached MAP variant) \\
SPOCaching~\citep{mulamba2021contrastive} & $\arg\min_{w \in \mathcal{P}_t}(2\hat c - c)^\top w - w^\star(c)$ \\
\bottomrule
\end{tabular}
\end{center}
Here $\sigma_{\mathrm{soft}}$ denotes softmax and $\hat c^\top\!\mathcal{P}_t \in \R^{|\mathcal{P}_t|}$ stacks the predicted-cost utilities of pool entries. The negative signs on the softmax arguments convert cost minimization to score maximization for the listwise ranking step. The heatmaps use implementation labels: SPOPlus and MSE+SPOPlus correspond to the SPO+ and MSE+SPO+ rows above, and MAP-C is the selected-configuration spelling of MAP\_c.

\paragraph{Use in the hybrid update.} With the reparameterization $\hat c = \mu_\theta(x,\varepsilon)$ of Assumption~\ref{ass:reparam}, any surrogate above enters the policy parameters via the chain rule
$g_t^{\mathrm{sur}} = (\nabla_\theta\mu_{\theta_t}(x_t,\varepsilon_t))^\top\,\nabla_{\hat c}\ell^{\mathrm{sur}}(\hat c_t, f_{\phi_t}(x_t))$,
i.e., the nuisance estimate $f_\phi(x)$ supplies the target cost $c$ in each formula above. Theorem~\ref{thm:sur-main} covers the convergence of the resulting hybrid update for any surrogate whose subgradient satisfies the regularity bounds of Assumption~\ref{ass:surrogate}.

\subsection{Alternative surrogate choices}
\label{app:surrogate_choices_results}

The surrogate loss for the decision-focused plug-in component is treated as a tunable hyperparameter rather than fixed to a default. This appendix reports the per-benchmark sweep over standard surrogate choices, implemented following the PyEPO reference implementations~\citep{tang2024pyepo}, across the pricing, shortest-path, top-$k$, and energy-scheduling benchmarks. All synthetic comparisons in this section use the Gaussian linear cost model; the energy comparison uses the same setup as in the main results. Each heatmap changes only the surrogate loss used by the decision-focused component; the score function update, nuisance update, contexts, feasible set, and feedback model are held fixed. The two columns correspond to the full hybrid update \textsc{DFHPG} and the plug-in-only update \textsc{DFHPG-0}. Cell values are mean final cumulative regret, so lower values are better.

\begin{figure}[!ht]
\centering
\begin{subfigure}[t]{0.49\textwidth}
\centering
\includegraphics[width=\textwidth]{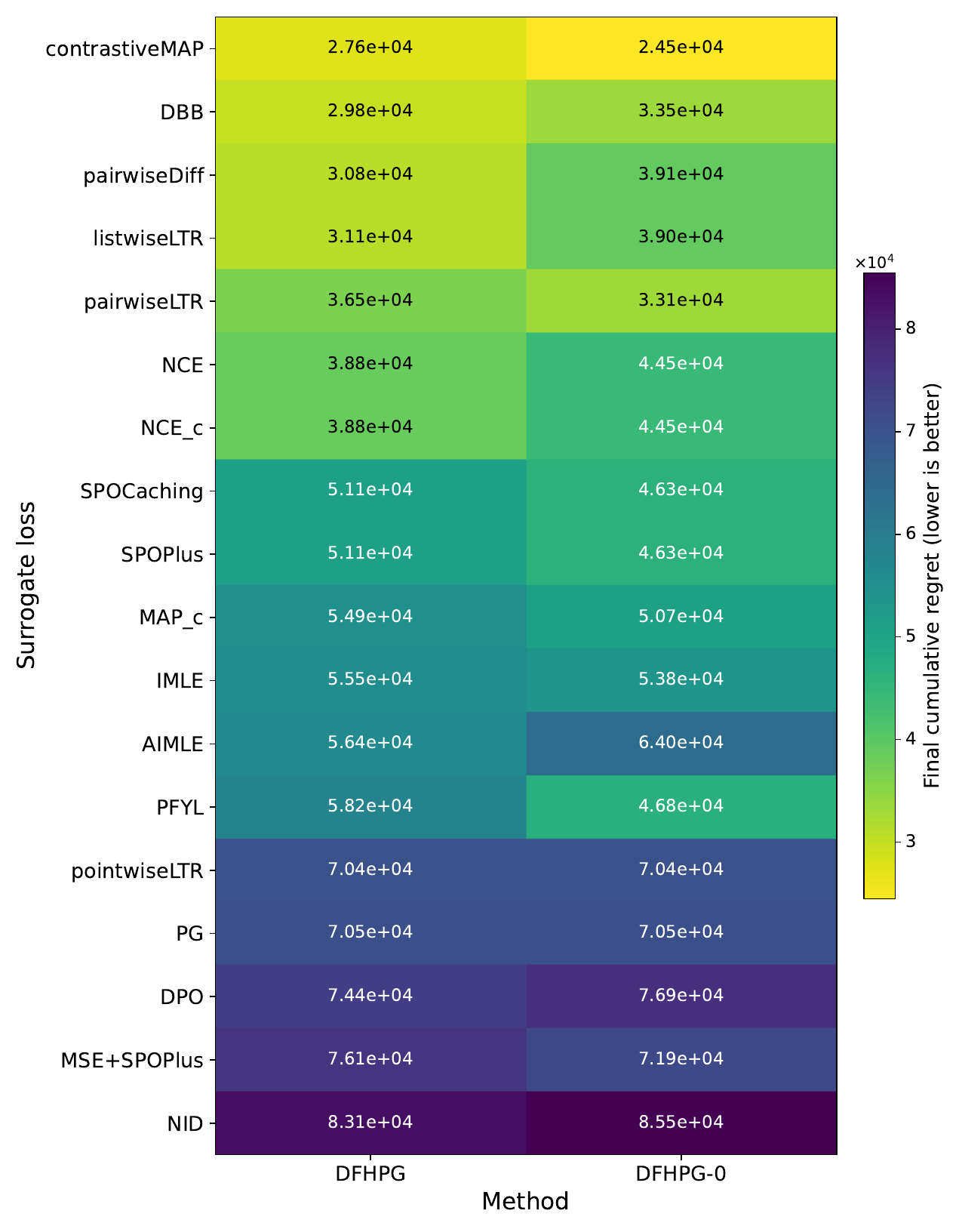}
\caption{Pricing}
\end{subfigure}\hfill
\begin{subfigure}[t]{0.49\textwidth}
\centering
\includegraphics[width=\textwidth]{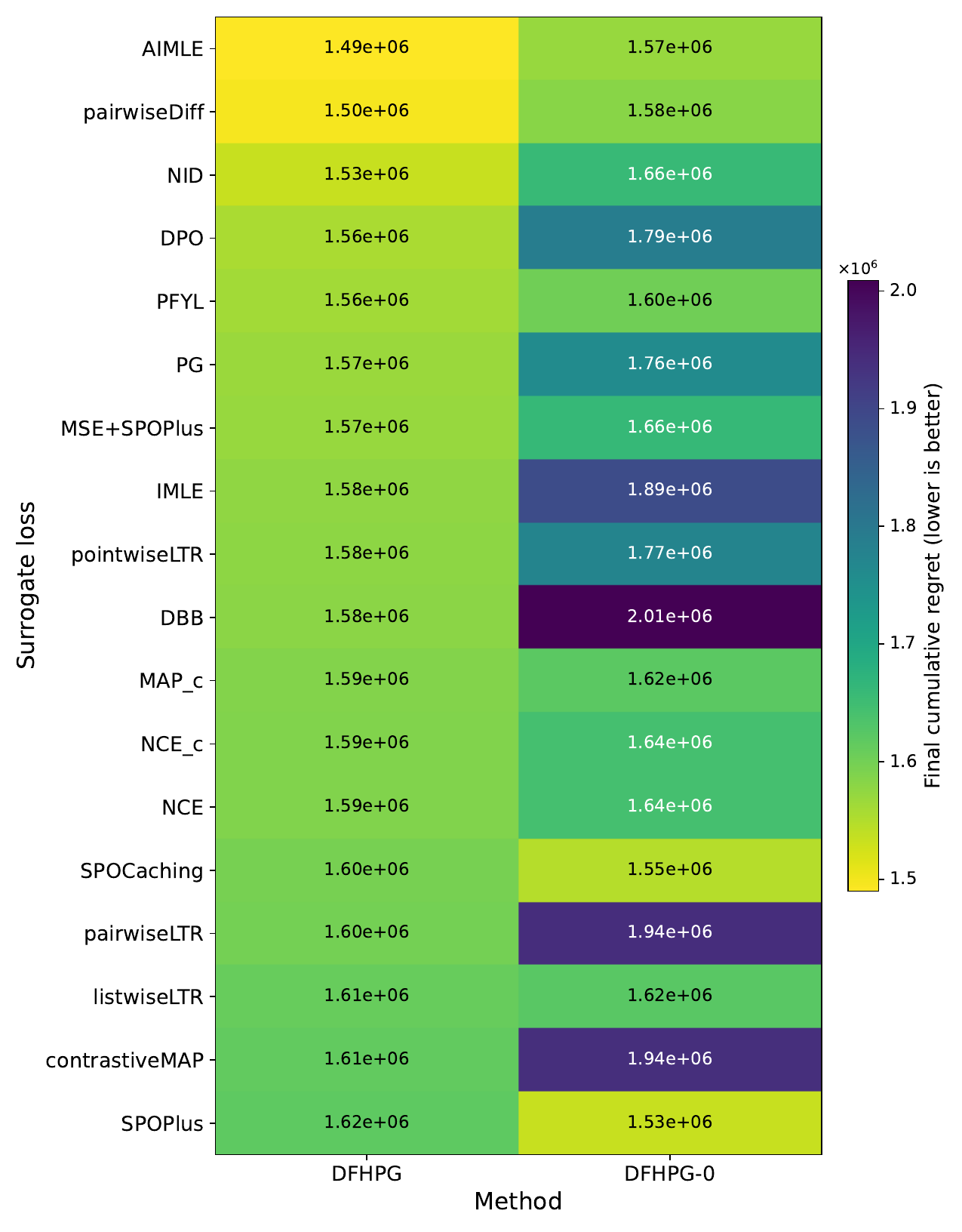}
\caption{Shortest path}
\end{subfigure}

\vspace{0.75em}
\begin{subfigure}[t]{0.49\textwidth}
\centering
\includegraphics[width=\textwidth]{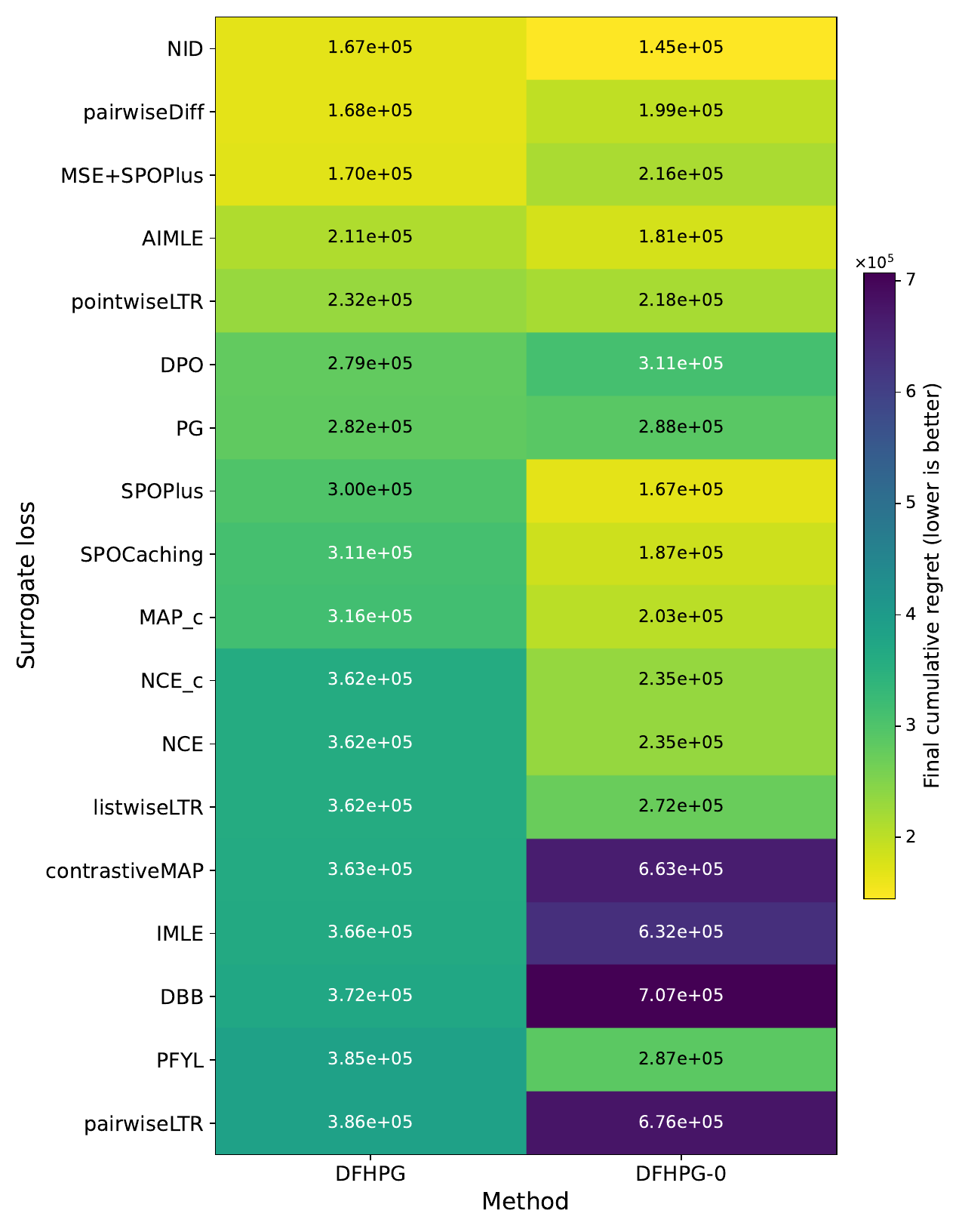}
\caption{Top-$k$ selection}
\end{subfigure}\hfill
\begin{subfigure}[t]{0.49\textwidth}
\centering
\includegraphics[width=\textwidth]{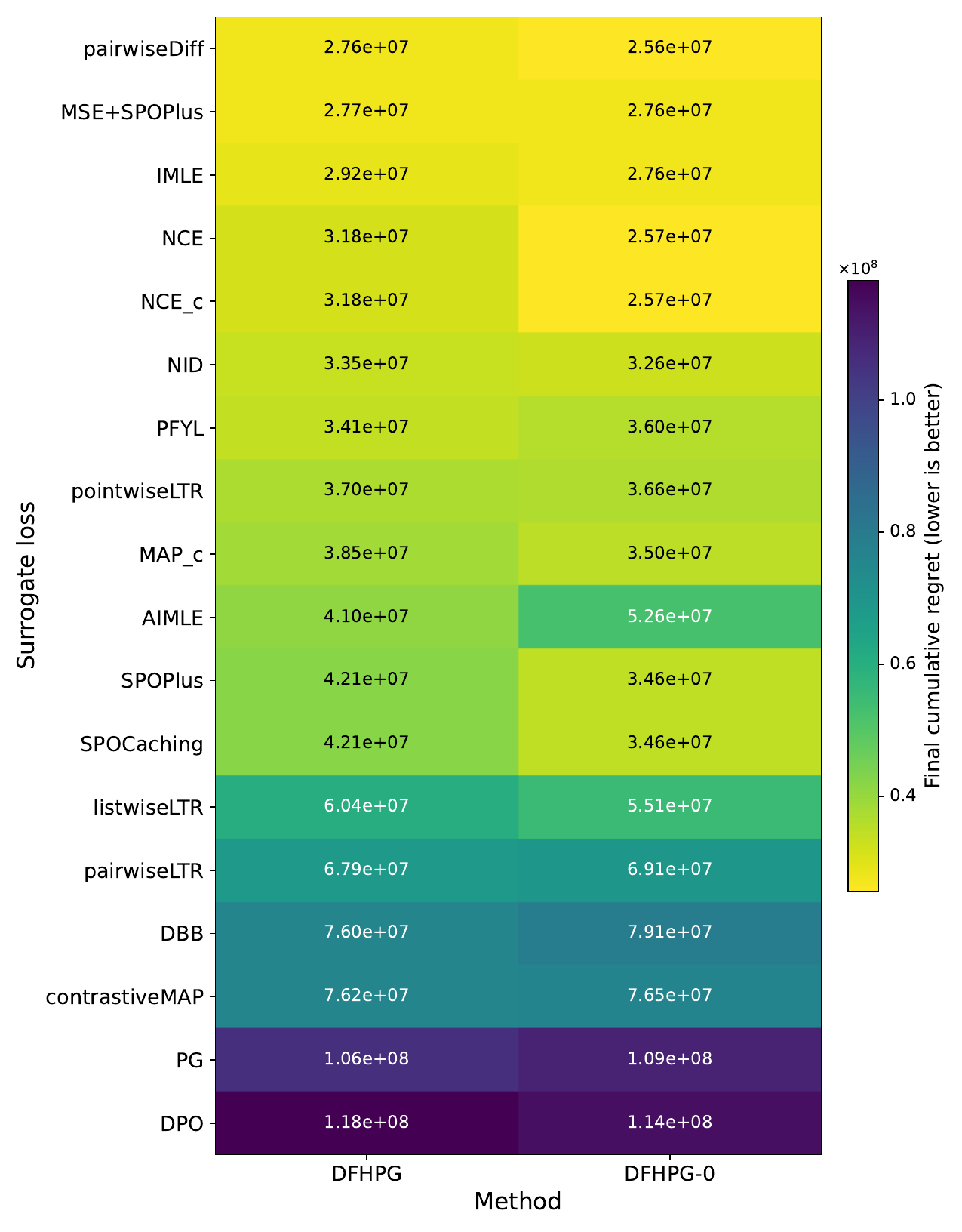}
\caption{Energy scheduling}
\end{subfigure}
\caption{Surrogate-loss comparison across the four benchmark problems. Each heatmap reports mean final cumulative regret for \textsc{DFHPG} and \textsc{DFHPG-0} when changing only the surrogate loss used by the decision-focused component; lower values are better.}
\label{fig:surrogate-loss-heatmaps}
\end{figure}

Figure~\ref{fig:surrogate-loss-heatmaps} shows that no single surrogate dominates across benchmarks, motivating the per-problem tuning protocol. On pricing, contrastive and ranking-style losses (contrastiveMAP, DBB, pairwiseDiff, listwiseLTR) lead in both columns, while NID and MSE+SPOPlus trail. On shortest path, \textsc{DFHPG} is approximately flat across a wide best group (AIMLE, pairwiseDiff, NID, DPO, PFYL, PG), whereas \textsc{DFHPG-0} is more variable and peaks at SPOPlus, SPOCaching, and AIMLE. On top-$k$ selection, NID, pairwiseDiff, and MSE+SPOPlus lead for \textsc{DFHPG}, while \textsc{DFHPG-0} prefers NID and SPO-type losses. On energy scheduling the ranking inverts relative to pricing: pairwiseDiff, MSE+SPOPlus, IMLE, and NCE lead, while DBB and contrastiveMAP, the pricing winners, fall to the bottom half, and PG and DPO collapse by an order of magnitude. Across all four problems, \textsc{DFHPG} is consistently no more sensitive to surrogate choice than \textsc{DFHPG-0}: the hybrid update partially absorbs a poor surrogate through the score function component.

\subsection{Mixing-weight sensitivity}
\label{app:alpha_ablation}

Figure~\ref{fig:app-alpha-ablation} ablates the mixing weight between the score function and plug-in components. We compare the adaptive schedule from Appendix~\ref{app:alpha_schedule} against fixed values of $\alpha$ on all four benchmarks. The adaptive schedule is a strong default but does not strictly dominate the best fixed $\alpha$: on top-$k$ selection, shortest path, and pricing it matches or modestly improves on the strongest fixed setting, validating the early-score / late-plug-in transition; on the energy-scheduling benchmark it trails the smaller fixed $\alpha$ values, consistent with the score function gradient being uninformative in this low-signal real-data regime, where any positive weight on the score component hurts.

\begin{figure}[!ht]
\centering
\begin{subfigure}[t]{0.49\textwidth}
\centering
\includegraphics[width=\textwidth]{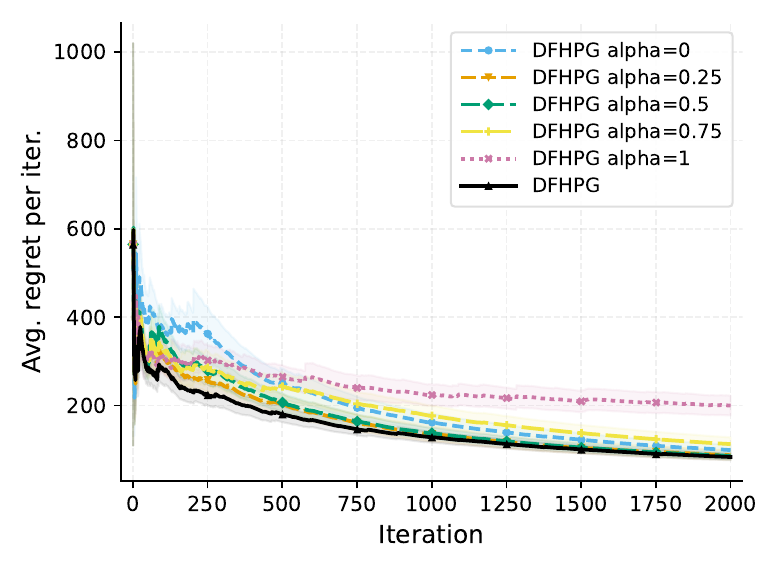}
\caption{Top-$k$ selection}
\end{subfigure}\hfill
\begin{subfigure}[t]{0.49\textwidth}
\centering
\includegraphics[width=\textwidth]{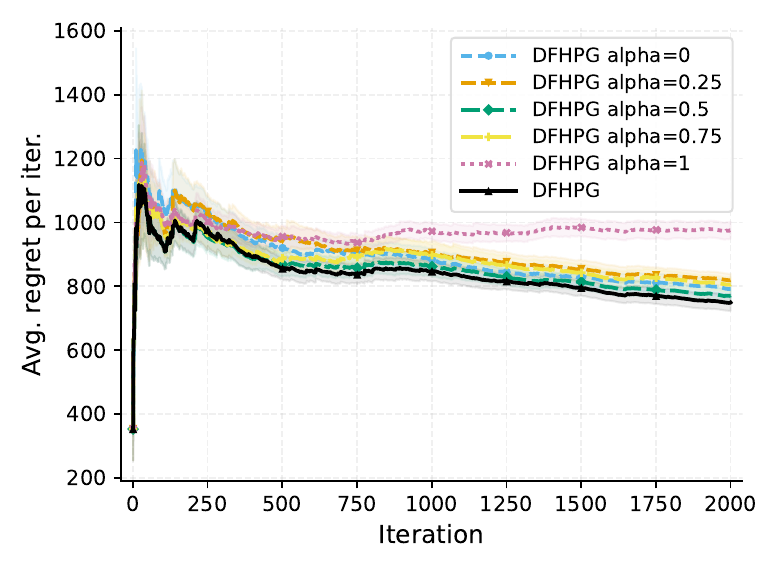}
\caption{Shortest path}
\end{subfigure}

\vspace{0.75em}
\begin{subfigure}[t]{0.49\textwidth}
\centering
\includegraphics[width=\textwidth]{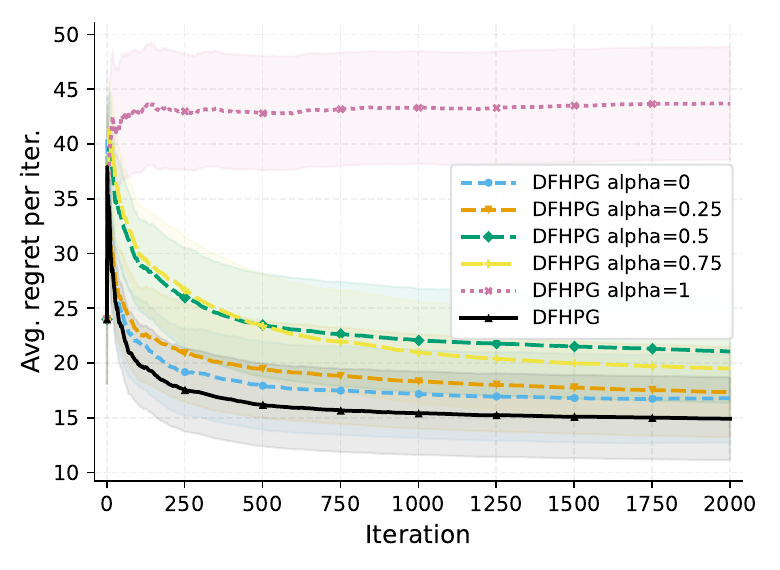}
\caption{Pricing}
\end{subfigure}\hfill
\begin{subfigure}[t]{0.49\textwidth}
\centering
\includegraphics[width=\textwidth]{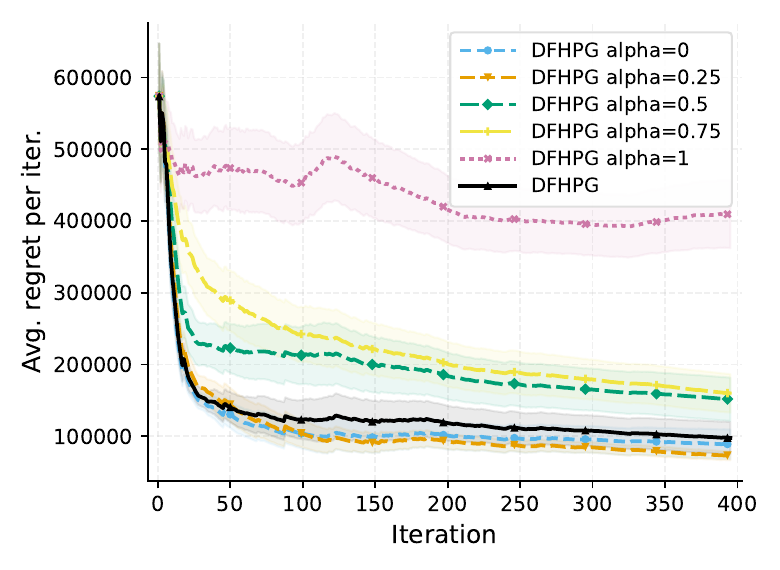}
\caption{Energy scheduling}
\end{subfigure}
\caption{Mixing-weight ablation comparing the adaptive $\alpha_t$ schedule against fixed $\alpha$ on the four benchmarks. Each curve reports average regret per iteration, with shaded $\pm 1$ standard-error bands across replications.}
\label{fig:app-alpha-ablation}
\end{figure}

\subsection{Sensitivity across problem difficulty}
\label{app:problem_difficulty_sensitivity}

Figure~\ref{fig:app-degree-robustness} reports degree-robustness on pricing, shortest path, and top-$k$ selection, where the polynomial degree controls the nonlinearity of the synthetic cost generator (top-$k$ and shortest path) or the demand generator (pricing). Higher degrees act as a difficulty knob: every method's regret rises monotonically with degree, confirming that the problem itself becomes harder to learn under the Gaussian linear cost model used in this comparison. Across degrees, \textsc{DFHPG} and \textsc{DFHPG-0} retain a regret advantage over the contextual-bandit baselines, but the magnitude of the gap is benchmark-specific. On pricing the gap is largest at low degrees (over an order of magnitude at $\deg=2$) and narrows at high degrees, where the Gaussian linear cost model can no longer capture the cost map well. On top-$k$ the gap is more modest at low degrees (around $2\times$ at $\deg=2$) and widens to roughly $4$--$5\times$ at $\deg \geq 4$. Shortest path shows small gaps throughout. The score-only \textsc{DFHPG-1} ablation largely tracks the baselines, indicating that the decision-focused plug-in component is what drives the advantage as the problem hardens.

\begin{figure}[!ht]
\centering
\begin{subfigure}[t]{0.49\textwidth}
\centering
\includegraphics[width=\textwidth]{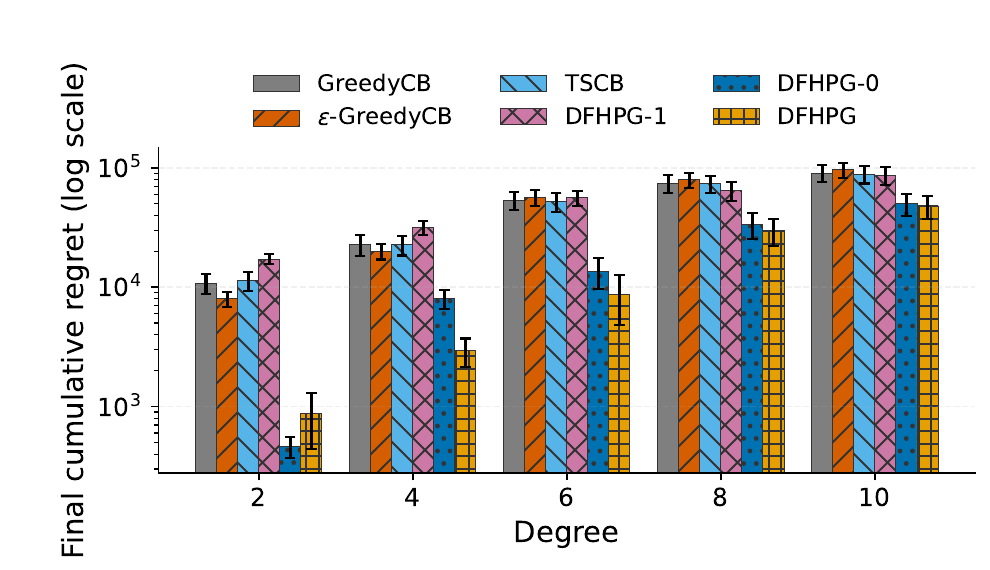}
\caption{Pricing}
\end{subfigure}\hfill
\begin{subfigure}[t]{0.49\textwidth}
\centering
\includegraphics[width=\textwidth]{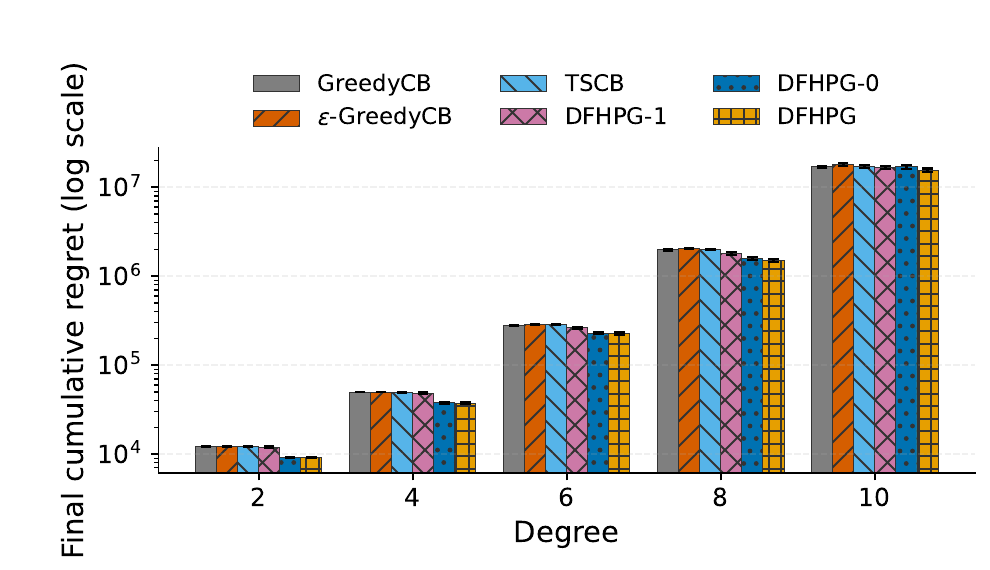}
\caption{Shortest path}
\end{subfigure}

\vspace{0.75em}
\begin{subfigure}[t]{0.49\textwidth}
\centering
\includegraphics[width=\textwidth]{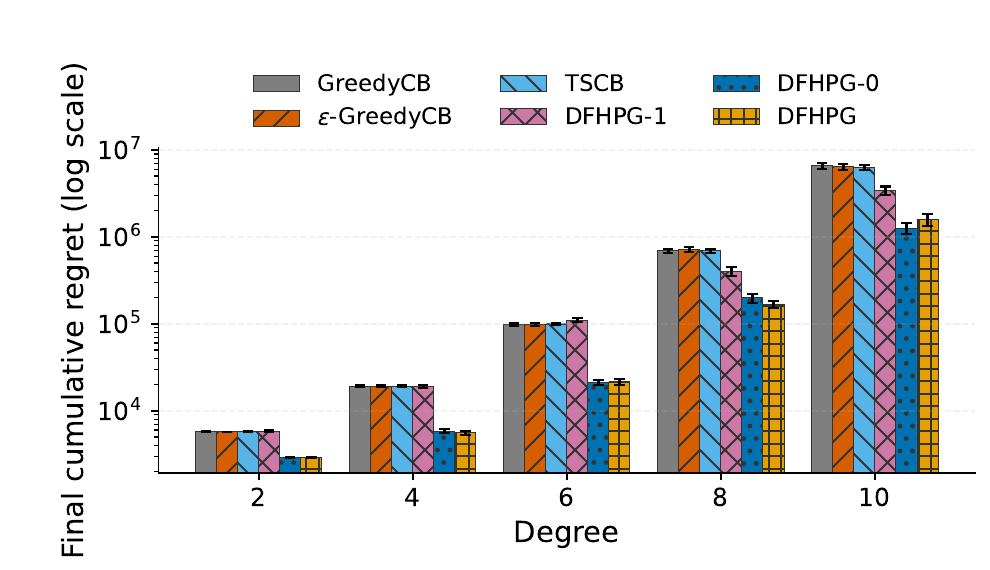}
\caption{Top-$k$ selection}
\end{subfigure}
\caption{Sensitivity across problem difficulty under moderate noise for pricing, shortest path, and top-$k$ selection. Each plot compares final cumulative regret across polynomial degrees for the Gaussian linear cost-model experiments. Lower is better.}
\label{fig:app-degree-robustness}
\end{figure}

\subsection{Feedback mode comparison}
\label{app:feedback_mode_comparison}

Figure~\ref{fig:app-feedback-mode} compares final cumulative regret across three feedback modes (pure bandit, semi-bandit, and full information) for both \textsc{DFHPG} and \textsc{DFHPG-0} on each benchmark. The pattern is consistent across all four problems: regret decreases monotonically as feedback becomes richer, since stronger feedback either accelerates the nuisance estimator or replaces it with full observation. On the energy benchmark the variance contraction is especially large, with pure-bandit error bars roughly an order of magnitude wider than under semi-bandit or full feedback.

Table~\ref{tab:semi-bandit-full} reports final cumulative regret under semi-bandit feedback for the contextual-bandit baselines and \textsc{DFHPG} variants. The CB baselines use the semi-bandit-adapted regression update described at the end of Appendix~\ref{app:baseline_algorithms}; their action-selection rules are unchanged. \textsc{DFHPG-0} is best on every benchmark; the closest CB baseline (\textsc{TSCB} on pricing) still trails by roughly $2\times$.

\begin{table}[!ht]
\centering
\scriptsize
\caption{Final cumulative regret under semi-bandit feedback. The contextual-bandit baselines use the semi-bandit-adapted regression update of Appendix~\ref{app:baseline_algorithms}. Setup matches Table~\ref{tab:main-methods-selected-settings}; the best result in each column is bolded.}
\label{tab:semi-bandit-full}
\resizebox{\textwidth}{!}{%
\begin{tabular}{lcccc}
\toprule
Method & Top-$k$ (deg. 8) & Shortest path (deg. 8) & Pricing (deg. 8) & Energy \\
\midrule
\textsc{GreedyCB}
& $3.01{\times}10^5 \pm 1.61{\times}10^4$
& $9.37{\times}10^5 \pm 2.32{\times}10^4$
& $3.48{\times}10^4 \pm 7.92{\times}10^3$
& $2.66{\times}10^7 \pm 1.07{\times}10^5$ \\
$\epsilon$-\textsc{GreedyCB}
& $2.91{\times}10^5 \pm 2.01{\times}10^4$
& $9.87{\times}10^5 \pm 3.29{\times}10^4$
& $4.36{\times}10^4 \pm 7.14{\times}10^3$
& $4.16{\times}10^7 \pm 4.83{\times}10^5$ \\
\textsc{TSCB}
& $2.87{\times}10^5 \pm 1.86{\times}10^4$
& $9.67{\times}10^5 \pm 3.31{\times}10^4$
& $1.47{\times}10^4 \pm 5.99{\times}10^3$
& $2.66{\times}10^7 \pm 1.08{\times}10^5$ \\
\midrule
\textsc{DFHPG}
& $7.10{\times}10^4 \pm 8.52{\times}10^3$
& $6.83{\times}10^5 \pm 3.11{\times}10^4$
& $7.54{\times}10^3 \pm 5.86{\times}10^2$
& $2.35{\times}10^7 \pm 4.21{\times}10^5$ \\
\textsc{DFHPG-0}
& $\boldsymbol{6.65{\times}10^4 \pm 5.30{\times}10^3}$
& $\boldsymbol{5.31{\times}10^5 \pm 2.28{\times}10^4}$
& $\boldsymbol{7.30{\times}10^3 \pm 6.46{\times}10^2}$
& $\boldsymbol{2.33{\times}10^7 \pm 3.78{\times}10^5}$ \\
\bottomrule
\end{tabular}%
}
\end{table}

\begin{figure}[!ht]
\centering
\begin{subfigure}[t]{0.49\textwidth}
\centering
\includegraphics[width=\textwidth]{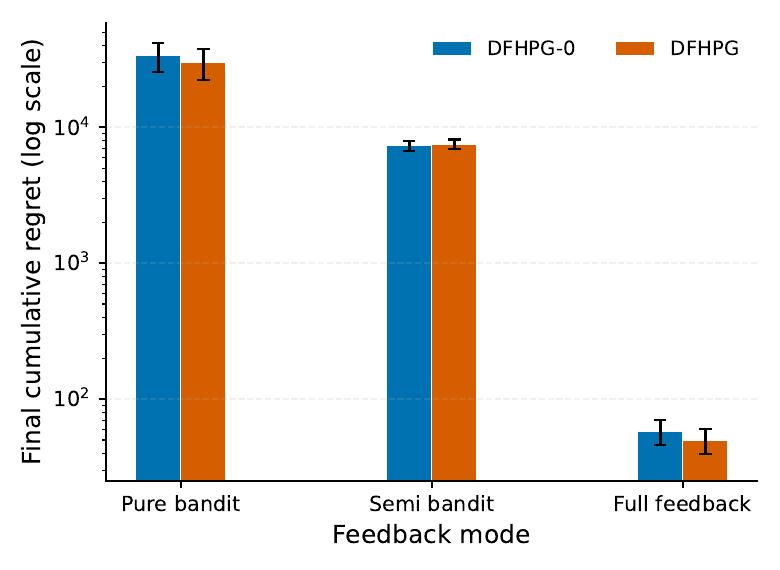}
\caption{Pricing}
\end{subfigure}\hfill
\begin{subfigure}[t]{0.49\textwidth}
\centering
\includegraphics[width=\textwidth]{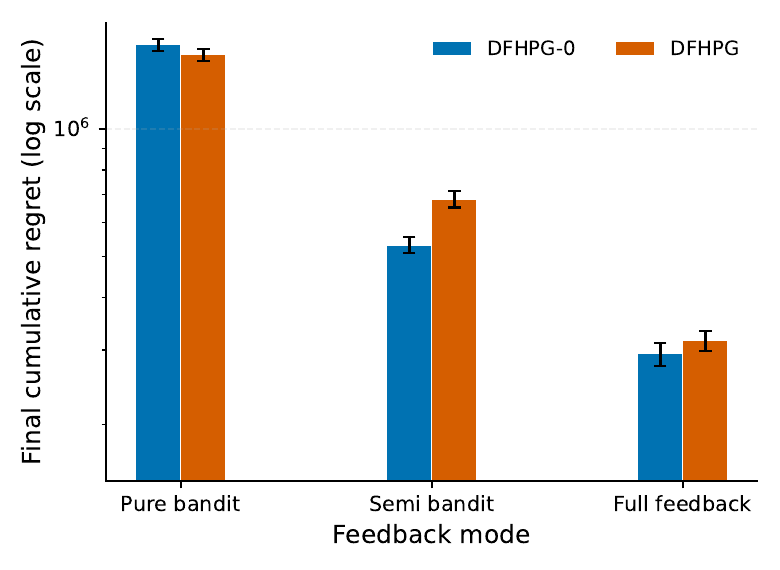}
\caption{Shortest path}
\end{subfigure}

\vspace{0.75em}
\begin{subfigure}[t]{0.49\textwidth}
\centering
\includegraphics[width=\textwidth]{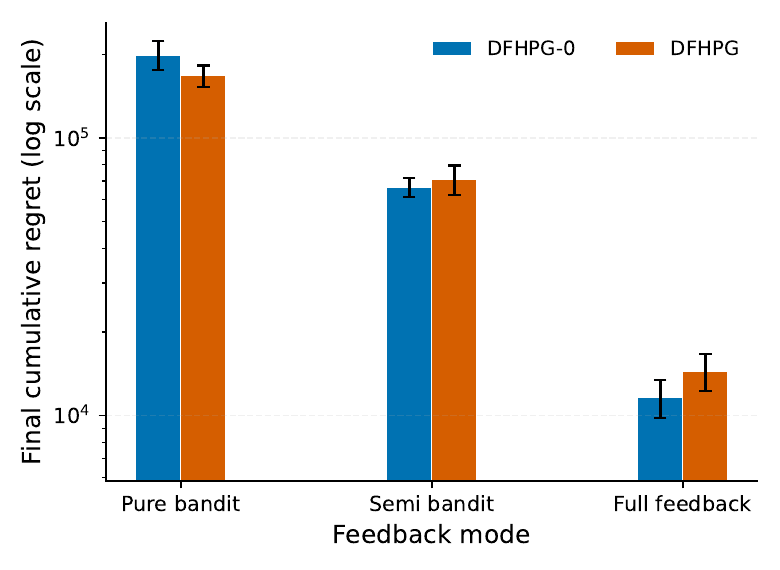}
\caption{Top-$k$ selection}
\end{subfigure}\hfill
\begin{subfigure}[t]{0.49\textwidth}
\centering
\includegraphics[width=\textwidth]{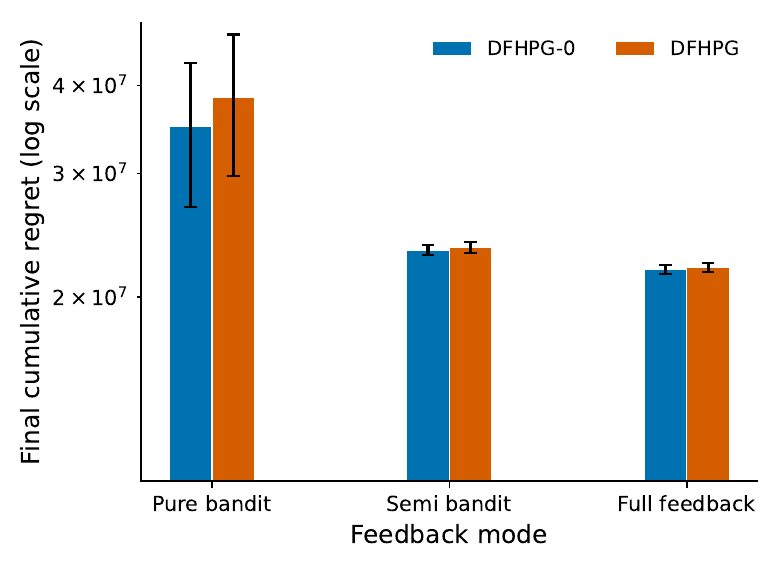}
\caption{Energy scheduling}
\end{subfigure}
\caption{Feedback mode comparison across the four benchmarks. Each plot compares final cumulative regret across feedback modes (bandit, semi-bandit, full information) on a log-scaled regret axis. Synthetic panels use the Gaussian linear cost-model experiments. Lower is better.}
\label{fig:app-feedback-mode}
\end{figure}

\FloatBarrier

\section{Hyperparameter selection}
\label{app:hyperparameter_selection}

\subsection{Tuning protocol}
\label{app:tuning_protocol}

Hyperparameters were selected by random search per (benchmark, distributional cost model) on dedicated validation runs that share no seeds with the reported paper runs. Validation used 10 seeds for top-$k$, shortest path, and pricing, and 5 seeds for energy; the paper results use 30 seeds. Validation horizons were $T=2{,}000$ for the point-model sweep, $T=5{,}000$ for a two-stage generative screen, and $T=394$ on the energy \texttt{load3} split (validation fraction $0.5$); paper horizons are $T=2{,}000$ for point models, $T=15{,}000$ for the generative comparison, and the held-out experiment split for energy. Candidates were ranked by mean final cumulative regret on the validation seeds. Full search spaces and per-benchmark sweep specifications are available in the supplementary code.

\subsection{Selected configurations}
\label{app:selected_hyperparameters}

Across (benchmark, distributional cost model) settings, the selected configurations share the following defaults:
\begin{itemize}
\item Policy and nuisance learning rates on the same shifted-inverse schedule, $\eta^t = \eta^0 / (1 + t/o)$ with offset $o = 100$. The reported $\eta_\theta^0$ and $\eta_\phi^0$ are the initial values.
\item Adaptive mixing-weight schedule (Appendix~\ref{app:alpha_schedule}): EMA decay $0.98$ on the residual ratio, smoothing factor $0.05$ on $\alpha_t$ updates, fixed-warmup gate of $T_{\mathrm{warm}} = 100$ iterations.
\item Both the score function and the plug-in (surrogate) gradients are $L_2$-normalized to unit norm separately before the convex mix $\alpha\,g_t^{\mathrm{score}} + (1-\alpha)\,g_t^{\mathrm{plug\text{-}in}}$, so that $\alpha$ controls the angular interpolation direction without being warped by the two sides' raw magnitudes.
\end{itemize}
Surrogate names follow Appendix~\ref{app:surrogate_choices_results}; baselines are moving-average (\textsc{ma}) and nuisance-induced (\textsc{ni}) from Appendix~\ref{app:advantage_choices}.

Table~\ref{tab:app-hp-point} reports the point-model configurations across all four benchmarks. Table~\ref{tab:app-hp-gen} reports the generative-model configurations across the same four benchmarks. Synthetic benchmarks use the fixed-warmup variant of the adaptive schedule with the default 100-iteration warmup; the energy benchmark uses the exponential-gate variant with time scale $\tau_{\mathrm{sched}}$ (column "$\tau_{\mathrm{sched}}$"; "--" for the synthetic rows). Synthetic point-model entries use $\deg=8$; synthetic generative entries use $\deg=8$ with $T=15{,}000$.

\begin{table}[!ht]
\centering
\small
\caption{Point-model configurations across the four benchmarks. Columns: initial policy LR $\eta_\theta^0$, initial nuisance LR $\eta_\phi^0$, plug-in surrogate, exploration scale $\sigma_{\mathrm{G}}$, mixing bounds $\alpha_{\max}, \alpha_{\min}$, exponential-gate time scale $\tau_{\mathrm{sched}}$ (energy only), baseline.}
\label{tab:app-hp-point}
\resizebox{\textwidth}{!}{%
\begin{tabular}{llcccccccc}
\toprule
Benchmark & Class & $\eta_\theta^0$ & $\eta_\phi^0$ & Surrogate & $\sigma_{\mathrm{G}}$ & $\alpha_{\max}$ & $\alpha_{\min}$ & $\tau_{\mathrm{sched}}$ & Baseline \\
\midrule
Top-$k$       & Gaussian linear & 6.8e-2 & 5.3e-2 & pairwiseDiff & 8.6e-1 & 0.3 & 0.02 & --  & \textsc{ma} \\
Top-$k$       & Gaussian NN     & 7.5e-2 & 1.1e-2 & IMLE         & 2.6e-1 & 0.1 & 0.05 & --  & \textsc{ma} \\
Shortest path & Gaussian linear & 3.0e-2 & 1.5e-2 & pairwiseDiff & 3.9e-1 & 0.5 & 0.05 & --  & \textsc{ni} \\
Shortest path & Gaussian NN     & 3.0e-2 & 1.0e-2 & pairwiseDiff & 5.3e-1 & 0.3 & 0.02 & --  & \textsc{ni} \\
Pricing       & Gaussian linear & 9.9e-3 & 4.9e-3 & DBB          & 1.1e-4 & 0.2 & 0.01 & --  & \textsc{ni} \\
Pricing       & Gaussian NN     & 6.4e-3 & 4.5e-3 & SPO+         & 4.8e-6 & 0.2 & 0.01 & --  & \textsc{ni} \\
Energy        & Gaussian linear & 5.0e-3 & 2.5e-2 & MAP-C        & 1.9e-3 & 0.1 & 0.0  & 100 & \textsc{ni} \\
Energy        & Gaussian NN     & 1.0e-3 & 1.0e-3 & DBB          & 5.0e-3 & 0.8 & 0.2  & 100 & \textsc{ma} \\
\bottomrule
\end{tabular}%
}
\end{table}

\begin{table}[!ht]
\centering
\footnotesize
\caption{Generative-model configurations across the four benchmarks. The auxiliary plug-in path uses the per-scenario gradient (Appendix~\ref{app:generative_param}) with surrogate as listed, $K$ scenarios per iteration, surrogate-loss weight $\kappa_{\mathrm{dfl}}$, and generative-regularizer weight $\lambda_{\mathrm{reg}}$. Energy CNF uses two coupling layers with hidden width $128$; energy diffusion uses 20 training and 8 inference timesteps with ELBO weighting.}
\label{tab:app-hp-gen}
\resizebox{\textwidth}{!}{%
\begin{tabular}{llcccccccccc}
\toprule
Benchmark & Class & $\eta_\theta^0$ & $\eta_\phi^0$ & $\alpha_{\max}$ & $\alpha_{\min}$ & $\tau_{\mathrm{sched}}$ & Baseline & Surrogate & $K$ & $\kappa_{\mathrm{dfl}}$ & $\lambda_{\mathrm{reg}}$ \\
\midrule
Top-$k$       & CNF       & 3.0e-2 & 6.0e-3 & 0.2 & 0.01 & --  & \textsc{ni} & pairwiseDiff & 32  & 3.0  & 1.0  \\
Top-$k$       & Diffusion & 1.3e-2 & 1.3e-2 & 0.2 & 0.01 & --  & \textsc{ma} & pairwiseLTR  & 200 & 10.0 & 1.0  \\
Shortest path & CNF       & 9.0e-3 & 6.8e-3 & 0.5 & 0.02 & --  & \textsc{ma} & SPO+         & 128 & 1.0  & 0.03 \\
Shortest path & Diffusion & 1.0e-2 & 1.0e-2 & 0.1 & 0.0  & --  & \textsc{ni} & pairwiseDiff & 200 & 1.0  & 0.1  \\
Pricing       & CNF       & 3.0e-2 & 2.3e-2 & 0.2 & 0.01 & --  & \textsc{ni} & SPO+         & 200 & 10.0 & 0.3  \\
Pricing       & Diffusion & 2.5e-3 & 2.5e-3 & 0.0 & 0.0  & --  & \textsc{ma} & SPO+         & 64  & 3.0  & 0.3  \\
Energy        & CNF       & 1.5e-2 & 1.1e-2 & 0.8 & 0.05 & 500 & \textsc{ni} & pairwiseLTR  & 100 & 1.0  & 0.1  \\
Energy        & Diffusion & 3.0e-3 & 3.0e-3 & 0.0 & 0.0  & 100 & \textsc{ni} & pairwiseLTR  & 100 & 1.0  & 0.1  \\
\bottomrule
\end{tabular}%
}
\end{table}

The pricing-diffusion and energy-diffusion entries use $\alpha_{\max} = \alpha_{\min} = 0$ (plug-in only), consistent with Appendix~\ref{app:alpha_ablation}.

\section{Compute resources}
\label{app:compute_resources}

All experiments ran on a shared HPC cluster running Red Hat Enterprise Linux 9.6. Each CPU trial uses one Intel Xeon Gold 6226 core at 2.70~GHz with 2~GB of RAM. GPU trials use a single NVIDIA L40S or RTX 6000.

\end{document}